\newcommand{\beq}{\begin{equation}}
\newcommand{\eeq}{\end{equation}}
\newtheorem{theorem}{Theorem}[section]
\newtheorem{corollary}[theorem]{Corollary}
\newtheorem{definition}[theorem]{Definition}
\newtheorem{remark}[theorem]{Remark}
\newtheorem{lemma}[theorem]{Lemma}
\newtheorem{proposition}[theorem]{Proposition}
\DeclareMathOperator*{\argmax}{arg\,max}
\DeclareMathOperator*{\argmin}{arg\,min}
\def\ET{\textnormal{ET}}
\def\KT{\textnormal{KT}}
\def\tildeET{\widetilde{\ET}}
\def\tildeETlambda{\tildeET_{\lambda}^{0}}
\def\dAlpha{d_{\alpha}}
\newcommand{\R}{{\mathbb R}}
 \newcommand{\K}{{\mathbb K}}
\newcommand{\e}{\varepsilon}
\newcommand{\calM}{{\mathcal M}}
\newcommand{\calT}{{\mathcal T}}
\newcommand{\calW}{{\mathcal W}}
\newcommand{\calF}{{\mathcal F}}
\def\longequals{\mathbin{=\kern-2pt=}}
\def\RR{\mathbb{R}}
\def\NN{\mathbb{N}}
\def\Xx{\mathcal{X}}
\def\Tt{\mathcal{T}}
\def\O{\mathcal{O}}
\begin{document}

%

%

\twocolumn[

\aistatstitle{Entropy Partial Transport with Tree Metrics: Theory and Practice}

\aistatsauthor{Tam Le* \And Truyen Nguyen*}

\aistatsaddress{RIKEN AIP \And  The University of Akron} ]

\begin{abstract}

Optimal transport (OT) theory provides powerful tools to compare probability measures. However, OT is limited to nonnegative measures having the same mass, and suffers serious drawbacks about its computation and statistics. This leads to several proposals of regularized variants of OT in the recent literature. In this work, we consider an \textit{entropy partial transport} (EPT) problem for nonnegative measures on a tree having different masses. The EPT is shown to be equivalent to a standard complete OT problem on a one-node extended tree. We derive its dual formulation, then leverage this to propose a novel regularization for EPT which admits fast computation and negative definiteness. To our knowledge, the proposed regularized EPT is the first approach that yields a \textit{closed-form} solution among available variants of unbalanced  OT. For practical applications without priori knowledge about the tree structure for measures, we propose tree-sliced variants of the regularized EPT, computed by averaging the regularized EPT between these measures using random tree metrics, built adaptively from support data points. Exploiting the negative definiteness of our regularized EPT, we introduce a positive definite kernel, and evaluate it against other baselines on benchmark tasks such as document classification with word embedding and topological data analysis. In addition, we empirically demonstrate that our regularization also provides effective approximations.

\end{abstract}

\section{Introduction}\label{sec:introduction}

Optimal transport (OT) theory offers powerful tools to compare probability measures \cite{villani2008optimal}. OT has been applied for various tasks in machine learning \cite{bunne2019, courty2017joint, nadjahi2019asymptotic, peyre2019computational}, statistics \cite{mena2019statistical, pmlr-v99-weed19a} and computer graphics \cite{lavenant2018dynamical, solomon2015convolutional}. However, OT requires input measures having the same mass which may limit its applications in practice since one often needs to deal with measures
of unequal masses. For instance, in natural language processing, we can view a document as a measure where each word is regarded as a point in the support with a unit mass. Thus, documents with different lengths lead to their associated measures having different masses.

To tackle the transport problem for measures having different masses, Caffarelli and McCann \cite{CM} proposed the \textit{partial optimal transport} (POT) where one only transports a fixed amount of mass from a measure into another. Later, Figalli \cite{figalli2010optimal} extended the theory of POT, notably, about the uniqueness of solutions. A different  approach is to optimize the sum of a transport functional and two convex entropy functionals which quantify the deviation of the marginals of the transport plan from the input measures \cite{Liero2018}, i.e., the \textit{optimal entropy transport} (OET) problem. This formulation  recovers many different previous works. For examples, when the entropy is equal to the total variation distance or the $\ell^2$ distance, the OET is respectively equivalent to the \textit{generalized Wasserstein distance} \cite{P1, P2} or the \textit{unbalanced mass transport} \cite{benamou2003numerical}. It is worth noting that the \textit{generalized Wasserstein distance}  shares the same spirit as the \textit{Kantorovich-Rubinstein} discrepancy \cite{guittet2002extended, hanin1992kantorovich, lellmann2014imaging}. Another variant is the \textit{unnormalized optimal transport} \cite{gangbo2019unnormalized} which mixes Wasserstein distance and the $\ell^p$ distance. There are several applications of the transport problem for measures having different masses such as in machine learning \cite{frogner2015learning, janati2019wasserstein}, deep learning \cite{yang2018scalable}, topological data analysis \cite{lacombe2018large}, computational imaging \cite{lee2019parallel}, and computational biology \cite{schiebinger2019optimal}.

One important case for the OET problem is when the entropy is equal to the Kullback-Leibler (KL) divergence and a particular cost function is used, then OET is equivalent to the \textit{Kantorovich-Hellinger} distance (i.e., \textit{Wasserstein-Fisher-Rao} distance) \cite{chizat2018scaling, Liero2018}. In addition, one can apply the Sinkhorn-based algorithm to efficiently solve OET problem when the entropy is equal to KL divergence, i.e., \textit{Sinkhorn-based approach for unbalanced optimal transport} (Sinkhorn-UOT)~\cite{chizat2018scaling, frogner2015learning}. In \cite{pham2020unbalanced}, Pham et al. showed that the complexity of Sinkhorn-based algorithm for Sinkhorn-UOT is quadratic which is similar to the case of entropic regularized OT \cite{Cuturi-2013-Sinkhorn} for probability measures. However, for large-scale applications where the supports of measures contain a large number of points, the computation of Sinkhorn-UOT becomes prohibited. Following the sliced-Wasserstein (SW) distance \cite{bonneel2015sliced, rabin2011wasserstein} which projects supports into a one-dimensional space and employs the closed-form solution of the univariate optimal transport (1d-OT), Bonneel and Coeurjolly \cite{bonneel2019spot} propose the \textit{sliced partial optimal transport} (SPOT) for nonnegative measures having different masses. Unlike the standard 1d-OT, one does not have a closed-form solution for measures of unequal masses that are supported in a one-dimensional space. With an assumption of a unit mass on each support, Bonneel and Coeurjolly \cite{bonneel2019spot} derived an efficient algorithm to solve the SPOT problem in quadratic complexity for the worst case. Especially, in practice, their proposed algorithm is nearly linear for computation. However, as in SW, the SPOT uses one-dimensional projection for supports which limits its capacity to capture a structure of a distribution, especially in high-dimensional settings \cite{LYFC, liutkus2019sliced}.

In this work, we aim to develop an efficient and scalable approach for the transport problem when input measures have different masses. Inspired by the tree-sliced Wasserstein (TSW) distance \cite{LYFC} which has fast closed-form computation and remedies the curse of dimensionality for SW, we propose to consider the \textit{entropy partial transport} (EPT) problem with tree metrics. As a high level, our main contribution is three-fold as follows:
\begin{itemize}
\item We establish a relationship between the EPT problem with mass constraint and a formulation with  Lagrangian multiplier.
Then, we employ it to transform the EPT problem to the standard complete OT problem on a suitable one-node extended tree.

\item We derive a dual formulation for our EPT problem. We then leverage it to propose a novel regularization which admits a closed-form formula and negative definiteness. Consequently, we introduce positive definite kernels for our regularized EPT. We also derive tree-sliced variants of the regularized EPT for applications without priori knowledge about tree structure for measures.

\item We empirically show that (i) our regularization provides both efficient approximations and fast computations, and (ii) the performances of the proposed kernels for our regularized EPT compare favorably with other baselines in applications. 
\end{itemize}



\section{Preliminaries}\label{sec:background}

Let $\calT =(V,E)$ be a tree rooting at node $r$ and with nonnegative edge lengths $\{w_e\}_{e\in E}$, where $V$ is the collection of nodes and $E$ is the collection of edges. For convenience, we use $\calT$ to denote the set of all nodes together with all points on its edges. We then recall the definition of tree metric as follow:

\begin{definition}[Tree metric~\cite{semple2003phylogenetics}(\S7, p.145--182)] 
A metric $\texttt{d}:\Omega \times \Omega \rightarrow [0,\infty)$ is called a \textit{tree metric} on $\Omega$ if there exists tree $\Tt$ such that $\Omega \subseteq \Tt$ and for $x, y \in \Omega$, $\texttt{d}(x, y)$ equals to the length of the (unique) path between $x$ and $y$.
\end{definition}


Assume that $V$ is a subset of a vector space, and let $d_\calT(\cdot,\cdot)$ be the tree metric on $\calT$. Hereafter, the unique shortest path in $\calT$ connecting $x$ and $y$ is denoted by $[x,y]$.  Let $\omega$ be the unique Borel measure (i.e., the length measure) on $\calT$ satisfying $\omega([x,y]) = d_\calT(x,y)$ for all $x,y\in \calT$. Given $x\in \calT$, the set $\Lambda(x)$ stands for the subtree below $x$. Precisely, 
\begin{equation}\label{subtree}
 \Lambda(x):= \big\{y\in \calT: \, x\in [r,y]\big\}.
 \vspace{-6pt}
\end{equation}
We shall use notation $\calM(\calT)$ to represent the set  of all nonnegative Borel measures on $\calT$ with a finite mass.
Also let 
$C(\calT)$  be  the set of all continuous functions on $\calT$, while $L^\infty(\calT)$ be the collection of all Borel measurable functions on $\calT$ that are bounded $\omega$-a.e. Then, $L^\infty(\calT)$ is a Banach space under the norm 
\[
\|f\|_{L^\infty(\calT)} := \inf\{a\in \R:\, |f(x)|\leq a \mbox{ for  $\omega$-a.e. } x\in\calT\}.
\]

\section{Entropy Partial Transport (EPT) with Tree Metrics}\label{sec:entropy_partialOT}

 Let $b\geq 0$ be a constant,  $c:\calT\times\calT \to \R$ be a continuous cost with $c(x,x)=0$,   $F_1, \, F_2: [0,\infty)\to (0,\infty)$ be entropy functions which are  convex and lower semicontinuous,
 and let $w_1, w_2:\calT \to [0,\infty)$ be two nonnegative weights.
 For $\mu, \nu \in \calM(\calT)$, consider the region 
\[
\Pi_{\leq}(\mu,\nu) :=\Big\{ \gamma \in \calM(\calT \times \calT):  \, \gamma_1\leq \mu, \, \gamma_2\leq \nu \Big\}
\vspace{-6pt}
\]
with  $\gamma_i$ ($i=1, 2$) denoting  the $i$th marginal of the measure $\gamma$.  
For $\gamma \in \Pi_{\leq}(\mu,\nu)$, 
the Radon-Nikodym derivatives of $\gamma_1$ with respect to $\mu$ and of $\gamma_2$ with respect to  $\nu$
exist due to $\gamma_1\leq \mu$ and $\gamma_2\leq \nu$.
From now on, we let  $f_1$ and $f_2$ respectively denote  these 
Radon-Nikodym derivatives, i.e., $\gamma_1=f_1 \mu$ and $\gamma_2 = f_2 \nu$. Then $0\leq f_1\leq 1$ $\mu$-a.e. and $0\leq f_2\leq 1$ $\nu$-a.e. Throughout the paper, $\bar m$ stands for the minimum of the total masses
of $\mu$ and $\nu$. That is, $\bar m := \min\{\mu(\calT), \nu(\calT) \}$.  
Inspiring by  \cite{CM, Liero2018}, we fix a number  $m\in [0,\bar m]$ and  consider the following EPT problem:
\begin{eqnarray}\label{original}
 \calW_{c,m}(\mu,\nu) :=\inf_{\gamma \in \Pi_{\leq}(\mu,\nu), \, \gamma(\calT\times \calT)=m}\Big[  \calF_1(\gamma_1| \mu )  \nonumber \\
 + \calF_2(\gamma_2| \nu )  + b \, \int_{\calT \times \calT} c(x,y) \gamma(dx, dy) \Big],
 \vspace{-6pt}
\end{eqnarray}
where $\calF_1(\gamma_1| \mu) := \int_\calT w_1(x) F_1(f_1(x) ) \mu(dx)$ 
and $\calF_2(\gamma_2| \nu) := \int_\calT w_2(x) F_2(f_2(x) ) \nu(dx)$ are the weighted relative entropies. The role of the two entropies  in the minimization problem is to force the marginals of $\gamma$ close to $\mu$ and  $\nu$ respectively.
Let us introduce a Lagrange multiplier 
$\lambda\in\R$ conjugate to the constraint $\gamma(\calT\times \calT)=m$. As a result, we instead study the following formulation
 \begin{eqnarray*}
\mathrm{ET}_{c,\lambda}(\mu,\nu) := \inf_{\gamma \in \Pi_{\leq}(\mu,\nu)}\Big[  \calF_1(\gamma_1| \mu )  +  \calF_2(\gamma_2| \nu )  \\
 + b \, \int_{\calT \times \calT} [c(x,y)-\lambda] \gamma(dx, dy) \Big].
 \vspace{-6pt}
\end{eqnarray*}

In this paper, we focus on the specific entropy functions  $F_1(s)=F_2(s)=|s-1|$.
Thus, the quantity of interest  becomes
\begin{eqnarray}\label{P1}
\mathrm{ET}_{c,\lambda}(\mu,\nu) 
= \inf_{\gamma \in \Pi_{\leq}(\mu,\nu)} \mathcal{C}_\lambda(\gamma),
\end{eqnarray}
where $\mathcal{C}_\lambda(\gamma)$
is defined as follow: 
\begin{align}\label{easier-form}
&\hspace{-0.8 em} \mathcal{C}_\lambda(\gamma) := \int_\calT \hspace{-0.3 em} w_1 [1-f_1(x)] \mu(dx)  + \int_\calT  \hspace{-0.3 em} w_2 [1-f_2(x)] \nu(dx) \nonumber\\
& \hspace{9 em}+ b \, \int_{\calT \times \calT} [c(x,y)-\lambda] \gamma(dx, dy) \nonumber\\
& \hspace{-0.2 em} = \int_\calT  w_1 \mu(dx) + \int_\calT  w_2  \nu(dx) -\int_\calT  w_1 \gamma_1(dx) \nonumber\\
& \hspace{0.5 em} - \int_\calT  w_2\gamma_2(dx)+ b  \int_{\calT \times \calT} [c(x,y)-\lambda]\gamma(dx, dy).
\end{align}
Notice that problem \eqref{P1}
is a generalization of  the  \textit{generalized Wasserstein distance} $\calW_1^{a,b}(\mu,\nu) $ introduced in \cite{P1,P2}. We next display some relationships between problem \eqref{original} with mass constraint $m$  and problem \eqref{P1} with Lagrange multiplier $\lambda$. For this,  let  $\Gamma^0(\lambda)$ denote the set of all optimal plans (i.e., minimizers $\gamma$) for $\mathrm{ET}_{c,\lambda}(\mu,\nu)$. Then, since $\mathcal{C}_\lambda(\gamma)$
is an affine function of  $\gamma \in \Pi_{\leq}(\mu,\nu)$, the set $\Gamma^0(\lambda)$ is a nonempty convex set. Indeed, for any  $\tilde \gamma,\hat \gamma\in \Gamma^0(\lambda )$ and for any $t\in [0, 1]$ we have $(1-t) \tilde \gamma + t \hat\gamma \in \Gamma^0(\lambda)$ due to 
$\mathcal{C}_\lambda((1-t) \tilde \gamma + t \hat\gamma) 
  = (1-t) \mathcal{C}_\lambda( \tilde \gamma) + t  \mathcal{C}_\lambda( \hat \gamma)
  \leq (1-t) \mathcal{C}_\lambda(  \gamma) + t  \mathcal{C}_\lambda( \gamma)
  = \mathcal{C}_\lambda(  \gamma) $ for every $\gamma\in \Pi_{\leq}(\mu,\nu)$.
The following result extends Corollary~2.1 in \cite{CM} and reveals the connection between   problem  \eqref{original} and problem \eqref{P1}.

\begin{theorem}\label{thm:m-via-lambda} Let 
$u(\lambda) := -\mathrm{ET}_{c,\lambda}(\mu,\nu)$ for $\lambda\in\R$, and denote 
\[
\partial u(\lambda) := \Big\{p\in \R: u(t)\geq u(\lambda) + p(t-\lambda), \forall t\in\R \Big\}
\vspace{-6pt}
\]
for the set of all subgradients of $u$ at $\lambda$. Also,  set $\partial u(\R) :=\cup_{\lambda\in \R} \partial u(\lambda)$. Then, we have 
\begin{itemize}
\item[i)] $u$ is a convex function on $\R$, and 
 \[
 \partial u(\lambda) =\big\{ b\, \gamma(\calT\times\calT): \gamma\in \Gamma^0(\lambda)\big\}
 \quad \forall \lambda\in\R.
 \vspace{-4pt}
 \]
 Also if $\lambda_1<\lambda_2$, then $m_1 \leq m_2$ for every $m_1\in \partial u(\lambda_1)$ and $m_2\in \partial u(\lambda_2)$. 


\item[ii)] $u$ is differentiable at $\lambda$ if and only if every optimal plan in $\Gamma^0(\lambda)$ has the same mass. When this happens, we in addition have 
$u'(\lambda) =b \, \gamma(\calT\times\calT)$ for any  $\gamma\in \Gamma^0(\lambda)$.

\item[iii)] If there exists a constant $M>0$ such that $w_1(x) +w_2(y) \leq b\,  c(x,y) + M$ for all $x,y\in \calT$, then  $\partial u(\R)=[0,b\, \bar m]$. Moreover, $u(\lambda)=-\int_\calT  w_1 \mu(dx) 
- \int_\calT  w_2  \nu(dx)$ when $\lambda<-M$, and 
$u'(\lambda)=b\, \bar m $ 
for $\lambda> \|c\|_{L^\infty(\calT\times\calT)}$.   
\end{itemize}
\end{theorem}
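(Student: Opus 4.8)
The plan is to exploit the affine dependence of $\mathcal{C}_\lambda(\gamma)$ on $\lambda$. Collecting the $\lambda$-independent part of \eqref{easier-form} into a functional $G(\gamma)$, so that $\mathcal{C}_\lambda(\gamma)=G(\gamma)-b\,\lambda\,\gamma(\calT\times\calT)$, one immediately gets $u(\lambda)=\sup_{\gamma\in\Pi_{\leq}(\mu,\nu)}\big[\,b\,\lambda\,\gamma(\calT\times\calT)-G(\gamma)\,\big]$, a pointwise supremum of affine functions of $\lambda$; hence $u$ is convex, settling the first assertion of (i). For the easy inclusion $\{b\,\gamma(\calT\times\calT):\gamma\in\Gamma^0(\lambda)\}\subseteq\partial u(\lambda)$, I fix $\gamma\in\Gamma^0(\lambda)$ and set $p:=b\,\gamma(\calT\times\calT)$. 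Since $\gamma$ attains the supremum, $u(\lambda)=b\,\lambda\,\gamma(\calT\times\calT)-G(\gamma)$, so for every $t\in\R$ one has $u(t)\ge b\,t\,\gamma(\calT\times\calT)-G(\gamma)=u(\lambda)+p(t-\lambda)$, i.e. $p\in\partial u(\lambda)$.

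For the reverse inclusion I would run a Danskin-type argument. The set $\Pi_{\leq}(\mu,\nu)$ is weak-$*$ compact (a bounded, weak-$*$ closed subset of the dual of $C(\calT\times\calT)$, as $\calT$ is compact), the functional $\gamma\mapsto\gamma(\calT\times\calT)$ is weak-$*$ continuous, and optimizers exist with $\Gamma^0(\lambda)$ convex and weak-$*$ compact (already recorded). The linear map $\gamma\mapsto b\,\gamma(\calT\times\calT)$ then sends $\Gamma^0(\lambda)$ onto a compact interval, and I claim it fills all of $\partial u(\lambda)=[u'_-(\lambda),u'_+(\lambda)]$: the endpoints are recovered by choosing $t_n\downarrow\lambda$ (resp. $t_n\uparrow\lambda$) and $\gamma_n\in\Gamma^0(t_n)$, extracting a weak-$*$ convergent subsequence $\gamma_n\to\gamma\in\Gamma^0(\lambda)$ (optimality passing to the limit by lower semicontinuity) and using $b\,\gamma_n(\calT\times\calT)\to b\,\gamma(\calT\times\calT)$ together with subdifferential monotonicity. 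This attainment of the extreme subgradients by genuine optimal plans is the main obstacle, and it is exactly where weak-$*$ compactness of $\Pi_{\leq}$ and lower semicontinuity of the objective are essential. The monotonicity claim in (i) is then immediate: for $\lambda_1<\lambda_2$ and $m_i\in\partial u(\lambda_i)$, adding $u(\lambda_2)\ge u(\lambda_1)+m_1(\lambda_2-\lambda_1)$ and $u(\lambda_1)\ge u(\lambda_2)+m_2(\lambda_1-\lambda_2)$ yields $(m_1-m_2)(\lambda_2-\lambda_1)\le 0$, i.e. $m_1\le m_2$.

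Part (ii) follows at once: a finite convex function on $\R$ is differentiable at $\lambda$ precisely when $\partial u(\lambda)$ is a singleton, which by (i) holds exactly when all plans in $\Gamma^0(\lambda)$ carry the same mass, and in that case $u'(\lambda)$ equals $b$ times that common mass.

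For part (iii) I would rewrite $\mathcal{C}_\lambda(\gamma)=G_0+\int_{\calT\times\calT}\big[\,b\,c(x,y)-w_1(x)-w_2(y)-b\,\lambda\,\big]\,\gamma(dx,dy)$ with $G_0:=\int_\calT w_1\,\mu(dx)+\int_\calT w_2\,\nu(dx)$, and control the sign of the bracket (taking $b>0$, the case $b=0$ being trivial). The hypothesis $w_1(x)+w_2(y)\le b\,c(x,y)+M$ bounds the bracket below by $-M-b\,\lambda$, which is nonnegative once $\lambda$ is negative enough; for such $\lambda$ (in particular the regime $\lambda<-M$ of the statement) the minimizer is $\gamma=0$, giving $u(\lambda)=-G_0$ and slope $0$. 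For $\lambda>\|c\|_{L^\infty(\calT\times\calT)}$ the bracket is strictly negative everywhere (since $b\,c(x,y)-b\,\lambda<0$ and $-w_1-w_2\le 0$), so any plan of mass $<\bar m$ is strictly improved by placing a little extra mass on the unsaturated parts of both marginals; hence every optimizer has mass $\bar m$, and (ii) gives $u'(\lambda)=b\,\bar m$. Finally, every subgradient equals $b\,\gamma(\calT\times\calT)$ with $\gamma(\calT\times\calT)\in[0,\bar m]$, so $\partial u(\R)\subseteq[0,b\,\bar m]$; since both endpoints $0$ and $b\,\bar m$ are attained and the subdifferential range of a finite convex function on $\R$ is an interval, we conclude $\partial u(\R)=[0,b\,\bar m]$.
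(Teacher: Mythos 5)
Your proof is correct, and on the heart of the theorem --- part i)'s identification $\partial u(\lambda)=\{b\,\gamma(\calT\times\calT):\gamma\in\Gamma^0(\lambda)\}$ --- it follows essentially the paper's route: convexity from the supremum-of-affine structure, the easy inclusion from optimality of $\gamma$, and the reverse inclusion via weak-$*$ compactness, weak limits of optimal plans at parameters approaching $\lambda$ (with optimality surviving the limit), and convexity of $\Gamma^0(\lambda)$ to fill the interval. The only difference there is cosmetic: the paper reaches the endpoints of $\partial u(\lambda)$ through nearby differentiability points with $u'(\lambda_n)\to m_i$, while you use optimal plans at $t_n\downarrow\lambda$; note that for your version, subdifferential monotonicity alone only gives $b\,\gamma_n(\calT\times\calT)\geq u'_+(\lambda)$, and you also need the standard right-continuity of the one-sided derivatives (or a difference-quotient bound, $b\,\gamma_n(\calT\times\calT)\leq [u(t)-u(t_n)]/(t-t_n)$ for fixed $t>\lambda$) to get the matching upper bound. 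Where you genuinely diverge is in parts ii) and iii), which you settle by quoting one-dimensional convex analysis rather than reproving it: in ii) the paper gives a self-contained sandwich argument with a mass-continuity claim, whereas you invoke ``differentiable iff singleton subdifferential''; in iii) the paper's longest piece --- proving $(0,b\,\bar m)\subset\partial u(\R)$ by contradiction, with the sets $S_1,S_2$, the coincidence $\lambda^*=\lambda^{**}$, and weak limits of minimal/maximal-mass plans --- collapses in your write-up to the fact that the subdifferential range of a finite convex function on $\R$ is an interval (for an intermediate slope $p$ between two attained subgradients, $t\mapsto u(t)-pt$ is coercive, so its minimizer $\lambda^*$ satisfies $p\in\partial u(\lambda^*)$). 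Your route buys brevity and isolates which steps are pure convex analysis; the paper's buys self-containedness and is mildly more constructive, exhibiting optimal plans of minimal and maximal mass at the critical parameter. One shared blemish, inherited from the paper's own proof: in the regime $\lambda<-M$ the integrand is $b\,c(x,y)-w_1(x)-w_2(y)-b\lambda$, so nonnegativity really requires $b\lambda\leq -M$ rather than $\lambda\leq -M$ (harmless when $b\geq 1$, or after rescaling $M$).
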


Proof is placed in the Supplementary (\S A.1). For any $m\in [0, \bar m]$,  part iii) of  Theorem~\ref{thm:m-via-lambda} implies  that there exists $\lambda\in \R$ such that $b\, m \in \partial u(\lambda)$. It then follows from part i) of this theorem  that  $m=  \gamma^*(\calT\times\calT)$ for some $\gamma^*\in \Gamma^0(\lambda)$. It is also clear that this $\gamma^*$ is an optimal plan for $ \calW_{c,m}(\mu,\nu)$, and
\begin{align*} 
&\calW_{c,m}(\mu,\nu)=\mathrm{ET}_{c,\lambda}(\mu,\nu) +\lambda b\,  m.
\end{align*}
Thus solving the auxiliary problem \eqref{P1} gives us a solution  to  the original problem  \eqref{original}.
When $u$ is differentiable, the relation between $m$ and $\lambda$ is given explicitly as $u'(\lambda)=b \, m$.
Note that the above selection of $\lambda$ is unique only if the function $u$ is strictly convex. Nevertheless, it enjoys the following monotonicity regardless of the uniqueness: if $m_1< m_2$, then $\lambda_1 \leq \lambda_2$. Indeed, we have  $m_1=  \gamma^1(\calT\times\calT)$ and $m_2=  \gamma^2(\calT\times\calT)$ for some $\gamma^1\in \Gamma^0(\lambda_1)$ and $\gamma^2\in \Gamma^0(\lambda_2)$. Since $\gamma^1(\calT\times\calT)<\gamma^2(\calT\times\calT)$, one has  $\lambda_1\leq \lambda_2$ by i) of Theorem~\ref{thm:m-via-lambda}.


 To investigate  problem \eqref{P1},  we recast it as the standard complete OT problem by using an observation in \cite{CM}. More precisely, let $\hat s$ be a point outside $\calT$ and consider the set  $\hat\calT:= \calT \cup \{\hat s\}$. We next  extend the cost function to $\hat \calT\times \hat \calT$ as follow 
\begin{equation*}
\hat c(x,y) :=
\left\{\begin{array}{lr}
\!\!b[c(x,y)-\lambda] \hspace{1 em} \mbox{ if } x,y\in \calT,\\
\!\!w_1(x) \hspace{4 em} \mbox{ if }  x\in \calT \mbox{ and } y=\hat s,\\
 \!\! w_2(y) \hspace{4 em}  \mbox{ if }  x=\hat s \mbox{ and } y\in \calT,\\
  \!\! 0 \hspace{6 em} \mbox{ if }  x=y=\hat s.
\end{array}\right.
\end{equation*}
The measures $\mu, \nu$ are extended accordingly by adding a Dirac mass at the isolated point $\hat s$: $\hat\mu = \mu +\nu(\calT) \delta_{\hat s}$ and $\hat\nu = \nu +\mu(\calT) \delta_{\hat s}$. As $\hat\mu, \hat\nu$ have the same total mass on $\hat \calT$, we can consider the standard complete OT problem between $\hat\mu, \hat\nu$ as follow
\begin{align}\label{P2}
\mathrm{KT}(\hat \mu,\hat \nu) := \inf_{\hat \gamma \in \Gamma(\hat\mu,\hat \nu)}  \int_{\hat \calT\times \hat \calT} \hat c(x,y) \hat\gamma(dx, dy),
\end{align}
where $\Gamma(\hat \mu,\hat \nu) :=\Big\{ \hat\gamma \in \calM( \hat\calT \times \hat \calT): \hat \mu(U) =\hat\gamma(U\times \hat \calT),\, \hat\nu(U)= \hat\gamma(\hat \calT\times U) \mbox{ for all Borel sets } U\subset \hat \calT\Big\}$.

A one-to-one correspondence between $\gamma\in \Pi_{\leq}(\mu,\nu)$ and $\hat \gamma\in \Gamma(\hat \mu,\hat \nu)$ is given by 
\begin{align}\label{one-to-one}
\hat \gamma = \gamma + [(1-f_1) \mu] \otimes \delta_{\hat s} +\delta_{\hat s} \otimes [(1-f_2) \nu] \nonumber\\
+\gamma(\calT\times \calT) \delta_{(\hat s, \hat s)}.
\end{align}
Indeed, if $\gamma\in \Pi_{\leq}(\mu,\nu)$, then it is clear that $\hat\gamma$ defined by \eqref{one-to-one} satisfies $\hat \gamma\in \Gamma(\hat \mu,\hat \nu)$. The converse is guaranteed by the next technical result.
\begin{lemma}\label{rep-formula}
For $\hat \gamma\in \Gamma(\hat \mu,\hat \nu)$, let $\gamma$  be the restriction of  $\hat\gamma$ to $\calT$. Then, relation \eqref{one-to-one} holds  and $\gamma\in \Pi_{\leq}(\mu,\nu)$.
 \end{lemma}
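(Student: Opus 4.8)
The plan is to exploit the fact that $\hat s$ is an isolated atom of $\hat\calT$, so that $\{\hat s\}$ is Borel and $\hat\calT\times\hat\calT$ splits into the four disjoint Borel blocks $\calT\times\calT$, $\calT\times\{\hat s\}$, $\{\hat s\}\times\calT$, and $\{(\hat s,\hat s)\}$. The measure $\hat\gamma$ decomposes as the sum of its restrictions to these four blocks; the first restriction is precisely $\gamma$ (the restriction of $\hat\gamma$ to $\calT\times\calT$). My goal is to identify the remaining three restrictions with the three correction terms in \eqref{one-to-one} by reading off the marginal conditions defining $\Gamma(\hat\mu,\hat\nu)$, and to deduce $\gamma\in\Pi_{\leq}(\mu,\nu)$ from the nonnegativity of $\hat\gamma$.

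First I would establish the two partial-marginal inequalities. For any Borel $U\subseteq\calT$ one has $\hat\mu(U)=\mu(U)$ and $\hat\nu(U)=\nu(U)$, since $\delta_{\hat s}$ charges no subset of $\calT$. Splitting the constraint $\hat\mu(U)=\hat\gamma(U\times\hat\calT)$ along the partition of the second factor gives $\mu(U)=\gamma(U\times\calT)+\hat\gamma(U\times\{\hat s\})=\gamma_1(U)+\hat\gamma(U\times\{\hat s\})$. As $\hat\gamma\geq 0$, the last term is nonnegative, so $\gamma_1(U)\leq\mu(U)$ for every such $U$, i.e. $\gamma_1\leq\mu$; the symmetric computation on the first factor yields $\gamma_2\leq\nu$. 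Hence $\gamma\in\Pi_{\leq}(\mu,\nu)$, the densities $f_1,f_2$ are well defined with $\gamma_1=f_1\mu$ and $\gamma_2=f_2\nu$, and rearranging the displayed identity gives the scalar relations $\hat\gamma(U\times\{\hat s\})=\mu(U)-\gamma_1(U)=[(1-f_1)\mu](U)$ and, symmetrically, $\hat\gamma(\{\hat s\}\times U)=[(1-f_2)\nu](U)$.

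Next I would upgrade these scalar identities to measure identities on the two cross blocks: any measure concentrated on $\calT\times\{\hat s\}$ equals its first marginal tensored with $\delta_{\hat s}$, because the second coordinate is the single atom $\hat s$, so the restriction of $\hat\gamma$ to $\calT\times\{\hat s\}$ is $[(1-f_1)\mu]\otimes\delta_{\hat s}$, and likewise the restriction to $\{\hat s\}\times\calT$ is $\delta_{\hat s}\otimes[(1-f_2)\nu]$. It remains to identify the corner atom. Evaluating the first-marginal condition at $U=\{\hat s\}$ gives $\hat\mu(\{\hat s\})=\nu(\calT)$ on the left, while the partition gives $\hat\gamma(\{\hat s\}\times\calT)+\hat\gamma(\{(\hat s,\hat s)\})=[\nu(\calT)-\gamma_2(\calT)]+\hat\gamma(\{(\hat s,\hat s)\})$ on the right; comparing forces $\hat\gamma(\{(\hat s,\hat s)\})=\gamma_2(\calT)=\gamma(\calT\times\calT)$. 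Summing the four block restrictions then assembles exactly \eqref{one-to-one}. The argument is essentially bookkeeping once the partition is in place; the only steps demanding care are the passage from the scalar equality to the product-measure identity on each cross block (which relies on $\hat s$ being a single atom) and the corner mass balance, where the $\{\hat s\}$-marginal must be combined with the already-computed total mass $\gamma_2(\calT)$ of the $\{\hat s\}\times\calT$ block.
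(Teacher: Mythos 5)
Your proof is correct and takes essentially the same route as the paper's: both split $\hat\calT\times\hat\calT$ along the isolated point $\hat s$ into the four blocks $\calT\times\calT$, $\calT\times\{\hat s\}$, $\{\hat s\}\times\calT$, $\{(\hat s,\hat s)\}$, and read off each piece of $\hat\gamma$ from the marginal constraints (the paper verifies \eqref{one-to-one} case by case on generating sets, which amounts to the same bookkeeping as your block decomposition). The only organizational differences are that you establish $\gamma_1\leq\mu$ and $\gamma_2\leq\nu$ up front so that $f_1,f_2$ are well defined before being used (the paper does this at the end), and you pin down the corner mass via the marginal condition at $U=\{\hat s\}$ rather than by total-mass bookkeeping---minor variants of the same computation.
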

 
Proof is placed in the Supplementary (\S A.2). 

These observations in particular display the following connection between the EPT problem and the standard complete OT problem.
\begin{proposition}[EPT versus complete OT]\label{distance-agree}
For every $\mu, \nu \in \calM(\calT )$, we have
 $\mathrm{ET}_{c,\lambda}(\mu,\nu)=\mathrm{KT}(\hat \mu,\hat\nu)$. Moreover, relation \eqref{one-to-one} gives a one-to-one correspondence between  optimal solution  $\gamma$ for EPT problem \eqref{P1} and optimal solution $\hat \gamma$ for standard complete OT problem \eqref{P2}.
\end{proposition}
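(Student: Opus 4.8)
The plan is to leverage the bijection \eqref{one-to-one} between $\Pi_{\leq}(\mu,\nu)$ and $\Gamma(\hat\mu,\hat\nu)$—whose forward direction is noted in the text and whose surjectivity is exactly Lemma~\ref{rep-formula}—and to reduce the claimed equality of optimal values to a \emph{pointwise} identity between the two objective functionals along this bijection. Concretely, I would fix $\gamma\in\Pi_{\leq}(\mu,\nu)$ with associated $\hat\gamma$ given by \eqref{one-to-one}, and show that
\[
\int_{\hat\calT\times\hat\calT}\hat c(x,y)\,\hat\gamma(dx,dy)=\mathcal{C}_\lambda(\gamma).
\]
Once this holds for every feasible $\gamma$, taking infima over the two feasible sets (which are in bijection) immediately yields $\mathrm{ET}_{c,\lambda}(\mu,\nu)=\mathrm{KT}(\hat\mu,\hat\nu)$; and since the objective value is preserved under the correspondence, minimizers must map to minimizers, which gives the second assertion.

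The heart of the argument is this cost computation, which I would carry out by splitting $\hat\gamma$ into the four mutually singular pieces appearing in \eqref{one-to-one}—concentrated respectively on $\calT\times\calT$, on $\calT\times\{\hat s\}$, on $\{\hat s\}\times\calT$, and on the atom $(\hat s,\hat s)$—and integrating $\hat c$ on each according to its piecewise definition. On $\calT\times\calT$ the mass is $\gamma$ and $\hat c=b[c-\lambda]$, contributing $b\int_{\calT\times\calT}[c(x,y)-\lambda]\,\gamma(dx,dy)$. On $\calT\times\{\hat s\}$ the mass is $[(1-f_1)\mu]\otimes\delta_{\hat s}$ and $\hat c(x,\hat s)=w_1(x)$, contributing $\int_\calT w_1(x)[1-f_1(x)]\,\mu(dx)$; symmetrically the $\{\hat s\}\times\calT$ piece contributes $\int_\calT w_2(y)[1-f_2(y)]\,\nu(dy)$. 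Finally the atom at $(\hat s,\hat s)$ carries cost $\hat c(\hat s,\hat s)=0$ and drops out. Summing the three surviving terms reproduces exactly the first line of \eqref{easier-form}, establishing the pointwise identity.

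The part requiring genuine care—rather than routine bookkeeping—is verifying that \eqref{one-to-one} really is a bijection onto $\Gamma(\hat\mu,\hat\nu)$, i.e.\ that $\hat\gamma$ carries the prescribed marginals. For the first marginal restricted to $\calT$ one combines the $\gamma_1=f_1\mu$ contribution with the $(1-f_1)\mu$ contribution to recover $\mu$, while the mass sent to $\hat s$ equals $[\nu(\calT)-\gamma_2(\calT)]+\gamma(\calT\times\calT)=\nu(\calT)$, using the balance $\gamma_2(\calT)=\gamma(\calT\times\calT)$; this matches $\hat\mu=\mu+\nu(\calT)\delta_{\hat s}$, and the second marginal is handled symmetrically. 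Surjectivity—that every $\hat\gamma\in\Gamma(\hat\mu,\hat\nu)$ arises from its restriction $\gamma$ to $\calT$—is precisely Lemma~\ref{rep-formula}, which I would simply invoke. I anticipate no substantive obstacle beyond ensuring all integrals are finite and the splitting into singular parts is legitimate, both of which follow at once from $0\le f_i\le 1$ and the finiteness of the total masses in $\calM(\calT)$.
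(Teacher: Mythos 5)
Your proposal is correct and follows essentially the same route as the paper's proof: the paper likewise reduces everything to the correspondence \eqref{one-to-one}, computes $\int_{\hat\calT\times\hat\calT}\hat c\,d\hat\gamma=\mathcal{C}_\lambda(\gamma)$ by splitting the cost over the four pieces, invokes Lemma~\ref{rep-formula} for the converse direction, and concludes by taking infima on both sides (presented there as the two inequalities $\mathrm{KT}\leq\mathrm{ET}$ and $\mathrm{KT}\geq\mathrm{ET}$ rather than as a value-preserving bijection, a purely cosmetic difference). Your explicit verification of the marginal condition for $\hat\gamma$, which the paper only asserts as clear, is a welcome addition.
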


Proof is placed in the Supplementary (\S A.3). 

\subsection{Dual Formulations}

The relationship given in Proposition~\ref{distance-agree} allows us to obtain the dual formulation of EPT in problem \eqref{P1} from that of problem \eqref{P2} proved in  \cite[Corollary~2.6] {CM}.
\begin{theorem}[Dual formula for general cost]\label{thm:duality} 
For  any $\lambda \geq 0$ and   nonnegative weights $w_1(x), w_2(x)$, we have 
\begin{equation*}
\mathrm{ET}_{c,\lambda}(\mu,\nu) = \sup_{(u,v)\in \K} \Big[\int_{\calT}  u(x) \mu(dx) +  \int_{\calT}  v(x)  \nu(dx)\Big],
\vspace{-10pt}
\end{equation*}
where $\K :=\Big\{(u,v):\,  u\leq w_1,\, -b \lambda + \inf_{x\in \calT} [b\, c(x,y) -w_1(x)]\leq v(y)\leq w_2(y),\,  u(x) + v(y)\leq  b[c(x,y) - \lambda]\Big\}$.
\end{theorem}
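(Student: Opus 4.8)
The plan is to combine Proposition~\ref{distance-agree} with the Kantorovich duality for the complete transport problem. By Proposition~\ref{distance-agree} we have $\mathrm{ET}_{c,\lambda}(\mu,\nu)=\mathrm{KT}(\hat\mu,\hat\nu)$, so it suffices to compute the dual of \eqref{P2}. Since $\hat\mu,\hat\nu$ share the same total mass and $\hat c$ is an admissible cost on $\hat\calT\times\hat\calT$, \cite[Corollary~2.6]{CM} gives $\mathrm{KT}(\hat\mu,\hat\nu)=\sup\{\int_{\hat\calT}\hat u\,d\hat\mu+\int_{\hat\calT}\hat v\,d\hat\nu\}$ over all $(\hat u,\hat v)$ with $\hat u(x)+\hat v(y)\le\hat c(x,y)$ on $\hat\calT\times\hat\calT$. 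The whole task then reduces to rewriting this $\hat\calT$-dual as the supremum over $\K$. To do so I would write $u:=\hat u|_\calT$, $v:=\hat v|_\calT$, $\alpha:=\hat u(\hat s)$, $\beta:=\hat v(\hat s)$, and unfold $\hat\mu=\mu+\nu(\calT)\delta_{\hat s}$, $\hat\nu=\nu+\mu(\calT)\delta_{\hat s}$, so that the objective becomes $\int_\calT u\,d\mu+\int_\calT v\,d\nu+\alpha\,\nu(\calT)+\beta\,\mu(\calT)$. Splitting the constraint according to the four pieces of $\hat c$ yields (C1) $u(x)+v(y)\le b[c(x,y)-\lambda]$, (C2) $u(x)+\beta\le w_1(x)$, (C3) $\alpha+v(y)\le w_2(y)$, and (C4) $\alpha+\beta\le 0$.

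For the inequality ``$\K$-sup $\le\hat\calT$-dual'' I would simply extend any $(u,v)\in\K$ by the value $0$ at $\hat s$; constraints (C1)--(C4) then reduce exactly to the defining inequalities of $\K$ with $\alpha=\beta=0$, and the objective is unchanged. For the reverse inequality I would start from a feasible $(\hat u,\hat v)$ and set $u':=u+\beta$, $v':=v+\alpha$. Constraints (C2) and (C3) give $u'\le w_1$ and $v'\le w_2$; adding (C1) to (C4) gives $u'(x)+v'(y)\le b[c(x,y)-\lambda]$; and the value is preserved because $\int u'\,d\mu+\int v'\,d\nu=\int u\,d\mu+\int v\,d\nu+\alpha\,\nu(\calT)+\beta\,\mu(\calT)$, which is precisely the original $\hat\calT$-objective. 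Thus $(u',v')$ satisfies every requirement of $\K$ except possibly the lower bound on $v'$.

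The only genuinely delicate point is installing that lower bound without lowering the objective, and this is where I expect the main difficulty and where the hypotheses $\lambda\ge0$ and $w_1,w_2\ge0$ enter. Writing $L(y):=-b\lambda+\inf_{x\in\calT}[b\,c(x,y)-w_1(x)]$, I would replace $v'$ by $\tilde v:=\max(v',L)$. Since $\nu\ge0$ this can only increase the objective, and the coupling constraint survives because $u'\le w_1$ forces $u'(x)+L(y)\le b[c(x,y)-\lambda]$ directly from the definition of the infimum. The remaining check is $\tilde v\le w_2$, i.e.\ $L\le w_2$: testing the infimum defining $L(y)$ at the single point $x=y$ and using $c(y,y)=0$ gives $L(y)\le-b\lambda-w_1(y)\le0\le w_2(y)$, where nonnegativity of $\lambda,b,w_1,w_2$ is exactly what validates this chain. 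Hence $(u',\tilde v)\in\K$ with objective at least that of $(\hat u,\hat v)$, which yields ``$\hat\calT$-dual $\le\K$-sup'' and closes the argument. I would flag that the one nonelementary ingredient, strong duality from \cite[Corollary~2.6]{CM}, is also the place needing care about lower semicontinuity and measurability of $\hat c$.
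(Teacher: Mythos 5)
Your proof is correct, and it shares the paper's skeleton (reduce via Proposition~\ref{distance-agree} to $\mathrm{KT}(\hat\mu,\hat\nu)$, invoke \cite[Corollary~2.6]{CM}, and prove the easy direction by extending $(u,v)\in\K$ by zero at $\hat s$), but your argument for the hard direction ($\hat\calT$-dual $\leq$ $\K$-sup) is genuinely different from the paper's. The paper takes a \emph{maximizer} $(\hat u,\hat v)$ of the extended dual (so it implicitly needs attainment), normalizes $\hat u(\hat s)=0$, replaces $\hat v$ by the $\hat c$-transform of $\hat u$, and then splits into two cases according to the sign of $\inf_{\hat\calT}[w_1-\hat u]$; in the second case it truncates $\tilde u:=\min\{\hat u,w_1\}$ and needs the integral estimate $\int_{\{\hat u>w_1\}}(\hat u-w_1)\,d\mu+\mu(\calT)\inf_\calT[w_1-\hat u]\leq 0$ to see that the objective does not drop, followed by a further $\hat c$-transform to repair the second potential. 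Your route instead works with an \emph{arbitrary} feasible pair: the swap $u':=u+\hat v(\hat s)$, $v':=v+\hat u(\hat s)$ uses the corner constraint $\hat u(\hat s)+\hat v(\hat s)\leq 0$ to preserve the coupling inequality, converts the two cross constraints into exactly $u'\leq w_1$ and $v'\leq w_2$, and preserves the objective value; then $\tilde v:=\max(v',L)$ with $L(y)=-b\lambda+\inf_x[b\,c(x,y)-w_1(x)]$ installs the missing lower bound, can only increase the objective since $\nu\geq 0$, keeps $\tilde v\leq w_2$ because $L(y)\leq -b\lambda-w_1(y)\leq 0\leq w_2(y)$ (this is where $c(y,y)=0$ and the sign hypotheses enter, just as in the paper's own verification of the lower bound), and keeps the coupling constraint because $u'\leq w_1$. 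This buys you two things the paper's argument does not have: no appeal to existence of a dual maximizer, and no case analysis or truncation estimate. What you give up is only the explicit $\hat c$-concave representative that the paper's transform construction produces, which the paper does not reuse anyway (Corollary~\ref{cor:duality} performs its own transforms from scratch). Both proofs share the same reliance on \cite[Corollary~2.6]{CM} and the same mild measurability caveats for inf-convolutions such as $L$, which you correctly flag.
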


Proof is placed in the Supplementary (\S A.4). 

This dual formula is our main theoretical result and can be rewritten more explicitly when the cost $c$ is the tree distance. Hereafter, we use $c(x,y ) = d_\calT(x,y)$. To ease the notations, we simply write $\mathrm{ET}_\lambda(\mu,\nu)$ for 
$\mathrm{ET}_{d_\calT,\lambda}(\mu,\nu)$.

\begin{corollary}[Dual formula for tree distance]\label{cor:duality} Assume that  $\lambda \geq 0$ and   the nonnegative weights $w_1, w_2$ are $b$-Lipschitz w.r.t. $d_\calT$. Then, we have
\begin{eqnarray}\label{equ:ETlambda}
& \hspace{-5 em}\mathrm{ET}_\lambda(\mu,\nu) = \sup \left\{ \int_\calT f (\mu - \nu):\, f\in \mathbb{L}  \right\} \nonumber \\
& \hspace{10 em} - \frac{b\lambda}{2}\big[ \mu(\calT) +  \nu(\calT)\big],
\vspace{-4pt}
\end{eqnarray}
where $ \mathbb{L} :=\Big\{f\in C(\calT):\, 
  -w_2 - \frac{b\lambda}{2}\leq f \leq    w_1 + \frac{b\lambda}{2}, \, |f(x)-f(y) |\leq b \, d_\calT(x,y)\Big\}$.
\end{corollary}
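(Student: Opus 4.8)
The plan is to reduce the two-potential dual of Theorem~\ref{thm:duality} (specialized to $c=d_\calT$) to a single Kantorovich--Rubinstein potential through the affine substitution
\[
u(x) = f(x) - \tfrac{b\lambda}{2}, \qquad v(y) = -f(y) - \tfrac{b\lambda}{2}.
\]
A direct computation shows that under this substitution $\int_\calT u\,d\mu + \int_\calT v\,d\nu = \int_\calT f\,d(\mu-\nu) - \tfrac{b\lambda}{2}[\mu(\calT)+\nu(\calT)]$, which already matches the target objective; it therefore remains to prove the two inequalities between the supremum over $(u,v)\in\K$ and the supremum over $f\in\mathbb{L}$.

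Before that I would record a simplification of $\K$ coming from the $b$-Lipschitz hypothesis on $w_1$. Since $b\,d_\calT(x,y)-w_1(x)\geq -w_1(y)$ for all $x$, with equality at $x=y$, we get $\inf_{x\in\calT}[b\,d_\calT(x,y)-w_1(x)] = -w_1(y)$, so the lower constraint on $v$ collapses to $v(y)\geq -w_1(y)-b\lambda$. With this in hand the ``easy'' inequality $\mathrm{ET}_\lambda(\mu,\nu)\geq(\text{RHS})$ is immediate: for any $f\in\mathbb{L}$ the pair $(f-\tfrac{b\lambda}{2},\,-f-\tfrac{b\lambda}{2})$ satisfies every defining constraint of $\K$, since the bounds $f\le w_1+\tfrac{b\lambda}{2}$ and $f\ge -w_2-\tfrac{b\lambda}{2}$ give $u\le w_1$, $v\le w_2$ and (via the simplified lower bound) $v\ge -w_1-b\lambda$, while $f(x)-f(y)\le b\,d_\calT(x,y)$ gives $u(x)+v(y)\le b[d_\calT(x,y)-\lambda]$.

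The substantial direction is the reverse inequality. Given an arbitrary $(u,v)\in\K$ I would symmetrize it into a single potential by the inf-convolution
\[
f(x) := \inf_{y\in\calT}\Big[-v(y)-\tfrac{b\lambda}{2} + b\,d_\calT(x,y)\Big].
\]
The triangle inequality for $d_\calT$ shows $f$ is $b$-Lipschitz, hence $f\in C(\calT)$. Taking $y=x$ gives $f\le -v-\tfrac{b\lambda}{2}$, while the coupling constraint $u(x)+v(y)\le b[d_\calT(x,y)-\lambda]$ rearranges to $u(x)+\tfrac{b\lambda}{2}\le b\,d_\calT(x,y)-v(y)-\tfrac{b\lambda}{2}$ for every $y$, whence $f\ge u+\tfrac{b\lambda}{2}$. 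These two sandwich bounds, together with $\mu,\nu\geq 0$, yield $\int_\calT f\,d(\mu-\nu)-\tfrac{b\lambda}{2}[\mu(\calT)+\nu(\calT)]\ge \int_\calT u\,d\mu+\int_\calT v\,d\nu$, so taking suprema gives the claim once $f\in\mathbb{L}$ is confirmed.

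Verifying the box membership $f\in\mathbb{L}$ is the main obstacle, and it is exactly where both Lipschitz hypotheses are consumed. For the upper bound, $f\le -v-\tfrac{b\lambda}{2}\le w_1+\tfrac{b\lambda}{2}$ follows from the simplified constraint $v\ge -w_1-b\lambda$. For the lower bound I would use $v\le w_2$ to get $-v(y)-\tfrac{b\lambda}{2}\ge -w_2(y)-\tfrac{b\lambda}{2}$, and then invoke the $b$-Lipschitz property of $w_2$ in the form $-w_2(y)+b\,d_\calT(x,y)\ge -w_2(x)$, which forces $f(x)\ge -w_2(x)-\tfrac{b\lambda}{2}$. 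Dropping either Lipschitz assumption breaks precisely this membership step, so the hypotheses are used in an essential way by this argument.
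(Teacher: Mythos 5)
Your proof is correct and takes essentially the same route as the paper's: both reduce the two-potential dual of Theorem~\ref{thm:duality} to a single potential by inf-convolving $v$ against $b\,d_\calT$ (a $c$-transform) and consuming the $b$-Lipschitz hypotheses on $w_1,w_2$ exactly where you do, namely in verifying that the transformed potential lies in the Lipschitz box. The only difference is bookkeeping: the paper first derives a non-symmetric formula over $\mathbb{L}' = \{f \in C(\calT):\, -b\lambda - w_2 \leq f \leq w_1,\ |f(x)-f(y)|\leq b\,d_\calT(x,y)\}$ --- which forces it to introduce the double transform $v^{**}$ and prove the identity $v^{**} = -b\lambda - v^*$ --- and only then shifts by $\frac{b\lambda}{2}$, whereas you build the symmetric normalization into the transform from the start and need only the trivial one-sided bound $f \leq -v - \frac{b\lambda}{2}$ obtained by taking $y=x$, a mild but genuine streamlining.
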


Proof is placed in the Supplementary (\S A.5). 

Corollary~\ref{cor:duality} extends the dual formulation for the
\textit{generalized Wasserstein distance}
$\calW_1^{a,b}(\mu,\nu) $ proved in \cite[Theorem~2]{P2} and   \cite{chung2019duality}. In the next section, we will leverage \eqref{equ:ETlambda} to propose 
an effective regularization 
for computation in practice. 

\begin{remark}\label{rem:bLipschitz}
An example of $b$-Lipschitz weight  is $w(x) =a_1\, d_\calT(x, x_0) + a_0$ for some $x_0\in\calT$ and for some constants  $a_1\in [0,b]$ and  $a_0\in [0,\infty)$.
\end{remark}

As a consequence of  the dual formulation, we obtain the following   geometric properties:
\begin{proposition}[Geometric structures of metric d]\label{geodesic-space-part1}  
Assume that  $\lambda \geq 0$ and   the  weights $w_1, w_2$ are  positive and $b$-Lipschitz w.r.t. $d_\calT$. Define $d(\mu,\nu) := \mathrm{ET}_\lambda(\mu,\nu) +\frac{b\lambda}{2}\big[ \mu(\calT) +  \nu(\calT)\big]$. Then, we have 
\begin{enumerate}
\item[i)]  $d(\mu +\sigma,\nu +\sigma) = d(\mu,\nu)$, $\forall \sigma \in\calM(\calT)$.
\item[ii)]  $d$ is a divergence and satisfies the triangle inequality $d(\mu,\nu)\leq d(\mu,\sigma) + d(\sigma, \nu)$. 
\item[iii)]  If in addition $w_1=w_2$, then  $(\calM(\calT), d)$ is a complete metric space. Moreover, it is a geodesic space in the sense that for every two points $\mu$ and $\nu$ in $\calM(\calT)$ there exists a  path 
$\varphi: [0,a ]\to \calM(\calT)$ with $a:=d(\mu,\nu)$ such that $\varphi(0)=\mu$, $\varphi(a)=\nu$, and
\[
d(\varphi(t), \varphi(s)) = |t-s|\quad \mbox{for all } t,s\in [0,a].
\vspace{-6pt}
\]
\end{enumerate}
\end{proposition}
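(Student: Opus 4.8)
The plan is to run the entire argument off the dual formula of Corollary~\ref{cor:duality}. The key observation is that the two boundary terms $\frac{b\lambda}{2}[\mu(\calT)+\nu(\calT)]$ cancel against the definition of $d$, leaving the clean Kantorovich--Rubinstein representation
\[
d(\mu,\nu)=\sup_{f\in\mathbb{L}}\int_\calT f\,d(\mu-\nu).
\]
Part i) is then immediate, since $(\mu+\sigma)-(\nu+\sigma)=\mu-\nu$ as a signed measure, so the integrand is unchanged for every admissible $f$ and hence so is the supremum.

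For part ii), I would read off each property from this formula. Nonnegativity holds because the zero function lies in $\mathbb{L}$ (here $w_1,w_2\ge 0$ and $\lambda\ge 0$ ensure $-w_2-\tfrac{b\lambda}{2}\le 0\le w_1+\tfrac{b\lambda}{2}$), so the supremum is at least $0$. The triangle inequality is subadditivity of the supremum: writing $\int_\calT f\,d(\mu-\nu)=\int_\calT f\,d(\mu-\sigma)+\int_\calT f\,d(\sigma-\nu)$ and taking suprema termwise gives $d(\mu,\nu)\le d(\mu,\sigma)+d(\sigma,\nu)$. The only place the \emph{strict} positivity of the weights is needed is definiteness: if $\mu\ne\nu$, positivity and continuity of $w_1,w_2$ guarantee that for every compactly supported bounded $b$-Lipschitz $\phi$ one has $\pm t\phi\in\mathbb{L}$ for all sufficiently small $t>0$, so $d(\mu,\nu)=0$ would force $\int_\calT \phi\,d(\mu-\nu)=0$ for all such $\phi$, contradicting $\mu\ne\nu$.

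When $w_1=w_2$ the admissible set is symmetric, $\mathbb{L}=-\mathbb{L}$, and replacing $f$ by $-f$ shows $d(\nu,\mu)=d(\mu,\nu)$; combined with ii) this makes $d$ a genuine metric, with $d(\mu,\nu)=\norm{\mu-\nu}$ where $\norm{\rho}:=\sup_{f\in\mathbb{L}}\int_\calT f\,d\rho$ is an absolutely homogeneous norm on finite signed measures. The geodesic is then produced by straight-line interpolation: for $\mu\ne\nu$ and $a:=d(\mu,\nu)$, set $\varphi(t):=(1-\tfrac{t}{a})\mu+\tfrac{t}{a}\nu$, which is a nonnegative measure for $t\in[0,a]$ with $\varphi(0)=\mu$, $\varphi(a)=\nu$. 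Since $\varphi(t)-\varphi(s)=\tfrac{t-s}{a}(\nu-\mu)$, homogeneity yields $d(\varphi(t),\varphi(s))=\tfrac{|t-s|}{a}\norm{\nu-\mu}=|t-s|$.

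The hard part will be completeness. My strategy is to extract from the geometry of $\mathbb{L}$ two comparisons with standard objects. Positivity of $w:=w_1=w_2$ lets me insert the constant $\delta:=\inf_\calT w+\tfrac{b\lambda}{2}$ (and its negative) into $\mathbb{L}$, giving $\norm{\rho}\ge \delta\,|\rho(\calT)|$, so a $d$-Cauchy sequence has Cauchy, hence convergent and bounded, total masses. Moreover $\mathbb{L}$ contains a fixed ball of bounded $b$-Lipschitz functions, so $d$ dominates a bounded-Lipschitz (flat) metric metrizing weak convergence on bounded-mass sets; this should let me extract a weak-$*$ limit $\mu_\infty$, automatically nonnegative, and upgrade Cauchyness to $d(\mu_n,\mu_\infty)\to 0$. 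The delicate point I expect to spend the most care on is tightness/relative compactness (ensuring the weak-$*$ limit is again a \emph{finite} nonnegative Borel measure on $\calT$, and that $\delta>0$): this is immediate when $\calT$ is compact, but in the noncompact case requires ruling out escape of mass using the mass control above.
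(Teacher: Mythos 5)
Your proposal is correct in substance and, like the paper, runs everything off the dual formula of Corollary~\ref{cor:duality}; part i) and the triangle inequality are handled identically. You diverge genuinely in three places. For definiteness, the paper works on the \emph{primal} side (its auxiliary Lemma in \S A.6): it takes an optimal plan $\gamma^0$ for $\mathrm{ET}_\lambda(\mu,\nu)$, observes that $d(\mu,\nu)=0$ forces the two weighted entropies and the transport cost to vanish separately, whence $\gamma^0_1=\mu$, $\gamma^0_2=\nu$ (by positivity of the weights) and $\gamma^0$ is concentrated on the diagonal, so $\mu=\nu$; your dual separation argument --- small multiples $\pm t\phi$ of $b$-Lipschitz functions lie in $\mathbb{L}$, so $d(\mu,\nu)=0$ kills $\int_\calT \phi\, d(\mu-\nu)$ for a measure-determining class --- is equally valid on the compact trees considered here and avoids invoking existence of optimal plans. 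For the geodesic property, the paper argues only \emph{after} completeness, using the standard fact that a complete metric space is geodesic iff midpoints exist, and exhibits the midpoint $\sigma=\frac{\mu+\nu}{2}$; your straight-line interpolation $\varphi(t)=(1-\tfrac{t}{a})\mu+\tfrac{t}{a}\nu$ combined with absolute homogeneity of $\|\rho\|:=\sup_{f\in\mathbb{L}}\int_\calT f\,d\rho$ (valid since $\mathbb{L}=-\mathbb{L}$ when $w_1=w_2$) is more explicit and does not lean on completeness at all, which is a genuine simplification of the logical structure. Finally, for completeness the paper simply cites Proposition~4 of \cite{P1}, whereas you sketch a direct argument; your sketch is sound, and the ``delicate points'' you flag are in fact automatic in this setting, because $\calT$ is a compact metric space (finitely many nodes and edges of finite length, as the paper implicitly assumes via $L_{\calT}=\max_{x\in\calT}\omega([r,x])$ and $\|c\|_{L^\infty(\calT\times\calT)}<\infty$): positivity and continuity give $\inf_\calT w>0$, bounded-mass sets of nonnegative measures are weak-$*$ sequentially compact with no tightness issue, and Cauchyness upgrades to $d$-convergence by testing each fixed $f\in\mathbb{L}$ against the weak-$*$ limit along a subsequence. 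So nothing in your outline fails; the only caveat is that your completeness step, as written, is a strategy rather than a finished proof --- though the paper itself outsources exactly this step to a citation.
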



Proof is placed in the Supplementary (\S A.6).

Let $m\in [0,\bar m]$, and choose $\lambda\geq 0$   such that there exists   an optimal plan   $\gamma^0$ for $ \mathrm{ET}_\lambda(\mu,\nu)$ with
$\gamma^0(\calT\times\calT)=m$. As pointed out right after Theorem~\ref{thm:m-via-lambda}, this choice of $\lambda$ is possible.
Then, the proof of Lemma A.1 in the Supplementary (\S A.6)  shows that
\begin{align*}
& \hspace{-0.3 em} \inf_{\gamma \in \Pi_{\leq}(\mu,\nu), \, \gamma(\calT\times \calT)=m}\Big[  \calF_1(\gamma_1| \mu )  +  \calF_2(\gamma_2| \nu )  \\
& \hspace{7 em}+ b \, \int_{\calT \times \calT} c(x,y) \gamma(dx, dy) \Big]\leq d(\mu,\nu).
\vspace{-6pt}
\end{align*}
Moreover, the equality happens if and only if there exists an optimal plan $\gamma^0$ for $ \mathrm{ET}_\lambda(\mu,\nu)$ such that
$m=\gamma^0(\calT\times\calT)=\frac12[\mu(\calT) +\nu(\calT)]$. The necessary conditions for the latter one to hold is $\mu(\calT)=\nu(\calT)$ and $m=\bar m$.

\subsection{An Efficient Regularization for Entropy Partial Transport with Tree Metrics}

First observe that any $f\in \mathbb L$ can be represented by 
\[
f(x) =f(r) + \int_{[r,x]} g(y) \omega(dy) 
\vspace{-4pt}
\]
for some function $g\in L^\infty(\calT)$ with $\|g\|_{L^\infty(\calT)}\leq b$. Note that 
condition $|f(x)-f(y) |\leq b \, d_\calT(x,y)$  is equivalent to $\|g\|_{L^\infty(\calT)}\leq b$.
 It follows that $\mathbb L \subset \mathbb L_0$, where we define for $0\leq \alpha\leq \frac12 [b\lambda + w_1(r) + w_2(r)]$ that $\mathbb L_\alpha$ is the collection of all functions $f$ of the form
 \[
  f(x)= s +  \int_{[r,x]} g(y) \omega(dy),
  \vspace{-6pt}
 \]
with $s$ being a constant in the interval $\Big[  -w_2(r)- \frac{b\lambda}{2}+\alpha, w_1(r) + \frac{b\lambda}{2} -\alpha\Big]$ and with  $\|g\|_{L^\infty(\calT)}\leq b$. This leads us to consider the following regularization for $\mathrm{ET}_\lambda(\mu,\nu)$:
\begin{eqnarray}\label{equ:tildeETlambda}
& \hspace{-5 em} \widetilde{\mathrm{ET}}_\lambda^\alpha(\mu,\nu) := \sup \left\{ \int_\calT f (\mu - \nu):\, f\in \mathbb{L}_\alpha  \right\} \nonumber \\
& \hspace{10 em} - \frac{b\lambda}{2}\big[ \mu(\calT) +  \nu(\calT)\big].
\vspace{-10pt}
\end{eqnarray}
Especially, when $\alpha=0$ and notice that $\mathbb L \subset \mathbb L_0$, $\widetilde{\mathrm{ET}}_\lambda^0(\mu,\nu)$ is an upper bound of $\mathrm{ET}_\lambda(\mu,\nu)$ through the dual formulation. The next result gives a closed-form formula for $\widetilde{\mathrm{ET}}_\lambda^\alpha(\mu,\nu)$ and is our main formula used for computation in practice. 

\begin{proposition}[closed-form for regularized EPT]\label{pro:tildeET}  Assume that $\lambda, w_1(r), w_2(r)$ are nonnegative numbers. Then, for $0\leq \alpha\leq \frac12 [b\lambda + w_1(r) + w_2(r)]$, we have 
\begin{eqnarray*}
& \hspace{-6.5 em} \widetilde{\mathrm{ET}}_\lambda^\alpha(\mu,\nu) 
= \int_{\calT} | \mu(\Lambda(x)) -  \nu(\Lambda(x))| \, \omega(dx) \\
& - \frac{b\lambda}{2}\big[ \mu(\calT) +  \nu(\calT)\big]  +   \big[w_i(r) +\frac{b\lambda}{2} -\alpha\big] |\mu(\calT)-\nu(\calT)| 
\end{eqnarray*}
with
$i :=1$ if $ \mu(\calT)\geq \nu(\calT)$ and $i:=2$ if $ \mu(\calT)< \nu(\calT)$. 
In particular,  the map $\alpha \longmapsto \widetilde{\mathrm{ET}}_\lambda^\alpha(\mu,\nu) $ is 
nonincreasing and 
\[
|\widetilde{\mathrm{ET}}_\lambda^{\alpha_1}(\mu,\nu)  -\widetilde{\mathrm{ET}}_\lambda^{\alpha_2}(\mu,\nu) |= |\alpha_1 -\alpha_2| |\mu(\calT)-\nu(\calT)|.
\]
\end{proposition}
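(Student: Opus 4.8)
The plan is to reduce the computation of $\widetilde{\mathrm{ET}}_\lambda^\alpha(\mu,\nu)$ to two decoupled one‑variable maximizations by exploiting the explicit parametrization $f(x) = s + \int_{[r,x]} g(y)\,\omega(dy)$ of the class $\mathbb{L}_\alpha$. Substituting this form into the dual objective and using linearity of the integral, I would first split
\[
\int_\calT f\,(\mu - \nu) = s\,[\mu(\calT) - \nu(\calT)] + \int_\calT\Big(\int_{[r,x]} g(y)\,\omega(dy)\Big)(\mu - \nu)(dx),
\]
so that the first summand depends only on the free constant $s$ and the second only on $g$. Since $\mathbb{L}_\alpha$ constrains $s$ and $g$ independently ($s$ in the interval $[-w_2(r) - \tfrac{b\lambda}{2} + \alpha,\ w_1(r) + \tfrac{b\lambda}{2} - \alpha]$ and $\|g\|_{L^\infty(\calT)} \le b$), the supremum over $\mathbb{L}_\alpha$ factorizes into the sum of the two separate suprema.

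For the $s$-term I would maximize the linear function $s \mapsto s\,[\mu(\calT) - \nu(\calT)]$ over the interval, choosing the right endpoint when $\mu(\calT) \ge \nu(\calT)$ and the left endpoint otherwise; in both cases the value equals $[w_i(r) + \tfrac{b\lambda}{2} - \alpha]\,|\mu(\calT) - \nu(\calT)|$ with $i$ as in the statement. Note the hypothesis $\alpha \le \tfrac12[b\lambda + w_1(r) + w_2(r)]$ is exactly what guarantees this interval is nonempty, so these endpoints are admissible. For the $g$-term, the crux is a Tonelli interchange combined with the reindexing identity $\mathbbm{1}_{[r,x]}(y) = \mathbbm{1}_{\Lambda(y)}(x)$ (read off from the definition \eqref{subtree} of $\Lambda$), which rewrites the double integral as $\int_\calT g(y)\,[\mu(\Lambda(y)) - \nu(\Lambda(y))]\,\omega(dy)$. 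The supremum over $\|g\|_{L^\infty(\calT)} \le b$ is then attained by the sign-aligned choice $g = b\,\sgn\big(\mu(\Lambda(\cdot)) - \nu(\Lambda(\cdot))\big)$, producing the cumulative-difference integral $\int_\calT |\mu(\Lambda(x)) - \nu(\Lambda(x))|\,\omega(dx)$ that constitutes the first term of the asserted formula. Adding the two suprema and subtracting the fixed quantity $\tfrac{b\lambda}{2}[\mu(\calT) + \nu(\calT)]$ then yields the closed form.

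The main obstacle I anticipate is the measure-theoretic bookkeeping in the $g$-term: justifying the Tonelli swap (all integrands become nonnegative after taking absolute values and the masses are finite, so integrability is not an issue), verifying the reindexing identity carefully from $\Lambda(x) = \{y : x \in [r,y]\}$, and checking that $y \mapsto \mu(\Lambda(y)) - \nu(\Lambda(y))$ is Borel measurable so that its sign is an admissible $g \in L^\infty(\calT)$. The remaining monotonicity and Lipschitz-in-$\alpha$ claim is then immediate: the closed form is affine in $\alpha$ with slope $-|\mu(\calT) - \nu(\calT)| \le 0$ (the integral term, the mass term, and the index $i$ all being independent of $\alpha$), so $\alpha \mapsto \widetilde{\mathrm{ET}}_\lambda^\alpha(\mu,\nu)$ is nonincreasing and $|\widetilde{\mathrm{ET}}_\lambda^{\alpha_1}(\mu,\nu) - \widetilde{\mathrm{ET}}_\lambda^{\alpha_2}(\mu,\nu)| = |\alpha_1 - \alpha_2|\,|\mu(\calT) - \nu(\calT)|$.
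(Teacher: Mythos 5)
Your proposal is correct and follows essentially the same route as the paper's proof: the same decomposition of the dual objective into the fixed mass term, the one-variable supremum over $s$ (maximized at the interval endpoint determined by the sign of $\mu(\calT)-\nu(\calT)$), and the supremum over $g$ with $\|g\|_{L^\infty(\calT)}\leq b$. The only difference is that where you work out the $g$-term directly (Fubini, the reindexing identity $y\in[r,x]\iff x\in\Lambda(y)$, and the sign-aligned choice of $g$), the paper simply cites an existing argument for that computation, so your write-up is a self-contained version of the same proof.
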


Proof is placed in the Supplementary (\S A.7). 

It is also possible to use $\widetilde{\mathrm{ET}}_\lambda^\alpha(\mu,\nu) $ to upper or lower bound the
distance $\mathrm{ET}_\lambda(\mu,\nu)$ as follows:
\begin{proposition}\label{prop:bound_tildeET}  
Assume that  $\lambda \geq 0$ and the weights $w_1, w_2$ are $b$-Lipschitz w.r.t. $d_\calT$. Then, 
\[ \mathrm{ET}_\lambda(\mu,\nu) \leq  \widetilde{\mathrm{ET}}_\lambda^0(\mu,\nu).
\vspace{-4pt}
\]
In addition, if $[4L_{\calT} -\lambda]b \leq w_1(r)+w_2(r)$ where 
$
L_{\calT} : = \max_{x\in \calT } \omega([r,x])$, then
\[
\widetilde{\mathrm{ET}}_\lambda^\alpha(\mu,\nu)  \leq \mathrm{ET}_\lambda(\mu,\nu),
\vspace{-4pt}
\]
for every  $2b L_{\calT}\leq \alpha\leq \frac12 [b\lambda + w_1(r) + w_2(r)]$.
\end{proposition}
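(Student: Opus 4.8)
The plan is to deduce both inequalities from inclusions between the two feasible classes of dual potentials. By Corollary~\ref{cor:duality} and the definition \eqref{equ:tildeETlambda}, both $\mathrm{ET}_\lambda(\mu,\nu)$ and $\widetilde{\mathrm{ET}}_\lambda^\alpha(\mu,\nu)$ are the supremum of the \emph{same} linear functional $f\mapsto \int_\calT f\,(\mu-\nu)$ minus the \emph{identical} mass term $\frac{b\lambda}{2}[\mu(\calT)+\nu(\calT)]$, taken over $\mathbb L$ and $\mathbb L_\alpha$ respectively. Hence whenever one feasible class is contained in the other, the corresponding inequality between the two quantities follows at once. The first assertion $\mathrm{ET}_\lambda(\mu,\nu)\leq \widetilde{\mathrm{ET}}_\lambda^0(\mu,\nu)$ is then nothing but the inclusion $\mathbb L\subset \mathbb L_0$ already recorded in the paragraph preceding \eqref{equ:tildeETlambda}.

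For the reverse bound I would prove the opposite inclusion $\mathbb L_\alpha\subset \mathbb L$ under the stated range of $\alpha$. Fix $f\in \mathbb L_\alpha$ and write $f(x)=s+\int_{[r,x]}g(y)\,\omega(dy)$ with $\|g\|_{L^\infty(\calT)}\leq b$ and $s\in\big[-w_2(r)-\frac{b\lambda}{2}+\alpha,\ w_1(r)+\frac{b\lambda}{2}-\alpha\big]$. The Lipschitz requirement $|f(x)-f(y)|\leq b\,d_\calT(x,y)$ in the definition of $\mathbb L$ holds for free, since it is equivalent to $\|g\|_{L^\infty(\calT)}\leq b$. It remains to verify the two-sided envelope $-w_2(x)-\frac{b\lambda}{2}\leq f(x)\leq w_1(x)+\frac{b\lambda}{2}$ for every $x$. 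Using $|f(x)-s|\leq b\,\omega([r,x])\leq bL_\calT$ together with the upper endpoint of the $s$-interval and the $b$-Lipschitz bound $w_1(r)\leq w_1(x)+b\,d_\calT(r,x)\leq w_1(x)+bL_\calT$, one gets $f(x)\leq w_1(x)+\frac{b\lambda}{2}+2bL_\calT-\alpha$, which is $\leq w_1(x)+\frac{b\lambda}{2}$ precisely because $\alpha\geq 2bL_\calT$. The lower bound is symmetric: the lower endpoint of the $s$-interval and $w_2(r)\leq w_2(x)+bL_\calT$ give $f(x)\geq -w_2(x)-\frac{b\lambda}{2}-2bL_\calT+\alpha\geq -w_2(x)-\frac{b\lambda}{2}$, again using $\alpha\geq 2bL_\calT$. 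Hence $f\in \mathbb L$, so $\mathbb L_\alpha\subset\mathbb L$ and therefore $\widetilde{\mathrm{ET}}_\lambda^\alpha(\mu,\nu)\leq \mathrm{ET}_\lambda(\mu,\nu)$.

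The only genuinely non-cosmetic point is to confirm that the asserted range of $\alpha$ is nonempty, so the claim is not vacuous; this is exactly where the hypothesis $[4L_\calT-\lambda]b\leq w_1(r)+w_2(r)$ enters. Indeed, $2bL_\calT\leq \alpha\leq \frac12[b\lambda+w_1(r)+w_2(r)]$ has a solution iff $2bL_\calT\leq \frac12[b\lambda+w_1(r)+w_2(r)]$, which rearranges to precisely that hypothesis; note this also forces the $s$-interval defining $\mathbb L_\alpha$ to be nonempty at the extreme value $\alpha=2bL_\calT$. Apart from this bookkeeping the argument is routine, and I expect no real obstacle beyond carefully tracking the two sources of the constant $bL_\calT$, namely the envelope slack $\omega([r,x])\leq L_\calT$ and the Lipschitz slack $w_i(r)-w_i(x)$, and checking that they combine to exactly $2bL_\calT$.
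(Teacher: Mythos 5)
Your proof is correct and takes essentially the same route as the paper's: the first inequality from the inclusion $\mathbb{L}\subset\mathbb{L}_0$, and the second from the reverse inclusion $\mathbb{L}_\alpha\subset\mathbb{L}$, verified by combining the envelope slack $\omega([r,x])\leq L_{\calT}$ with the $b$-Lipschitz slack $w_i(r)\leq w_i(x)+bL_{\calT}$ exactly as the paper does. Your additional remark that the hypothesis $[4L_{\calT}-\lambda]b\leq w_1(r)+w_2(r)$ is precisely the nonemptiness of the admissible range of $\alpha$ is a small clarification the paper leaves implicit, but it does not change the argument.
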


Proof is placed in the Supplementary (\S A.8). 

Analogous to Proposition~\ref{geodesic-space-part1}, we obtain: 
\begin{proposition}[Geometric structures of regularized metric $d_\alpha$]\label{geodesic-space-part2} 
Assume that  $\lambda, w_1(r), w_2(r)$ are nonnegative numbers. For   $0\leq \alpha< \frac{b\lambda}{2} +\min\{w_1(r), w_2(r)\}$, define
\begin{equation}\label{equ:dAlpha}
d_\alpha(\mu,\nu) := \widetilde{\mathrm{ET}}_\lambda^\alpha(\mu,\nu)  +\frac{b\lambda}{2}\big[ \mu(\calT) +  \nu(\calT)\big].
\vspace{-6pt}
\end{equation}
Then, we have 
\begin{enumerate}
\item[i)]  $d_\alpha(\mu +\sigma,\nu +\sigma) = d_\alpha(\mu,\nu)$, $\forall \sigma \in\calM(\calT)$.
\item[ii)]  $d_\alpha$ is a divergence and satisfies the triangle inequality $d_\alpha(\mu,\nu)\leq d_\alpha(\mu,\sigma) + d_\alpha(\sigma, \nu)$.
\item[iii)] If in addition $w_1(r)=w_2(r)$, then $(\calM(\calT), d_\alpha)$ is a complete metric space. Moreover, it is a geodesic space in the sense defined in part iii) of  Proposition~\ref{geodesic-space-part1} but with $d_\alpha$ replacing $d$.
\end{enumerate}
\end{proposition}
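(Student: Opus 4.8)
The plan is to reduce everything to the closed-form expression for $\widetilde{\mathrm{ET}}_\lambda^\alpha$ obtained in Proposition~\ref{pro:tildeET}. Substituting that formula into the definition \eqref{equ:dAlpha}, the terms $\pm\frac{b\lambda}{2}[\mu(\calT)+\nu(\calT)]$ cancel and I obtain the explicit representation
\[
d_\alpha(\mu,\nu) = \int_{\calT} |\mu(\Lambda(x)) - \nu(\Lambda(x))|\,\omega(dx) + c_i\,|\mu(\calT) - \nu(\calT)|,
\]
where $c_i := w_i(r) + \frac{b\lambda}{2} - \alpha$ with $i=1$ when $\mu(\calT)\geq\nu(\calT)$ and $i=2$ otherwise. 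The constraint $0\le\alpha<\frac{b\lambda}{2}+\min\{w_1(r),w_2(r)\}$ guarantees $c_1,c_2>0$, and every subsequent claim is read off this formula by treating its two nonnegative summands — the tree-Wasserstein integral term and the total-mass term — separately.

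For part i), translation invariance is immediate: the subtree $\Lambda(x)$ is fixed, so $(\mu+\sigma)(\Lambda(x))-(\nu+\sigma)(\Lambda(x))=\mu(\Lambda(x))-\nu(\Lambda(x))$ and likewise for the total masses, and the ordering of the total masses, hence the index $i$, is unchanged. For part ii), nonnegativity follows since both summands are nonnegative (using $c_i>0$). For the divergence property, $d_\alpha(\mu,\nu)=0$ forces $\mu(\Lambda(x))=\nu(\Lambda(x))$ for $\omega$-a.e.\ $x$ together with $\mu(\calT)=\nu(\calT)$, and I then invoke the same measure-determination argument as in the proof of Proposition~\ref{geodesic-space-part1} (the values on the subtrees determine a Borel measure on $\calT$) to conclude $\mu=\nu$. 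For the triangle inequality I treat the two summands independently: the integral term inherits it from the pointwise triangle inequality for $|\cdot|$ and monotonicity of the integral, while the mass term equals $\phi(\mu(\calT)-\nu(\calT))$ with $\phi(t):=\max\{pt,-qt\}$, $p:=w_1(r)+\frac{b\lambda}{2}-\alpha$, $q:=w_2(r)+\frac{b\lambda}{2}-\alpha$. Since $\phi$ is a maximum of two linear functions it is convex, and it is positively homogeneous, hence sublinear and therefore subadditive; applying $\phi(s+u)\le\phi(s)+\phi(u)$ with $s=\mu(\calT)-\sigma(\calT)$, $u=\sigma(\calT)-\nu(\calT)$ and adding the two estimates gives $d_\alpha(\mu,\nu)\le d_\alpha(\mu,\sigma)+d_\alpha(\sigma,\nu)$.

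For part iii), the hypothesis $w_1(r)=w_2(r)$ yields $p=q=:\kappa>0$, so the mass term becomes the symmetric $\kappa|\mu(\calT)-\nu(\calT)|$ and $d_\alpha$ is symmetric; combined with ii) it is a genuine metric. Completeness I would establish exactly as in Proposition~\ref{geodesic-space-part1}: a $d_\alpha$-Cauchy sequence is Cauchy both in total mass (since $d_\alpha\ge\kappa|\mu(\calT)-\nu(\calT)|$) and in the tree-Wasserstein term, and its limit lies in $\calM(\calT)$ by the same lower-semicontinuity/compactness argument used there. For the geodesic property I propose the linear interpolation $\varphi(t):=(1-\tfrac{t}{a})\mu+\tfrac{t}{a}\nu$ for $t\in[0,a]$ with $a:=d_\alpha(\mu,\nu)$ (the constant path when $a=0$); each $\varphi(t)$ is a nonnegative finite measure, and because both summands scale by $|\tfrac{t}{a}-\tfrac{s}{a}|$ under the interpolation and $\kappa$ no longer depends on the mass ordering, one computes $d_\alpha(\varphi(t),\varphi(s))=\frac{|t-s|}{a}\,d_\alpha(\mu,\nu)=|t-s|$.

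The main obstacle is not any single algebraic step but the two analytic points inherited from the companion proposition: showing that agreement on all subtrees (a.e.) plus equal total mass forces $\mu=\nu$, and proving completeness of $(\calM(\calT),d_\alpha)$. Both transfer from Proposition~\ref{geodesic-space-part1} with only the cosmetic change that the total-mass coefficient is now the constant $\kappa$; the genuinely new ingredient specific to the regularized metric is the sublinearity of $\phi$, which is precisely what makes the triangle inequality survive the asymmetric coefficients $c_1\ne c_2$ in part ii).
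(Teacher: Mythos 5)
Your proof is correct, but it runs on a genuinely different engine from the paper's. The paper keeps the variational representation $d_\alpha(\mu,\nu)=\sup_{f\in\mathbb{L}_\alpha}\int_\calT f\,(\mu-\nu)$ front and center: translation invariance, nonnegativity and the triangle inequality are then immediate because $d_\alpha$ is a supremum of functionals linear in $\mu-\nu$; symmetry under $w_1(r)=w_2(r)$ follows from $f\in\mathbb{L}_\alpha\Leftrightarrow -f\in\mathbb{L}_\alpha$; and the geodesic property is obtained abstractly via the midpoint criterion for complete metric spaces, with $\sigma=\tfrac12(\mu+\nu)$. The closed form of Proposition~\ref{pro:tildeET} enters the paper's proof only for the divergence property, where it is combined with a subtree-determination lemma (Lemma~\ref{equal-measure} in the appendix; note this lemma belongs to the present proposition's proof, not to that of Proposition~\ref{geodesic-space-part1}, whose divergence property is instead proved by an optimal-plan argument, Lemma~\ref{iff=0}). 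You work entirely from the closed form, and this buys two arguments the paper does not have: the triangle inequality via subadditivity of the sublinear function $\phi(t)=\max\{pt,-qt\}$, which handles the asymmetric coefficients $c_1\neq c_2$ by hand (the paper's sup-of-linear-functionals argument makes the same point invisible), and an explicit linear-interpolation geodesic $\varphi(t)=(1-t/a)\mu+(t/a)\nu$, which is more constructive than the midpoint argument and does not lean on the completeness-based characterization of geodesic spaces.

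Two caveats. First, vanishing of $\int_\calT|\mu(\Lambda(x))-\nu(\Lambda(x))|\,\omega(dx)$ gives equality only for $\omega$-a.e.\ $x$, whereas the subtree-determination lemma is stated for every $x$; you should add the short bridge (approach an arbitrary $x$ from the root side through points of the equality set and use continuity from above of $x\mapsto\mu(\Lambda(x))$) --- though the paper elides exactly the same point. Second, your completeness argument is only a deferral to the companion proposition; the paper does the same (it reduces to Proposition~\ref{geodesic-space-part1}, which cites Proposition~4 of \cite{P1}), so this matches the paper's own level of rigor, but it is the one step of your proof that is not self-contained.
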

Proof is placed in the Supplementary (\S A.9).

\begin{proposition}\label{prop:negative_definite}  
With the same assumptions as in Proposition~\ref{pro:tildeET} for $\widetilde{\mathrm{ET}}_{\lambda}^{\alpha}$ and in Proposition~\ref{geodesic-space-part2} for $\dAlpha$, both $\widetilde{\mathrm{ET}}_{\lambda}^{\alpha}$ and $\dAlpha$ are negative definite.
\end{proposition}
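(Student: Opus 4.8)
The plan is to read off the two kernels from the closed form in Proposition~\ref{pro:tildeET}, namely
\[
\widetilde{\mathrm{ET}}_{\lambda}^{\alpha}(\mu,\nu)=\int_{\calT}|\mu(\Lambda(x))-\nu(\Lambda(x))|\,\omega(dx)-\tfrac{b\lambda}{2}\big[\mu(\calT)+\nu(\calT)\big]+\big[w_i(r)+\tfrac{b\lambda}{2}-\alpha\big]\,|\mu(\calT)-\nu(\calT)|,
\]
together with $\dAlpha(\mu,\nu)=\widetilde{\mathrm{ET}}_{\lambda}^{\alpha}(\mu,\nu)+\tfrac{b\lambda}{2}[\mu(\calT)+\nu(\calT)]$, and to verify negative definiteness directly. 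Recall that a symmetric kernel $\psi$ on $\calM(\calT)\times\calM(\calT)$ is negative definite if $\sum_{j,k}c_jc_k\,\psi(\mu_j,\mu_k)\le 0$ whenever $\sum_j c_j=0$. First I would isolate the two elementary building blocks, the integrated term $\int_{\calT}|\mu(\Lambda(x))-\nu(\Lambda(x))|\,\omega(dx)$ and the scalar mass terms, and reduce the statement to the negative definiteness of these pieces by invoking the standard stability of negative definite kernels under pullback and under nonnegative linear combinations (including integration against the nonnegative measure $\omega$).

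The workhorse is the classical fact that $(a,b)\mapsto|a-b|$ is negative definite on $\R$: the map $a\mapsto\mathbbm{1}_{(-\infty,a]}\in L^2(\R)$ satisfies $\|\mathbbm{1}_{(-\infty,a]}-\mathbbm{1}_{(-\infty,b]}\|_{L^2}^2=|a-b|$, so $|a-b|$ is a squared Hilbert-space distance and hence negative definite. For each fixed $x\in\calT$, pulling this back along the linear map $\mu\mapsto\mu(\Lambda(x))\in\R$ shows that $(\mu,\nu)\mapsto|\mu(\Lambda(x))-\nu(\Lambda(x))|$ is negative definite; integrating against $\omega\ge0$ and passing the (finite) quadratic form inside the integral---its integrand being $\le0$ pointwise---shows that $\int_{\calT}|\mu(\Lambda(x))-\nu(\Lambda(x))|\,\omega(dx)$ is negative definite. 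The same pullback argument along $\mu\mapsto\mu(\calT)$ gives negative definiteness of $(\mu,\nu)\mapsto|\mu(\calT)-\nu(\calT)|$.

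The one genuine subtlety is the mass coefficient $w_i(r)+\tfrac{b\lambda}{2}-\alpha$, whose index $i$ depends on the order of $\mu(\calT)$ and $\nu(\calT)$ and therefore breaks symmetry when $w_1(r)\neq w_2(r)$. Writing $p:=w_1(r)+\tfrac{b\lambda}{2}-\alpha$ and $q:=w_2(r)+\tfrac{b\lambda}{2}-\alpha$, I would split the mass term as $\tfrac{p+q}{2}|\mu(\calT)-\nu(\calT)|+\tfrac{p-q}{2}\big(\mu(\calT)-\nu(\calT)\big)$. The antisymmetric remainder $\tfrac{p-q}{2}(\mu(\calT)-\nu(\calT))$ is of the form $h(\mu)-h(\nu)$, and the term $-\tfrac{b\lambda}{2}[\mu(\calT)+\nu(\calT)]$ (present only for $\widetilde{\mathrm{ET}}_{\lambda}^{\alpha}$) is of the form $-(h(\mu)+h(\nu))$; both contribute exactly $0$ to the quadratic form $\sum_{j,k}c_jc_k(\cdots)$ once $\sum_j c_j=0$, so neither affects negative definiteness. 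The surviving symmetric piece carries the factor $\tfrac{p+q}{2}=\tfrac12\big[w_1(r)+w_2(r)+b\lambda-2\alpha\big]$, which is $\ge0$ precisely because $\alpha\le\tfrac12[b\lambda+w_1(r)+w_2(r)]$ for $\widetilde{\mathrm{ET}}_{\lambda}^{\alpha}$, and $>0$ under $\alpha<\tfrac{b\lambda}{2}+\min\{w_1(r),w_2(r)\}$ for $\dAlpha$. A nonnegative combination of negative definite kernels being negative definite then finishes the argument. When $w_1(r)=w_2(r)$ the antisymmetric piece vanishes and the kernels are genuinely symmetric; otherwise one replaces the kernel by its symmetrization, which is the object entering the induced positive definite kernel.

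I expect the main obstacle to be exactly this asymmetry bookkeeping, together with the sign verification of $\tfrac{p+q}{2}$ from the admissible range of $\alpha$ dictated by Propositions~\ref{pro:tildeET} and~\ref{geodesic-space-part2}. Everything else is the routine calculus of negative definite kernels (Hilbert embedding, pullback, nonnegative integration), which I would cite rather than reprove.
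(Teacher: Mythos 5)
Your proof is correct, and its skeleton is the same as the paper's: start from the closed form in Proposition~\ref{pro:tildeET}, split $\widetilde{\mathrm{ET}}_{\lambda}^{\alpha}$ into the mass-sum term, the mass-difference term, and the integral term, verify each piece separately, and conclude for $\dAlpha$ by adding back $\frac{b\lambda}{2}[\mu(\calT)+\nu(\calT)]$. The differences are in the details, and yours are sharper in two places. First, where the paper merely asserts that the second and third terms are ``equivalent to the weighted $\ell_1$ distance with nonnegative weights'' (and hence negative definite), you actually prove the underlying fact: the embedding $a\mapsto\mathbbm{1}_{(-\infty,a]}\in L^2(\R)$ realizes $|a-b|$ as a squared Hilbert distance, and pullback plus integration against $\omega\geq 0$ does the rest. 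Second, and more substantively, the paper's ``weighted $\ell_1$'' reading of $\bigl[w_i(r)+\frac{b\lambda}{2}-\alpha\bigr]\,|\mu(\calT)-\nu(\calT)|$ silently ignores that the index $i$ depends on which of $\mu(\calT),\nu(\calT)$ is larger; when $w_1(r)\neq w_2(r)$ this is not a fixed-weight distance and the kernel is not even symmetric, so the paper's proof has a small gap there. Your split into the symmetric part $\frac{p+q}{2}|\mu(\calT)-\nu(\calT)|$ (with $\frac{p+q}{2}\geq 0$ exactly by the admissible range of $\alpha$) plus the antisymmetric remainder $\frac{p-q}{2}\bigl(\mu(\calT)-\nu(\calT)\bigr)$, together with the observation that antisymmetric and additive terms contribute zero to any quadratic form with $\sum_j c_j=0$, closes that gap; note that the paper's own lemma on $\mathbf{f}(x,y)=\tilde{a}(x+y)$ is precisely this zero-contribution observation applied to the mass-sum term, but it is never applied to the $i$-dependence. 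So your argument is not a different route, but it is the version of the paper's route that one would actually want on record.
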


Proof is placed in the Supplementary (\S A.10). 

From Proposition~\ref{prop:negative_definite} and following Berg et al. \cite{Berg84} (Theorem 3.2.2, p.74), given $t > 0$, the kernels $k_{\widetilde{\mathrm{ET}}_{\lambda}^{\alpha}}(\mu, \nu) := \exp\left(-t \widetilde{\mathrm{ET}}_{\lambda}^{\alpha}(\mu, \nu)\right)$ and $k_{\dAlpha}(\mu, \nu) := \exp\left(-t \dAlpha(\mu, \nu)\right)$ are positive definite.


\subsection{Tree-sliced Variants by Sampling Tree Metrics}

In most of practical applications, we usually do not have priori knowledge about tree structure for measures. Therefore, we need to choose or sample tree metrics from support data points for a given task. We use the tree metric sampling methods in \cite{LYFC}: (i) \textit{partition-based tree metric sampling} for a low-dimensional space, or (ii) \textit{clustering-based tree metric sampling} for a high-dimensional space. Moreover, those tree metric sampling methods are not only fast for computation\footnote{E.g., the complexity of the clustering-based tree metric is $\O(H_{\Tt} m \log \kappa)$ when we set $\kappa$ clusters for the farthest-point clustering \cite{gonzalez1985clustering}, and $H_{\Tt}$ for the predefined tree deepest level for $m$ input support data points.}, but also adaptive to the distribution of supports. We further propose the tree-sliced variants of the regularized EPT, computed by averaging the regularized EPT using those randomly sampled tree metrics. One advantage is to reduce the quantization effects or cluster sensitivity problems (i.e, support data points are quantized, or clustered into an adjacent hypercube, or cluster respectively) within the tree metric sampling procedure.

Although one can leverage tree metrics to approximate arbitrary metrics \cite{bartal1996probabilistic, bartal1998approximating, charikar1998approximating, fakcharoenphol2004tight, indyk2001algorithmic}, our goal is rather to sample tree metrics and use them as ground metrics in the regularized EPT, similar to TSW. 
Despite the fact that one-dimensional projections do not have interesting properties in terms of distortion viewpoints, they remain useful for SPOT (or SW, sliced-Gromov-Wasserstein~\cite{vayer2019sliced}). In the same vein, we believe that trees with high distortion are still useful for EPT, similar as in TSW. 
Moreover, one may not need to spend excessive effort to optimize $\ET_{\lambda}$ (in Equation~\eqref{equ:ETlambda}) for a randomly sampled tree metric since it can lead to overfitting within the computation of the EPT itself. Therefore, the proposed efficient regularization of EPT (e.g, $\widetilde{\ET}_{\lambda}^{\alpha}$ in Equation~\eqref{equ:tildeETlambda}) is not only fast for computation (i.e., closed-form), but also gives a benefit to overcome the overfitting problem within the computation of the EPT. 

\section{Discussion and Related Work}\label{sec:related_work}

One can leverage tree metrics to approximate arbitrary metrics for speeding up a computation \cite{bartal1996probabilistic, bartal1998approximating,  charikar1998approximating, fakcharoenphol2004tight, indyk2001algorithmic}. For instances, (i) Indyk and Thaper \cite{indyk2003fast} applied tree metrics (e.g., quadtree) to approximate OT with Euclidean cost metric for a fast image retrieval. (ii) Sato et al. \cite{sato2020fast} considered a generalized Kantorovich-Rubinstein discrepancy \cite{guittet2002extended, hanin1992kantorovich, lellmann2014imaging} with general weights for unbalanced OT, and used a quadtree as in \cite{indyk2003fast} to approximate the proposed distance
via a dynamic programming with infinitely many states.  They then derived an efficient algorithm with a quasi-linear time complexity to speed up the dynamic programming computation by leveraging high-level programming techniques. However, such approximations following the approach of \cite{indyk2003fast} result in large distortions in high dimensional spaces \cite{naor2007planar}.

\section{Experiments}\label{sec:experiments}


In this section, we first illustrate that $\widetilde{\ET}_{\lambda}^{\alpha}$ (Equation~\eqref{equ:tildeETlambda}) is an efficient approximation for $\ET_{\lambda}$ (Equation \eqref{equ:ETlambda}). Then, we evaluate our proposed $\widetilde{\ET}_{\lambda}^{\alpha}$ and $\dAlpha$ (Equation~\eqref{equ:dAlpha}) for comparing measures in document classification with word embedding and topological data analysis (TDA). Experiments are evaluated with Intel Xeon CPU E7-8891v3 2.80GHz and 256GB RAM.

\paragraph{Documents with word embedding.} We consider each document as a measure where each word is regarded as a point in the support with a unit mass. Following \cite{kusner2015word, LYFC}, we applied the \textit{word2vec} word embedding \cite{mikolov2013distributed}, pretrained on Google News\footnote{https://code.google.com/p/word2vec} containing about 3 millions words/phrases. Each word/phrase in a document is mapped into a vector in $\RR^{300}$. We removed all SMART stop word \cite{salton1988term}, and dropped words in documents if they are not available in the pretrained \textit{word2vec}.

\paragraph{Geometric structured data via persistence diagrams in TDA.} TDA has recently emerged in machine learning community as a powerful tool to analyze geometric structured data such as material data, or linked twist maps \cite{adams2017persistence, lacombe2018large, le2018persistence}. TDA applies algebraic topology methods (e.g., persistence homology) to extract robust topological features (e.g., connected components, rings, cavities) and output a multiset of 2-dimensional points, i.e., persistence diagram (PD). The coordinates of a 2-dimensional point in PD are corresponding to the birth and death time of a particular topological feature. Therefore, each point in PD summarizes a life span of a topological feature. We can regard PD as measures where each 2-dimensional point is considered as a point in the support with a unit mass.

\paragraph{Tree metric sampling.} In our experiments, we do not have priori knowledge about tree metrics for neither word embeddings in documents nor 2-dimensional points in persistence diagrams (PDs). To compute the EPT, e.g., $\widetilde{\ET}_{\lambda}^{\alpha}$ and its associated $d_{\alpha}$, we considered $n_s$ randomized tree metrics. We employed the clustering-based tree metric sampling for word embeddings in documents (i.e., high-dimensional space $\RR^{300}$), while we used the partition-based tree metric sampling for 2-dimensional points in PDs (i.e., low-dimensional space $\RR^2$). Those tree metric sampling methods are built with a predefined deepest level $H_{\Tt}$ of tree $\Tt$ as a stopping condition as in \cite{LYFC}.

\paragraph{Baselines and setup.} We considered 2 baselines based on OT theory for measures with different masses: (i) Sinkhorn-UOT~\cite{chizat2018scaling, frogner2015learning}, and (ii) SPOT \cite{bonneel2019spot}. Following \cite{LYFC}, we apply the kernel approach in the form $\exp(-t\bar{d})$  with SVM for document classification with word embedding. Here,   $\bar{d}$ is a discrepancy between measures and $t>0$.
We also employed this kernel approach for various tasks in TDA, e.g., orbit recognition and object shape classification with SVM, as well as change point detection for material data analysis with kernel Fisher discriminant ratio (KFDR) \cite{harchaoui2009kernel}. While kernels for $\widetilde{\ET}_{\lambda}^{\alpha}$ and $d_{\alpha}$ are positive definite, kernels for Sinkhorn-UOT and SPOT are empirically indefinite\footnote{In practice, we observed negative eigenvalues of some Gram matrices corresponding to kernels for Sinkhorn-UOT and SPOT.}. When kernels are indefinite, we regularized for the corresponding Gram matrices by adding a sufficiently large diagonal term as in \cite{Cuturi-2013-Sinkhorn, LYFC}. For SVM, we randomly split each dataset into $70\%/30\%$ for training and test with 10 repeats. Typically, we choose hyper-parameters via cross validation, choose $1/t$ from $\{q_{10}, q_{20}, q_{50}\}$ where $q_s$ is the $s\%$ quantile of a subset of corresponding discrepancies observed on a training set, use 1-vs-1 strategy with Libsvm\footnote{https://www.csie.ntu.edu.tw/$\sim$cjlin/libsvm/} for multi-class classification, and choose SVM regularization from $\left\{10^{-2:1:2}\right\}$. For Sinkhorn-UOT, we select the entropic regularization from $\left\{0.01, 0.05, 0.1, 0.5, 1\right\}$. Following Proposition~\ref{prop:bound_tildeET}, we take $\alpha = 0$ for $\widetilde{\ET}_{\lambda}^{\alpha}$ and $\dAlpha$ in all our experiments.

\subsection{Efficient Approximation of $\widetilde{\ET}_{\lambda}^{0}$ for $\ET_{\lambda}$}


\begin{wrapfigure}{r}{0.2\textwidth}
 \vspace{-25pt}
  \begin{center}
    \includegraphics[width=0.2\textwidth]{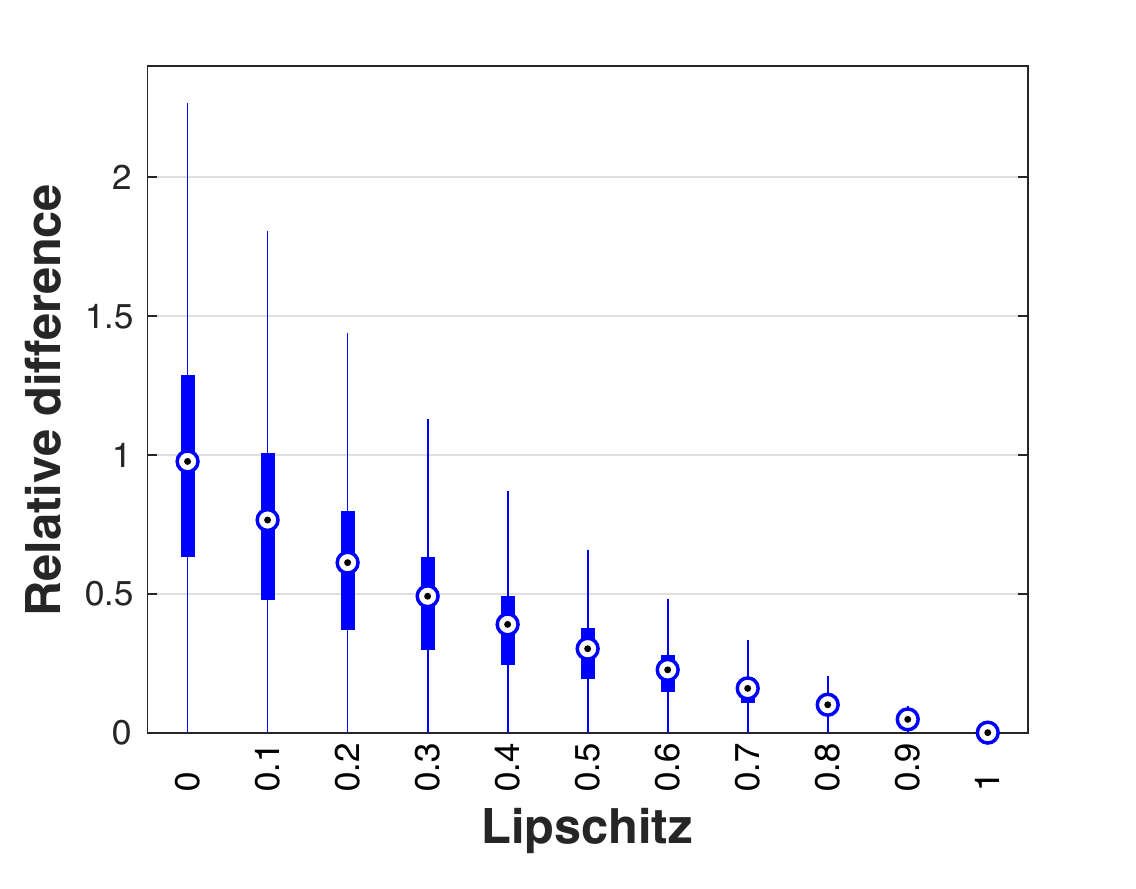}
  \end{center}
  \vspace{-12pt}
  \caption{Relative difference between $\tildeET_{\lambda}^{0}$ and $\ET_{\lambda}$ w.r.t. Lipschitz const. of $w_1, w_2$.} \label{fg:Diff_KT_tET_Lipchitz}
  \vspace{-4pt}
\end{wrapfigure}

We randomly sample 500K pairs of documents in $\texttt{TWITTER}$ dataset. Following Proposition~\ref{distance-agree}, we compute $\ET_{\lambda}$ via the corresponding $\KT$ (Equation~\eqref{P2}). Our goal is to compare $\widetilde{\ET}_{\lambda}^{0}$ to $\ET_{\lambda}$.

\begin{wrapfigure}{r}{0.2\textwidth}
 \vspace{-26pt}
  \begin{center}
    \includegraphics[width=0.2\textwidth]{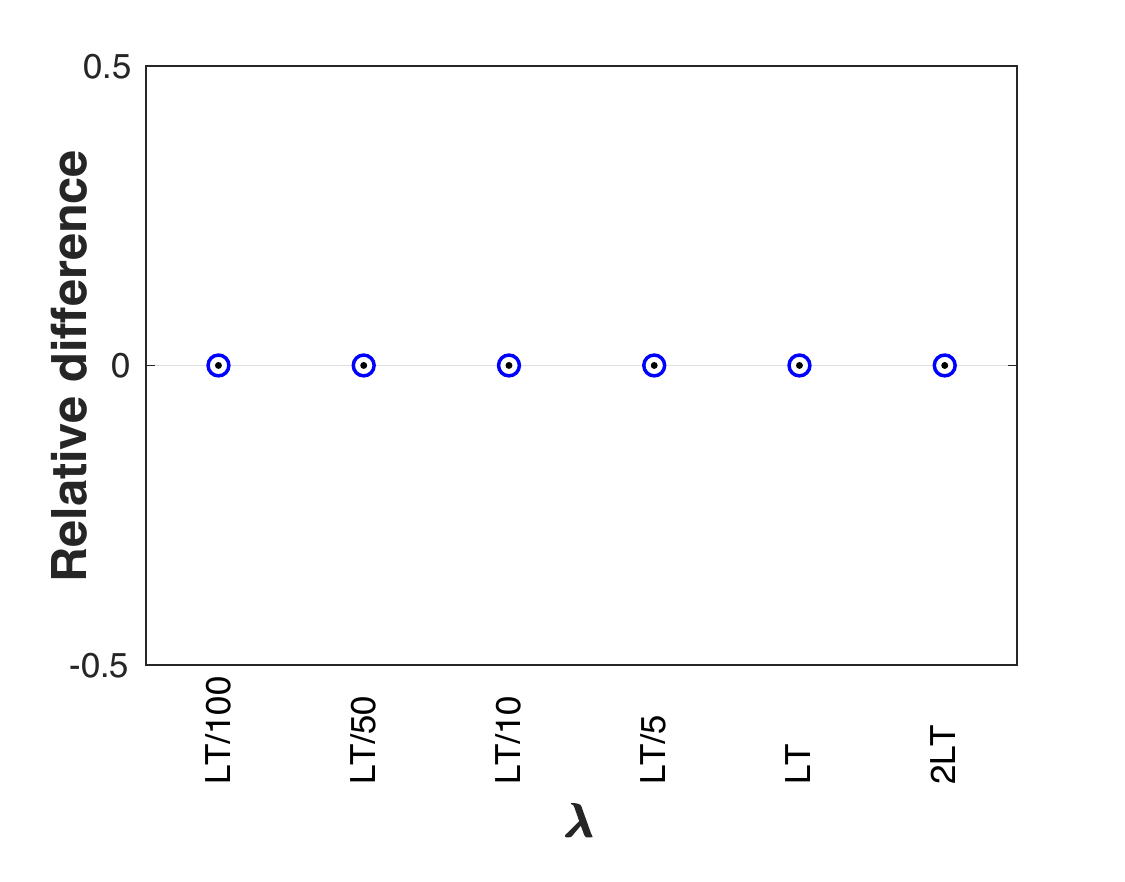}
  \end{center}
  \vspace{-12pt}
  \caption{Relative difference between $\tildeET_{\lambda}^{0}$ and $\ET_\lambda$ w.r.t. $\lambda$ when $a_1=b$. (LT := $L_{\Tt}$)} \label{fg:Diff_KT_tET_Lambda_LargestLipschitz}
  \vspace{-30pt}
\end{wrapfigure}

\textbf{Change Lipschitz constants.} We choose $w_1(x) = w_2(x) = a_1 d_{\Tt}(r, x) + a_0$, and set $\lambda=b=1$, $a_0=1$. In particular,  $a_1 \in [0, b]$ since $w_1, w_2$ are $b$-Lipschitz functions (see Corollary \ref{cor:duality} and Remark \ref{rem:bLipschitz}). We illustrate the relative difference $(\tildeET_{\lambda}^{0} - \ET_\lambda)/\ET_\lambda$ when $a_1$ is changed in $[0, b]$ in Figure~\ref{fg:Diff_KT_tET_Lipchitz}. We observe that when $a_1$ is close to $b$ (i.e., the Lipschitz constants of $w_1, w_2$ are close to $b$), $\tildeET_{\lambda}^{0}$ becomes closer to $\ET_\lambda$. When $a_1=b$, the values of $\tildeET_{\lambda}^{0}$ is almost identical to $\ET_\lambda$.

\textbf{Change $\lambda$.} From the results in Figure~\ref{fg:Diff_KT_tET_Lipchitz}, we set $a_1=b$ to investigate the relative different between $\tildeET_{\lambda}^{0}$ and $\ET_\lambda$ when $\lambda$ is changed. As illustrated in Figure~\ref{fg:Diff_KT_tET_Lambda_LargestLipschitz}, $\tildeET_{\lambda}^{0}$ is almost identical to $\ET_\lambda$ regardless the value of $\lambda$ when $a_1 = b$.

\subsection{Document Classification with Word Embedding}\label{sec:Doc}

We consider 4 datasets: \texttt{TWITTER}, \texttt{RECIPE}, \texttt{CLASSIC} and \texttt{AMAZON} for document classification with word embedding. Statistical characteristics of these datasets are summarized in Figure~\ref{fg:result_DOC}.

 \begin{figure}
 \vspace{-12pt}
  \begin{center}
    \includegraphics[width=0.48\textwidth]{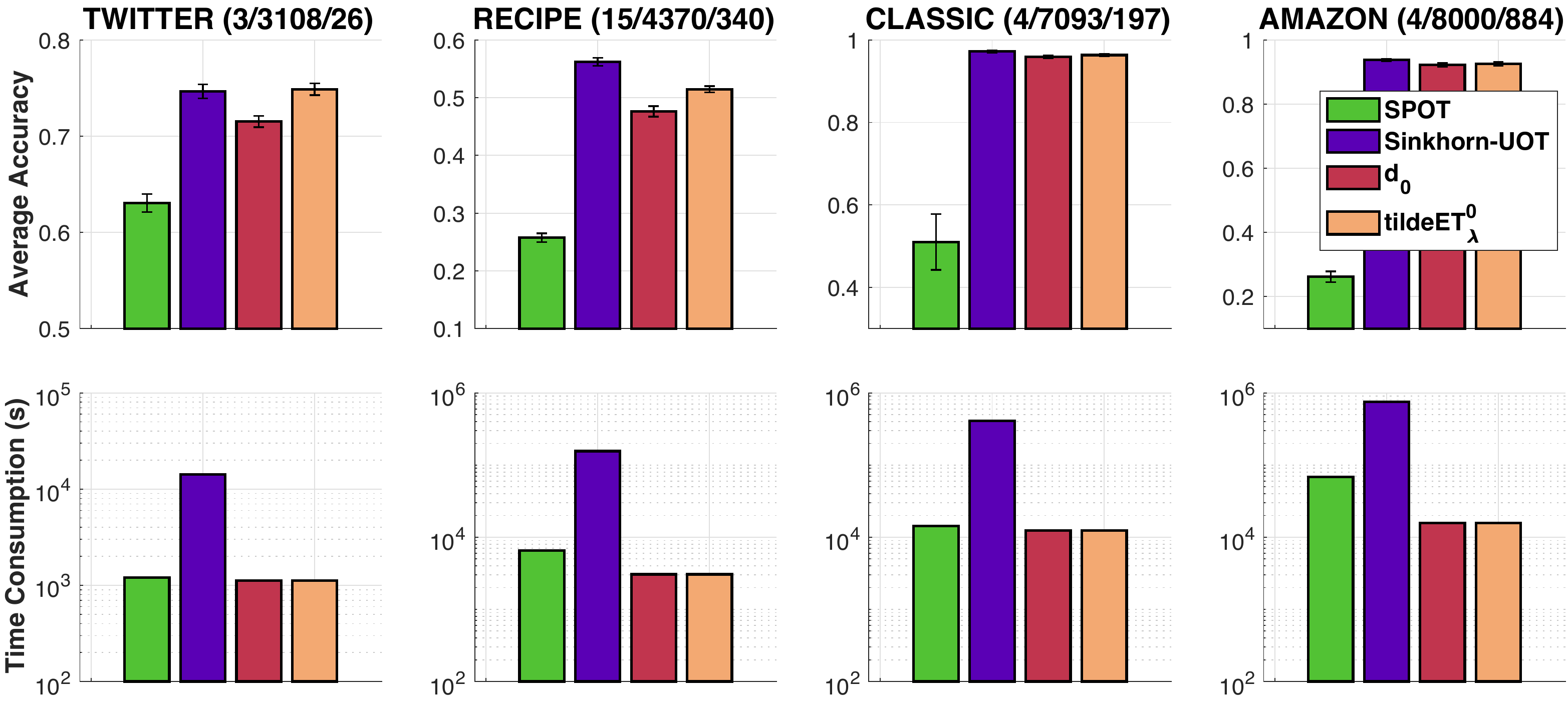}
  \end{center}
  \vspace{-12pt}
  \caption{SVM results on document classification (row 1), and corresponding time consumption of kernel matrices (row 2). For each dataset, the numbers in the parenthesis are  respectively the number of classes, the number of documents, and the maximum number of unique words for each document.}
  \label{fg:result_DOC}
  \vspace{-18pt}
\end{figure}

\subsection{Topological Data Analysis (TDA)}\label{sec:TDA}

\begin{wrapfigure}{r}{0.25\textwidth}
 \vspace{-26pt}
  \begin{center}
    \includegraphics[width=0.24\textwidth]{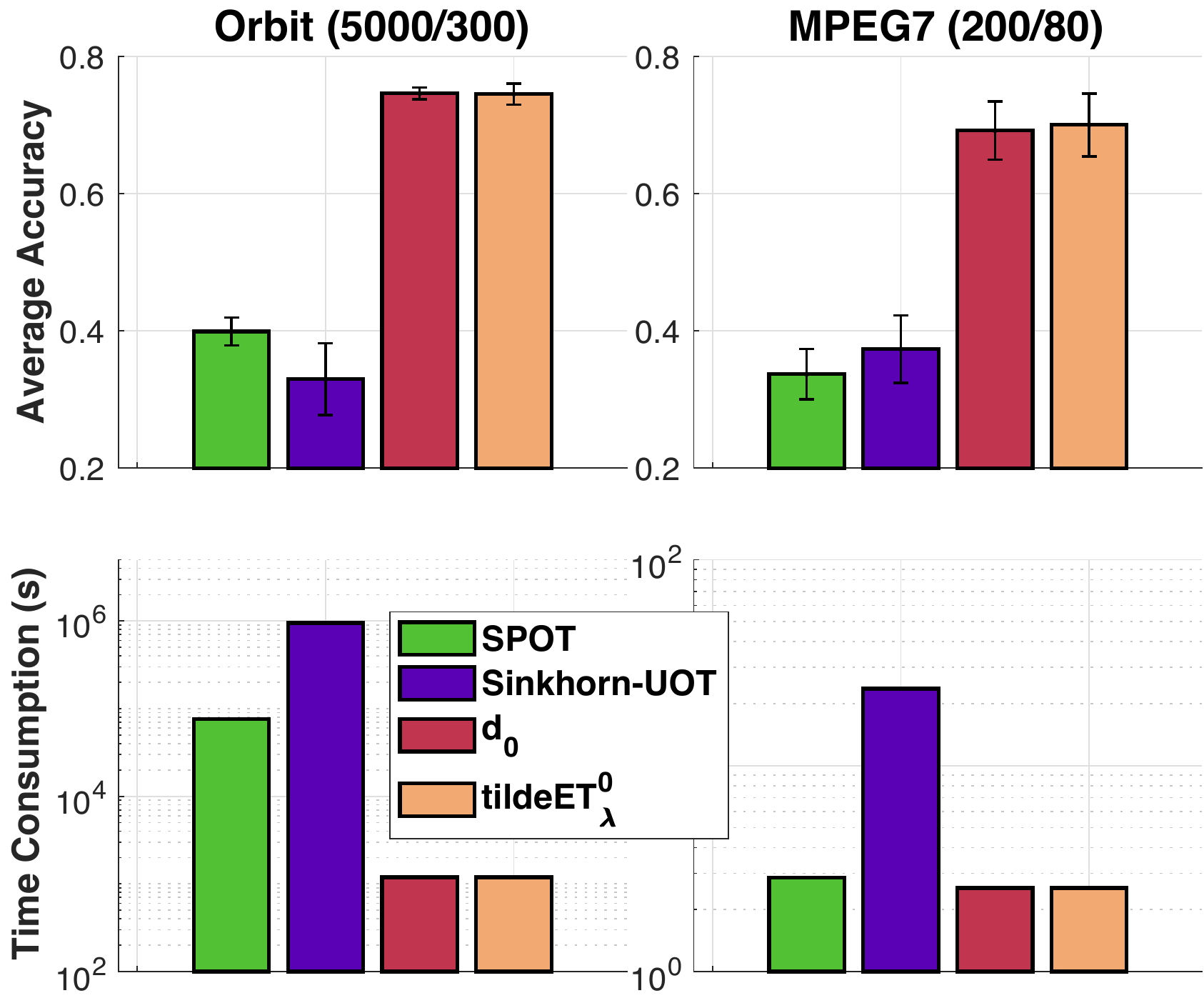}
  \end{center}
  \vspace{-12pt}
  \caption{SVM results for TDA (row 1), and corresponding time consumption of kernel matrices (row 2). For each dataset, the numbers in the parenthesis are 
  respectively the number of PDs, and the maximum number of points in PD.}
  \label{fg:result_TDA}
  \vspace{-10pt}
\end{wrapfigure}

\subsubsection{Orbit Recognition}
We considered a synthesized dataset, proposed by Adams et al. \cite{adams2017persistence}, for link twist map which is a discrete dynamical system to model flows in DNA microarrays \cite{hertzsch2007dna}. There are 5 classes of orbits. Following \cite{le2018persistence}, we generated 1000 orbits for each class of orbits, and each orbit has 1000 points. We used the 1-dimensional topological features for PD extracted with Vietoris-Rips complex filtration \cite{edelsbrunner2008persistent}. 



\subsubsection{Object Shape Classification} 
We evaluated our approach for object shape classification on a subset of \texttt{MPEG7} dataset \cite{latecki2000shape} containing 10 classes where each class has 20 samples as in \cite{le2018persistence}. For simplicity, we followed \cite{le2018persistence} to extract $1$-dimensional topological features for PD with Vietoris-Rips complex filtration\footnote{A more complicated and advanced filtration for this task is considered in \cite{turner2014persistent}.} \cite{edelsbrunner2008persistent}. 


\subsubsection{Change Point Detection for Material Analysis}

We applied our approach on change point detection for material analysis with KFDR as a statistical score on granular packing system (GPS) \cite{francois2013geometrical} and SiO$_2$ \cite{nakamura2015persistent} datasets. Statistical characteristics of these datasets are summarized in Figure~\ref{fg:RFDR_ChangePoint}. Following \cite{le2018persistence}, we set $10^{-3}$ for the regularization parameter in KFDR and used the ball model filtration to extract 2-dimensional topological features for PD in GPS dataset, and 1-dimensional topological features for PD in SiO$_2$ dataset. Note that we omit the baseline kernel for Sinkhorn-UOT in this application since its computation of Sinkhorn-UOT is out of memory.

 \begin{figure}
 \vspace{-10pt}
  \begin{center}
    \includegraphics[width=0.4\textwidth]{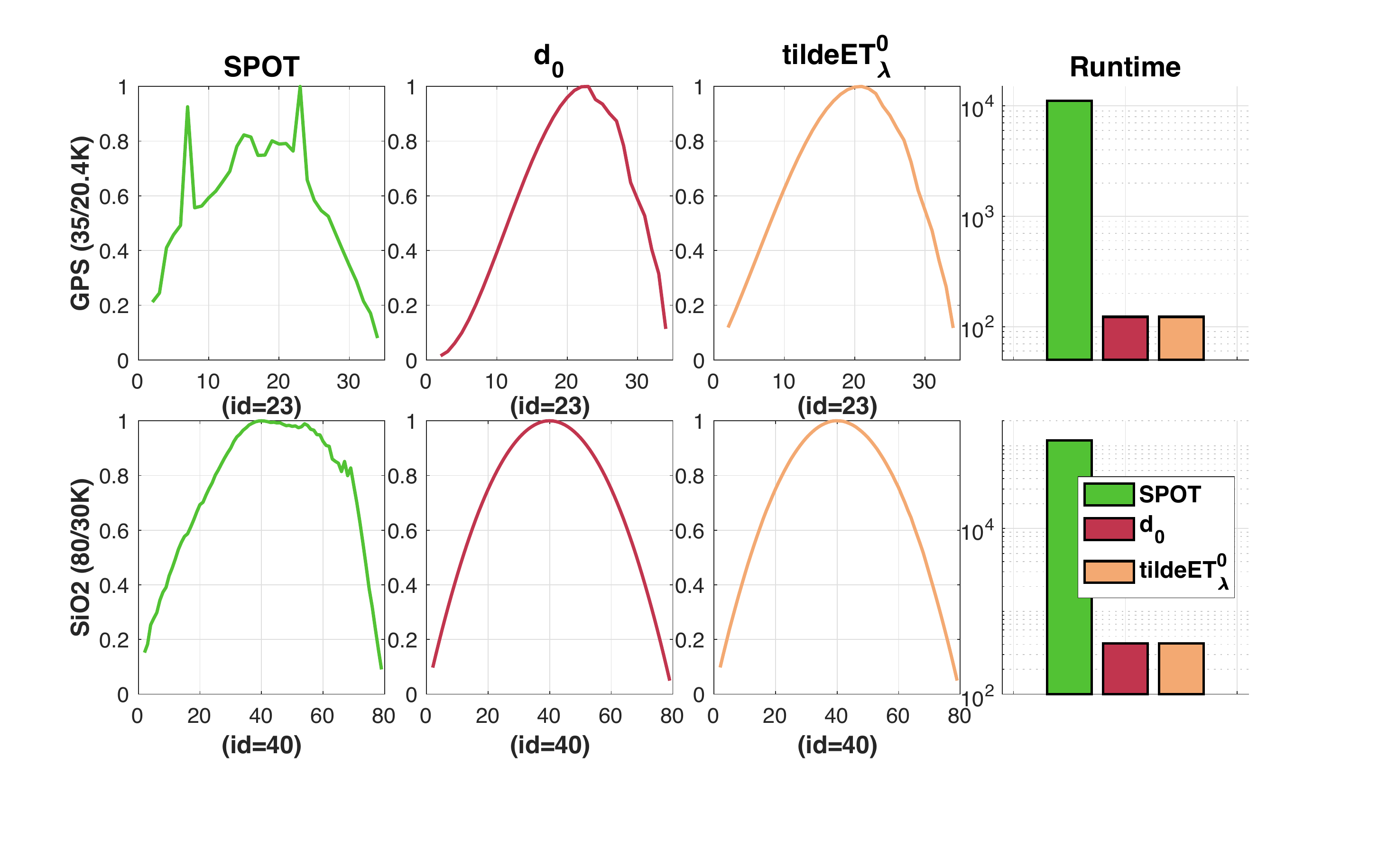}
  \end{center}
  \vspace{-12pt}
  \caption{KFDR graphs and time consumption of kernel matrices for change point detection. For each dataset, the numbers in the parenthesis are respectively the number of PDs, and the maximum number of points in PD.}
  \label{fg:RFDR_ChangePoint}
  \vspace{-14pt}
\end{figure}

We illustrate the KFDR graphs for both datasets in Figure~\ref{fg:RFDR_ChangePoint}. For GPS dataset, all kernel approaches get the change point at the index 23 which supports the observation (corresponding id = 23) in \cite{anonymous72}. For SiO$_2$ dataset, all kernel approaches get the change point in a supported range ($35 \le \text{id} \le 50)$, obtained by a traditional physical approach \cite{elliott1983physics}. The KFDR results of kernels corresponding to $d_{0}$ and $\tildeET_{\lambda}^{0}$ compare favorably with those of kernel for SPOT.

\subsection{Results of SVM, Time Consumption and Discussions}

\begin{wrapfigure}{r}{0.25\textwidth}
 \vspace{-13pt}
  \begin{center}
    \includegraphics[width=0.25\textwidth]{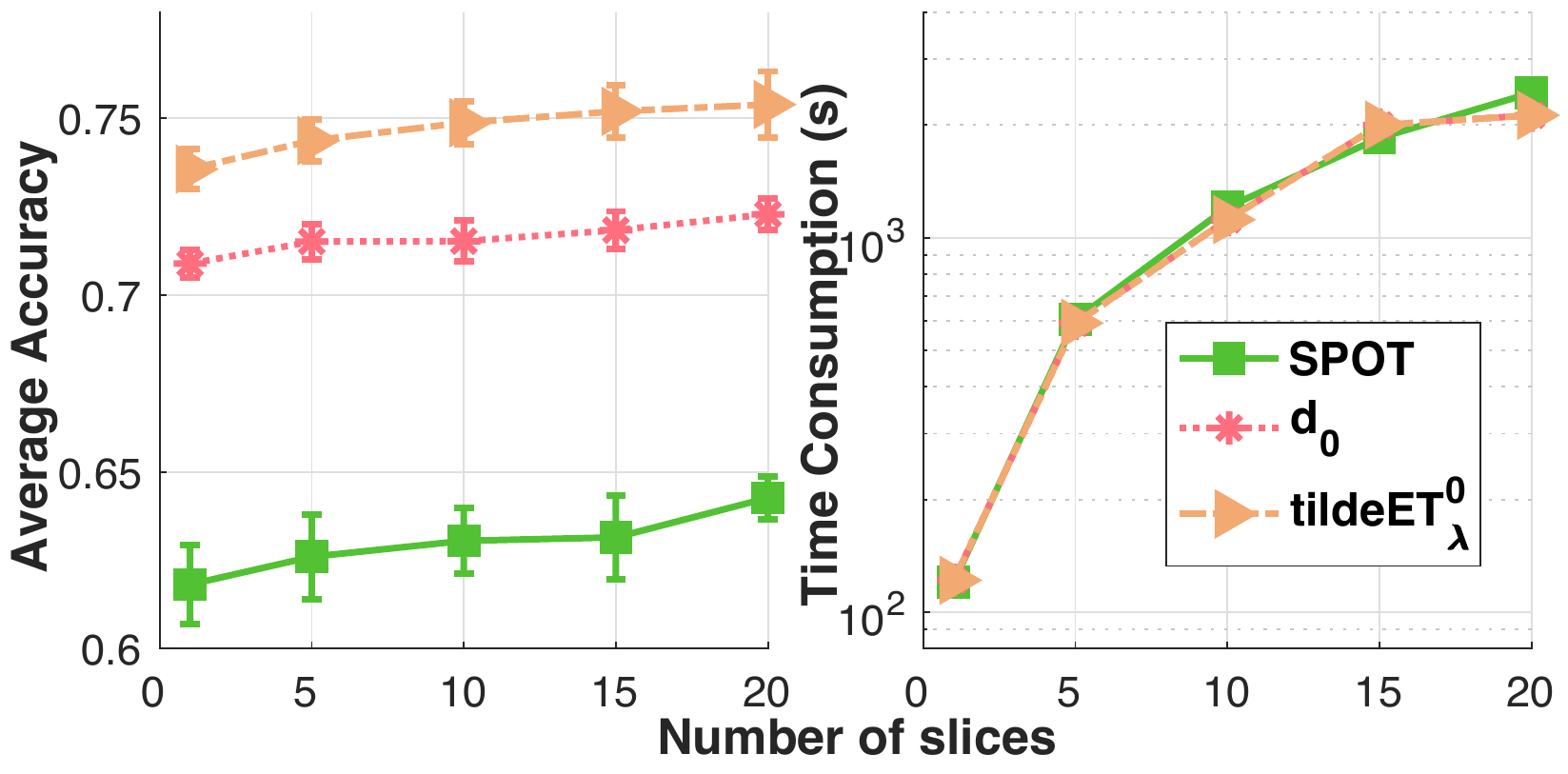}
  \end{center}
  \vspace{-12pt}
  \caption{SVM results and time consumption for corresponding kernel matrices in \texttt{TWITTER} dataset w.r.t. the number of (tree) slices.}
  \label{fg:TWITTER_slices}
  \vspace{-12pt}
\end{wrapfigure}

We illustrate the results of SVM and time consumption of kernel matrices in document classification with word embedding and TDA in Figure~\ref{fg:result_DOC} and Figure~\ref{fg:result_TDA} respectively. The performances of kernels for $\tildeETlambda$ and $d_0$ outperform those of kernels for SPOT. They also outperform those of kernels for Sinkhorn-UOT on TDA, and are comparative on document classification. The fact that SPOT uses the 1-dimensional projection for support data points may limit its ability to capture high-dimensional structure in data distributions \cite{LYFC, liutkus2019sliced}. The regularized EPT remedies this problem by leveraging the tree metrics which have more flexibility and degrees of freedom (e.g., choose a tree rather than a line). In addition, while kernels for $\tildeETlambda$ and $d_0$ are positive definite, kernels for SPOT and Sinkhorn-UOT are empirically indefinite. The indefiniteness of kernels may affect their performances in some applications, e.g., kernels for Sinkhorn-UOT work well for document classification with word embedding, but perform poorly in TDA applications. There are also   similar   observations in \cite{LYFC}. Additionally, we illustrate a trade-off between performances and computational time for different number of (tree) slices in \texttt{TWITTER} dataset in Figure~\ref{fg:TWITTER_slices}. The performances are usually improved with more slices, but with a trade-off about the computational time. In applications, we observed that a good trade off is about $n_s = 10$ slices.

\textbf{Tree metric sampling.} Time consumption for the tree metric sampling is negligible in applications. With the predefined tree deepest level $H_{\Tt}=6$ and tree branches $\kappa=4$ as in~\cite{LYFC}, it took $1.5, 11.0, 17.5, 20.5$ seconds for \texttt{TWITTER}, \texttt{RECIPE}, \texttt{CLASSIC}, \texttt{AMAZON} datasets respectively, and $21.0, 0.1$ seconds for \texttt{Orbit}, \texttt{MPEG7} datasets respectively.

\textbf{$\widetilde{\ET}_{\lambda}^{0}$ versus $\ET_{\lambda}$.} We also compare $\widetilde{\ET}_{\lambda}^{0}$ and $\ET_{\lambda}$ (or KT) in \texttt{TWITTER} dataset for document classification, and in \texttt{MPEG7} dataset for object shape recognition in TDA. The performances of $\widetilde{\ET}_{\lambda}^{0}$ and $\ET_{\lambda}$ are identical (i.e., their kernel matrices are almost the same for those datasets), but $\widetilde{\ET}_{\lambda}^{0}$ is faster than $\ET_{\lambda}$ about 11 times in \texttt{TWITTER} dataset, and 81 times in \texttt{MPEG7} dataset when $n_s = 10$ slices. 

Further results are placed in the supplementary (\S B).


\section{Conclusion}\label{sec:conclusion}

We have developed a rigorous theory for the entropy partial transport (EPT) problem for nonnegative measures on a tree having different masses. We show that the EPT problem is equivalent to a standard complete OT problem on a suitable one-node extended tree which allows us to develop its dual formulation. By leveraging the dual problem, we proposed efficient novel regularization for EPT which yields closed-form solution for a fast computation and negative definiteness---an important property to build positive definite kernels required in many kernel-dependent frameworks. Moreover, our regularization also provides effective approximations in applications. We further derive tree-sliced variants of the regularized EPT for practical applications without priori knowledge about a tree structure for measures. The question about sampling efficient tree metrics for the tree-sliced variants from data points is left for future work. 

\balance
\bibliographystyle{plain}
\bibliography{aistats2021}

\begin{thebibliography}{10}

\bibitem{adams2017persistence}
Henry Adams, Tegan Emerson, Michael Kirby, Rachel Neville, Chris Peterson,
  Patrick Shipman, Sofya Chepushtanova, Eric Hanson, Francis Motta, and Lori
  Ziegelmeier.
\newblock Persistence images: A stable vector representation of persistent
  homology.
\newblock {\em Journal of Machine Learning Research}, 18(1):218--252, 2017.

\bibitem{anonymous72}
Anonymous.
\newblock What is random packing?
\newblock {\em Nature}, 239:488--489, 1972.

\bibitem{bartal1996probabilistic}
Yair Bartal.
\newblock Probabilistic approximation of metric spaces and its algorithmic
  applications.
\newblock In {\em Proceedings of 37th Conference on Foundations of Computer
  Science}, pages 184--193, 1996.

\bibitem{bartal1998approximating}
Yair Bartal.
\newblock On approximating arbitrary metrices by tree metrics.
\newblock In {\em ACM Symposium on Theory of Computing (STOC)}, volume~98,
  pages 161--168, 1998.

\bibitem{benamou2003numerical}
Jean-David Benamou.
\newblock Numerical resolution of an “unbalanced” mass transport problem.
\newblock {\em ESAIM: Mathematical Modelling and Numerical
  Analysis-Mod{\'e}lisation Math{\'e}matique et Analyse Num{\'e}rique},
  37(5):851--868, 2003.

\bibitem{Berg84}
C.~Berg, J.~P.~R. Christensen, and P.~Ressel, editors.
\newblock {\em Harmonic analysis on semigroups}.
\newblock Springer-Verglag, New York, 1984.

\bibitem{bonneel2019spot}
Nicolas Bonneel and David Coeurjolly.
\newblock Spot: sliced partial optimal transport.
\newblock {\em ACM Transactions on Graphics (TOG)}, 38(4):1--13, 2019.

\bibitem{bonneel2015sliced}
Nicolas Bonneel, Julien Rabin, Gabriel Peyr{\'e}, and Hanspeter Pfister.
\newblock Sliced and radon wasserstein barycenters of measures.
\newblock {\em Journal of Mathematical Imaging and Vision}, 51(1):22--45, 2015.

\bibitem{bunne2019}
Charlotte Bunne, David Alvarez-Melis, Andreas Krause, and Stefanie Jegelka.
\newblock {Learning Generative Models across Incomparable Spaces}.
\newblock In {\em International Conference on Machine Learning (ICML)},
  volume~97, 2019.

\bibitem{CM}
Luis~A Caffarelli and Robert~J McCann.
\newblock Free boundaries in optimal transport and monge-ampere obstacle
  problems.
\newblock {\em Annals of mathematics}, pages 673--730, 2010.

\bibitem{charikar1998approximating}
Moses Charikar, Chandra Chekuri, Ashish Goel, Sudipto Guha, and Serge Plotkin.
\newblock Approximating a finite metric by a small number of tree metrics.
\newblock In {\em Proceedings 39th Annual Symposium on Foundations of Computer
  Science (FOCS)}, pages 379--388, 1998.

\bibitem{chizat2018scaling}
Lenaic Chizat, Gabriel Peyr{\'e}, Bernhard Schmitzer, and Fran{\c{c}}ois-Xavier
  Vialard.
\newblock Scaling algorithms for unbalanced optimal transport problems.
\newblock {\em Mathematics of Computation}, 87(314):2563--2609, 2018.

\bibitem{chung2019duality}
Nhan-Phu Chung and Thanh-Son Trinh.
\newblock Duality and quotient spaces of generalized wasserstein spaces.
\newblock {\em arXiv preprint arXiv:1904.12461}, 2019.

\bibitem{courty2017joint}
Nicolas Courty, R{\'e}mi Flamary, Amaury Habrard, and Alain Rakotomamonjy.
\newblock Joint distribution optimal transportation for domain adaptation.
\newblock In {\em Advances in Neural Information Processing Systems}, pages
  3730--3739, 2017.

\bibitem{Cuturi-2013-Sinkhorn}
M.~Cuturi.
\newblock Sinkhorn distances: {L}ightspeed computation of optimal transport.
\newblock In {\em Advances in Neural Information Processing Systems}, pages
  2292--2300, 2013.

\bibitem{edelsbrunner2008persistent}
Herbert Edelsbrunner and John Harer.
\newblock Persistent homology-a survey.
\newblock {\em Contemporary mathematics}, 453:257--282, 2008.

\bibitem{elliott1983physics}
Stephen~Richard Elliott.
\newblock Physics of amorphous materials.
\newblock {\em Longman Group}, 1983.

\bibitem{EM}
Steven~N Evans and Frederick~A Matsen.
\newblock The phylogenetic kantorovich--rubinstein metric for environmental
  sequence samples.
\newblock {\em Journal of the Royal Statistical Society: Series B (Statistical
  Methodology)}, 74(3):569--592, 2012.

\bibitem{fakcharoenphol2004tight}
Jittat Fakcharoenphol, Satish Rao, and Kunal Talwar.
\newblock A tight bound on approximating arbitrary metrics by tree metrics.
\newblock {\em Journal of Computer and System Sciences}, 69(3):485--497, 2004.

\bibitem{figalli2010optimal}
Alessio Figalli.
\newblock The optimal partial transport problem.
\newblock {\em Archive for rational mechanics and analysis}, 195(2):533--560,
  2010.

\bibitem{francois2013geometrical}
Nicolas Francois, Mohammad Saadatfar, R~Cruikshank, and A~Sheppard.
\newblock Geometrical frustration in amorphous and partially crystallized
  packings of spheres.
\newblock {\em Physical review letters}, 111(14):148001, 2013.

\bibitem{frogner2015learning}
Charlie Frogner, Chiyuan Zhang, Hossein Mobahi, Mauricio Araya, and Tomaso~A
  Poggio.
\newblock Learning with a wasserstein loss.
\newblock In {\em Advances in neural information processing systems}, pages
  2053--2061, 2015.

\bibitem{gangbo2019unnormalized}
Wilfrid Gangbo, Wuchen Li, Stanley Osher, and Michael Puthawala.
\newblock Unnormalized optimal transport.
\newblock {\em Journal of Computational Physics}, 399:108940, 2019.

\bibitem{gonzalez1985clustering}
Teofilo~F Gonzalez.
\newblock Clustering to minimize the maximum intercluster distance.
\newblock {\em Theoretical Computer Science}, 38:293--306, 1985.

\bibitem{guittet2002extended}
Kevin Guittet.
\newblock Extended kantorovich norms: a tool for optimization.
\newblock {\em INRIA report}, 2002.

\bibitem{hanin1992kantorovich}
Leonid~G Hanin.
\newblock Kantorovich-rubinstein norm and its application in the theory of
  lipschitz spaces.
\newblock {\em Proceedings of the American Mathematical Society},
  115(2):345--352, 1992.

\bibitem{harchaoui2009kernel}
Zaid Harchaoui, Eric Moulines, and Francis~R Bach.
\newblock Kernel change-point analysis.
\newblock In {\em Advances in neural information processing systems}, pages
  609--616, 2009.

\bibitem{hertzsch2007dna}
Jan-Martin Hertzsch, Rob Sturman, and Stephen Wiggins.
\newblock Dna microarrays: design principles for maximizing ergodic, chaotic
  mixing.
\newblock {\em Small}, 3(2):202--218, 2007.

\bibitem{indyk2001algorithmic}
Piotr Indyk.
\newblock Algorithmic applications of low-distortion geometric embeddings.
\newblock In {\em Proceedings 42nd IEEE Symposium on Foundations of Computer
  Science (FOCS)}, pages 10--33, 2001.

\bibitem{indyk2003fast}
Piotr Indyk and Nitin Thaper.
\newblock Fast image retrieval via embeddings.
\newblock In {\em International workshop on statistical and computational
  theories of vision}, volume~2, page~5, 2003.

\bibitem{janati2019wasserstein}
Hicham Janati, Marco Cuturi, and Alexandre Gramfort.
\newblock Wasserstein regularization for sparse multi-task regression.
\newblock In {\em The 22nd International Conference on Artificial Intelligence
  and Statistics}, pages 1407--1416, 2019.

\bibitem{janati2020entropic}
Hicham Janati, Boris Muzellec, Gabriel Peyr{\'e}, and Marco Cuturi.
\newblock Entropic optimal transport between (unbalanced) gaussian measures has
  a closed form.
\newblock In {\em Advances in neural information processing systems}, 2020.

\bibitem{kloeckner2015geometric}
Beno{\^\i}t~R Kloeckner.
\newblock A geometric study of {W}asserstein spaces: ultrametrics.
\newblock {\em Mathematika}, 61(1):162--178, 2015.

\bibitem{kolouri2019generalized}
Soheil Kolouri, Kimia Nadjahi, Umut Simsekli, Roland Badeau, and Gustavo Rohde.
\newblock Generalized sliced wasserstein distances.
\newblock In {\em Advances in Neural Information Processing Systems}, pages
  261--272, 2019.

\bibitem{kusano2017kernel}
Genki Kusano, Kenji Fukumizu, and Yasuaki Hiraoka.
\newblock Kernel method for persistence diagrams via kernel embedding and
  weight factor.
\newblock {\em The Journal of Machine Learning Research}, 18(1):6947--6987,
  2017.

\bibitem{kusner2015word}
Matt Kusner, Yu~Sun, Nicholas Kolkin, and Kilian Weinberger.
\newblock From word embeddings to document distances.
\newblock In {\em International conference on machine learning}, pages
  957--966, 2015.

\bibitem{lacombe2018large}
Th{\'e}o Lacombe, Marco Cuturi, and Steve Oudot.
\newblock Large scale computation of means and clusters for persistence
  diagrams using optimal transport.
\newblock In {\em Advances in Neural Information Processing Systems}, pages
  9770--9780, 2018.

\bibitem{latecki2000shape}
Longin~Jan Latecki, Rolf Lakamper, and T~Eckhardt.
\newblock Shape descriptors for non-rigid shapes with a single closed contour.
\newblock In {\em Proceedings of the IEEE Conference on Computer Vision and
  Pattern Recognition (CVPR)}, volume~1, pages 424--429, 2000.

\bibitem{lavenant2018dynamical}
Hugo Lavenant, Sebastian Claici, Edward Chien, and Justin Solomon.
\newblock Dynamical optimal transport on discrete surfaces.
\newblock In {\em SIGGRAPH Asia 2018 Technical Papers}, page 250. ACM, 2018.

\bibitem{le2021fba}
Tam Le, Nhat Ho, and Makoto Yamada.
\newblock Flow-based alignment approaches for probability measures in different
  spaces.
\newblock In {\em International Conference on Artificial Intelligence and
  Statistics (AISTATS)}. 2021.

\bibitem{le2019wb}
Tam Le, Viet Huynh, Nhat Ho, Dinh Phung, and Makoto Yamada.
\newblock On scalable variant of wasserstein barycenter.
\newblock {\em arXiv preprint arXiv:1910.04483}, 2019.

\bibitem{le2018persistence}
Tam Le and Makoto Yamada.
\newblock Persistence {F}isher kernel: A {R}iemannian manifold kernel for
  persistence diagrams.
\newblock In {\em Advances in Neural Information Processing Systems}, pages
  10007--10018, 2018.

\bibitem{LYFC}
Tam Le, Makoto Yamada, Kenji Fukumizu, and Marco Cuturi.
\newblock Tree-sliced variants of {W}asserstein distances.
\newblock In {\em Advances in neural information processing systems}, pages
  12283--12294, 2019.

\bibitem{lee2019parallel}
John Lee, Nicholas~P Bertrand, and Christopher~J Rozell.
\newblock Parallel unbalanced optimal transport regularization for large scale
  imaging problems.
\newblock {\em arXiv preprint arXiv:1909.00149}, 2019.

\bibitem{lellmann2014imaging}
Jan Lellmann, Dirk~A Lorenz, Carola Schonlieb, and Tuomo Valkonen.
\newblock Imaging with kantorovich--rubinstein discrepancy.
\newblock {\em SIAM Journal on Imaging Sciences}, 7(4):2833--2859, 2014.

\bibitem{Liero2018}
Matthias Liero, Alexander Mielke, and Giuseppe Savar{\'e}.
\newblock Optimal entropy-transport problems and a new hellinger--kantorovich
  distance between positive measures.
\newblock {\em Inventiones mathematicae}, 211(3):969--1117, 2018.

\bibitem{liutkus2019sliced}
Antoine Liutkus, Umut Simsekli, Szymon Majewski, Alain Durmus, and
  Fabian-Robert St{\"o}ter.
\newblock Sliced-wasserstein flows: Nonparametric generative modeling via
  optimal transport and diffusions.
\newblock In {\em International Conference on Machine Learning}, pages
  4104--4113, 2019.

\bibitem{memoli2021ultrametric}
Facundo M{\'e}moli, Axel Munk, Zhengchao Wan, and Christoph Weitkamp.
\newblock The ultrametric gromov-wasserstein distance.
\newblock {\em arXiv preprint arXiv:2101.05756}, 2021.

\bibitem{mena2019statistical}
Gonzalo Mena and Jonathan Niles-Weed.
\newblock Statistical bounds for entropic optimal transport: sample complexity
  and the central limit theorem.
\newblock In {\em Advances in Neural Information Processing Systems}, pages
  4541--4551, 2019.

\bibitem{mikolov2013distributed}
Tomas Mikolov, Ilya Sutskever, Kai Chen, Greg~S Corrado, and Jeff Dean.
\newblock Distributed representations of words and phrases and their
  compositionality.
\newblock In {\em Advances in neural information processing systems}, pages
  3111--3119, 2013.

\bibitem{nadjahi2019asymptotic}
Kimia Nadjahi, Alain Durmus, Umut Simsekli, and Roland Badeau.
\newblock Asymptotic guarantees for learning generative models with the
  sliced-wasserstein distance.
\newblock In {\em Advances in Neural Information Processing Systems}, pages
  250--260, 2019.

\bibitem{nakamura2015persistent}
Takenobu Nakamura, Yasuaki Hiraoka, Akihiko Hirata, Emerson~G Escolar, and
  Yasumasa Nishiura.
\newblock Persistent homology and many-body atomic structure for medium-range
  order in the glass.
\newblock {\em Nanotechnology}, 26(30):304001, 2015.

\bibitem{naor2007planar}
Assaf Naor and Gideon Schechtman.
\newblock Planar {E}arthmover is not in {L}\_1.
\newblock {\em SIAM Journal on Computing}, 37(3):804--826, 2007.

\bibitem{peyre2019computational}
Gabriel Peyr{\'e} and Marco Cuturi.
\newblock Computational optimal transport.
\newblock {\em Foundations and Trends{\textregistered} in Machine Learning},
  11(5-6):355--607, 2019.

\bibitem{pham2020unbalanced}
Khiem Pham, Khang Le, Nhat Ho, Tung Pham, and Hung Bui.
\newblock On unbalanced optimal transport: An analysis of {S}inkhorn algorithm.
\newblock In {\em Proceedings of the International Conference on Machine
  Learning}, 2020.

\bibitem{P1}
Benedetto Piccoli and Francesco Rossi.
\newblock Generalized wasserstein distance and its application to transport
  equations with source.
\newblock {\em Archive for Rational Mechanics and Analysis}, 211(1):335--358,
  2014.

\bibitem{P2}
Benedetto Piccoli and Francesco Rossi.
\newblock On properties of the generalized wasserstein distance.
\newblock {\em Archive for Rational Mechanics and Analysis}, 222(3):1339--1365,
  2016.

\bibitem{rabin2011wasserstein}
Julien Rabin, Gabriel Peyr{\'e}, Julie Delon, and Marc Bernot.
\newblock Wasserstein barycenter and its application to texture mixing.
\newblock In {\em International Conference on Scale Space and Variational
  Methods in Computer Vision}, pages 435--446, 2011.

\bibitem{salton1988term}
Gerard Salton and Christopher Buckley.
\newblock Term-weighting approaches in automatic text retrieval.
\newblock {\em Information processing \& management}, 24(5):513--523, 1988.

\bibitem{sato2020fast}
Ryoma Sato, Makoto Yamada, and Hisashi Kashima.
\newblock Fast unbalanced optimal transport on tree.
\newblock In {\em Advances in neural information processing systems}, 2020.

\bibitem{schiebinger2019optimal}
Geoffrey Schiebinger, Jian Shu, Marcin Tabaka, Brian Cleary, Vidya Subramanian,
  Aryeh Solomon, Joshua Gould, Siyan Liu, Stacie Lin, Peter Berube, et~al.
\newblock Optimal-transport analysis of single-cell gene expression identifies
  developmental trajectories in reprogramming.
\newblock {\em Cell}, 176(4):928--943, 2019.

\bibitem{semple2003phylogenetics}
Charles Semple and Mike Steel.
\newblock Phylogenetics.
\newblock {\em Oxford Lecture Series in Mathematics and its Applications},
  2003.

\bibitem{solomon2015convolutional}
Justin Solomon, Fernando De~Goes, Gabriel Peyr{\'e}, Marco Cuturi, Adrian
  Butscher, Andy Nguyen, Tao Du, and Leonidas Guibas.
\newblock Convolutional {W}asserstein distances: Efficient optimal
  transportation on geometric domains.
\newblock {\em ACM Transactions on Graphics (TOG)}, 34(4):66, 2015.

\bibitem{Sommerfeld2016InferenceFE}
Max Sommerfeld and A.~Munk.
\newblock Inference for empirical wasserstein distances on finite spaces.
\newblock {\em Journal of The Royal Statistical Society Series B-statistical
  Methodology}, 80:219--238, 2016.

\bibitem{turner2014persistent}
Katharine Turner, Sayan Mukherjee, and Doug~M Boyer.
\newblock Persistent homology transform for modeling shapes and surfaces.
\newblock {\em Information and Inference: A Journal of the IMA}, 3(4):310--344,
  2014.

\bibitem{vayer2019sliced}
Titouan Vayer, R{\'e}mi Flamary, Romain Tavenard, Laetitia Chapel, and Nicolas
  Courty.
\newblock Sliced {G}romov-{W}asserstein.
\newblock {\em Advances in Neural Information Processing Systems}, 2019.

\bibitem{villani2008optimal}
C{\'e}dric Villani.
\newblock {\em Optimal transport: old and new}, volume 338.
\newblock Springer Science \& Business Media, 2008.

\bibitem{pmlr-v99-weed19a}
Jonathan Weed and Quentin Berthet.
\newblock Estimation of smooth densities in wasserstein distance.
\newblock In {\em Proceedings of the Thirty-Second Conference on Learning
  Theory}, volume~99, pages 3118--3119, 2019.

\bibitem{yang2018scalable}
Karren~D. Yang and Caroline Uhler.
\newblock Scalable unbalanced optimal transport using generative adversarial
  networks.
\newblock In {\em International Conference on Learning Representations}, 2019.

\end{thebibliography}

\newpage
\onecolumn

\appendix

In the supplementary,
\begin{itemize}

\item We give detailed proofs of the theoretical results in the main text for the entropy partial transport (EPT) problem for nonnegative measures on a tree having different masses in \S \ref{supp:sec:proofs}.

\item We provide further experimental results in \S \ref{supp:sec:experiments}. For examples, 
\begin{itemize}
	\item about more setups for the efficient approximation of $\tildeETlambda$ for $\widetilde{\ET}_\lambda^{\alpha}$,
	\item about different values of $\alpha$,
	\item about different numbers of slices, 
	\item and about different parameters in tree metric sampling.
\end{itemize}

\item We next give more details and discussions in \S \ref{supp:sec:details_discussions}. For examples,
\begin{itemize}
	\item more details about experiments (e.g., softwares, datasets, more details about the experiment setup).
	\item some brief review about kernels, and more referred details (e.g., for tree metric sampling, persistence diagrams and related mathematical definitions in topological data analysis).
	\item more discussions about other relations with other work.
\end{itemize}

\end{itemize}

\section{Detailed Proofs}\label{supp:sec:proofs}

In this section, we present detailed proofs of the theoretical results in the main text.

\subsection{Proof for Theorem 3.1 in the main text}\label{sec:app_m-via-lambda}

\begin{proof}
i) Note that $ \mathrm{ET}_{c,\lambda}(\mu,\nu)$ is a concave function in $\lambda$ since it is the infimum of a family of concave functions in $\lambda$. Therefore,  $ u$ is   convex on $\R$. In particular, $u$ is differentiable almost everywhere on $\R$. 

Let $\lambda\in\R$, recall the definition of $\mathcal{C}_{\lambda } (\gamma)$ in Equation (4) in the main text. Then for any $\gamma\in \Gamma^0(\lambda)$,  we have 
\begin{align}\label{sub-ineq}
  \mathrm{ET}_{c,\lambda +\delta}(\mu,\nu) 
  \leq \mathcal{C}_{\lambda +\delta} (\gamma)=\mathcal{C}_{\lambda } (\gamma)- b \delta \gamma(\calT\times\calT)
  = \mathrm{ET}_{c,\lambda }(\mu,\nu)- b \delta \gamma(\calT\times\calT) \,\,\,  \forall \delta\in\R.
 \end{align}
This implies that 
\[
\big\{ b\, \gamma(\calT\times\calT): \gamma\in \Gamma^0(\lambda)\big\} \subset \partial u(\lambda).
\]
We next show that the opposite  inclusion is also true, i.e., $\big\{ b\, \gamma(\calT\times\calT): \gamma\in \Gamma^0(\lambda)\big\} = \partial u(\lambda)$. This is obviously holds if $\partial u(\lambda) $ is singleton and hence  we only need to consider  $\lambda$ for which  the convex set $\partial u(\lambda)$ has more than one element. 

Let $m \in \partial u(\lambda)$, then $m$ can be expressed as a convex combination of extreme points $m_1, \dotsc, m_N$ of $\partial u(\lambda)$, i.e.,  $m= \sum_{i=1}^N t_i m_i$ with $0\leq t_i \leq 1$ and $\sum_{i=1}^N t_i=1$. As $m_i$ is an extreme point of $\partial u(\lambda)$, there exists a sequence $\lambda_n\to \lambda$ such that $\lambda_n$ is a differentiable point of $u$ and $u'(\lambda_n)\to m_i$. 

Let $\gamma^n\in \Gamma^0(\lambda_n)$, then $b\, \gamma^n(\calT\times \calT)=u'(\lambda_n)\to m_i$. By compactness, there exists a subsequence $\{\gamma^{n_k}\}$ and $\tilde\gamma^i\in \Pi_{\leq}(\mu,\nu)$ such that  $\gamma^{n_k}\to \tilde \gamma^i$ weakly. It follows that  $\gamma^{n_k}(\calT\times \calT)\to \tilde \gamma^i(\calT\times \calT)$, and hence we must have $b\, \tilde\gamma^i(\calT\times \calT)=m_i$. We have
\begin{align*}
   \mathcal{C}_{\lambda_{n_k}} (\gamma^{\lambda_{n_k}})
  = \mathcal{C}_{\lambda } (\gamma^{\lambda_{n_k}})
 + b (\lambda -\lambda_{n_k} )  \gamma^{n_k}(\calT\times \calT) 
  &\geq \mathrm{ET}_{c,\lambda }(\mu,\nu)+ b (\lambda -\lambda_{n_k} )  \gamma^{n_k}(\calT\times \calT)\\
  &\geq \mathrm{ET}_{c,\lambda }(\mu,\nu)- b \bar m |\lambda -\lambda_{n_k}|
 \end{align*}
and for any $\gamma \in \Gamma^0(\lambda)$, there holds
\begin{align*}
   \mathcal{C}_{\lambda_{n_k}} (\gamma^{\lambda_{n_k}})\leq \mathcal{C}_{\lambda_{n_k}} (\gamma)
   =\mathcal{C}_{\lambda } (\gamma)+ b (\lambda -\lambda_{n_k} )  \gamma(\calT\times \calT) 
  = \mathrm{ET}_{c,\lambda }(\mu,\nu)+b (\lambda -\lambda_{n_k} )  \gamma(\calT\times \calT).
 \end{align*}
 We thus deduce that $\lim_{k\to\infty} \mathcal{C}_{\lambda_{n_k}} (\gamma^{\lambda_{n_k}})=\mathrm{ET}_{c,\lambda }(\mu,\nu)$. These together with the lower semicontinuity of $\mathcal{C}_{\lambda }$ give
\begin{align*}
\mathrm{ET}_{c,\lambda }(\mu,\nu) =\liminf_{k\to\infty} \mathcal{C}_{\lambda_{n_k}} (\gamma^{\lambda_{n_k}})
&=\liminf_{k\to\infty}\Big[ \mathcal{C}_{\lambda } (\gamma^{\lambda_{n_k}})
 + b (\lambda -\lambda_{n_k} )  \gamma^{n_k}(\calT\times \calT)\Big]\\
 &=\liminf_{k\to\infty}\mathcal{C}_{\lambda } (\gamma^{\lambda_{n_k}})\geq \mathcal{C}_{\lambda } (\tilde \gamma^i).
\end{align*}
Therefore,  $\tilde \gamma^i \in \Gamma^0(\lambda)$ with mass $b\, \tilde \gamma^i(\calT\times \calT)=m_i$. Due to the convexity of $\Gamma^0(\lambda)$, we have $\bar\gamma :=\sum_{i=1}^N t_i \tilde \gamma^i \in \Gamma^0(\lambda)$ with $b\, \bar  \gamma(\calT\times \calT)=\sum_{i=1}^N t_i m_i=m$. That is, 
\[
 \partial u(\lambda) \subset \big\{ b\, \gamma(\calT\times\calT): \gamma\in \Gamma^0(\lambda)\big\},
 \]
and we thus infer that $\big\{ b\, \gamma(\calT\times\calT): \gamma\in \Gamma^0(\lambda)\big\} = \partial u(\lambda)$ for all $\lambda\in\R$.

In order to prove the second part of i), let $ \gamma\in \Gamma^0(\lambda_1)$ and $ \tilde \gamma\in \Gamma^0(\lambda_2)$ be arbitrary.  We have 
 \begin{align}\label{another-sub-ineq}
  \mathrm{ET}_{c,\lambda_2}(\mu,\nu) 
  = \mathcal{C}_{\lambda_2} (\tilde\gamma)
 & = \mathcal{C}_{\lambda_1 } (\tilde \gamma)
  - b (\lambda_2 -\lambda_1) \tilde \gamma(\calT\times\calT)\nonumber\\
  &\geq \mathrm{ET}_{c,\lambda_1 }(\mu,\nu)-b (\lambda_2 -\lambda_1) \tilde \gamma(\calT\times\calT).
 \end{align}
Hence by combining with  \eqref{sub-ineq}, we deduce that
\begin{align*}
 \mathrm{ET}_{c,\lambda_1 }(\mu,\nu)- b (\lambda_2-\lambda_1) \tilde \gamma(\calT\times\calT) \leq  \mathrm{ET}_{c,\lambda_2}(\mu,\nu) 
  \leq  \mathrm{ET}_{c,\lambda_1 }(\mu,\nu)- b (\lambda_2-\lambda_1) \gamma(\calT\times\calT),
 \end{align*}
which yields $\gamma(\calT\times\calT)\leq \tilde \gamma(\calT\times\calT)$. This together with the above characterization of $\partial u(\lambda)$ implies the second part of i).

ii) If $u$ is differentiable at $\lambda$, then $\partial u(\lambda)$ is a singleton set. However, as $\partial u(\lambda) =\big\{ b\, \gamma(\calT\times\calT): \gamma\in \Gamma^0(\lambda)\big\}$  by i), we thus infer that the mass $\gamma(\calT\times\calT)$ must be the same for every $\gamma\in \Gamma^0(\lambda)$.

Next assume that every element  in $\Gamma^0(\lambda)$ has the same  mass, say $m$. For $\delta\neq 0$,
 let  $\gamma^{\lambda +\delta}\in \Gamma^0(\lambda+\delta)$ and  $m(\lambda+\delta):=\gamma^{\lambda +\delta}(\calT\times\calT)$.
 Then, we claim that 
 \begin{equation}\label{m-cont}
 \lim_{\delta\to 0} m(\lambda +\delta) =m.
 \end{equation}
 Assume the claim for the moment, and let $\delta> 0$. Then, as in \eqref{sub-ineq}--\eqref{another-sub-ineq}, we have
 \begin{align*}
  \mathrm{ET}_{c,\lambda +\delta}(\mu,\nu) 
  \leq \mathrm{ET}_{c,\lambda }(\mu,\nu)- b \delta m\quad
  \mbox{and}\quad 
  \mathrm{ET}_{c,\lambda +\delta}(\mu,\nu) 
  \geq \mathrm{ET}_{c,\lambda }(\mu,\nu)- b \delta m(\lambda+\delta).
 \end{align*}
  It follows that
 \[
 - b  m(\lambda+\delta) \leq 
 \frac{\mathrm{ET}_{c,\lambda +\delta}(\mu,\nu)-\mathrm{ET}_{c,\lambda }(\mu,\nu)}{\delta}
 \leq - b  m.
 \]
This together with  claim \eqref{m-cont} gives $\lim_{\delta\to 0^+} \frac{\mathrm{ET}_{c,\lambda +\delta}(\mu,\nu)-\mathrm{ET}_{c,\lambda }(\mu,\nu)}{\delta} =- b  m$. By the same argument, we also have 
 $\lim_{\delta\to 0^-} \frac{\mathrm{ET}_{c,\lambda +\delta}(\mu,\nu)-\mathrm{ET}_{c,\lambda }(\mu,\nu)}{\delta} =- b  m$.
 Thus, we infer that $u$ is differentiable at $\lambda$ with  
 $u'(\lambda)= b m$.
Therefore, it remains to prove claim \eqref{m-cont}. 

Indeed, by compactness there exists a subsequence, still labeled by $\gamma^{\lambda+\delta}$, and $\gamma \in \Pi_{\leq}(\mu,\nu)$ such that 
$\gamma^{\lambda+\delta}\to \gamma$ weakly as $\delta\to 0$. As in i), we can show  that $\gamma\in \Gamma^0(\lambda)$.
Then, as the mass functional  is weakly continuous, we obtain $m(\lambda+\delta)=\gamma^{\lambda+\delta}(\calT\times \calT)\to \gamma(\calT\times \calT)=m$. We in fact have shown that any subsequence of $\{m(\lambda+\delta)\}_\delta$ has a further subsequence converging to the same number $m$. Therefore, the full sequence $\{m(\lambda+\delta)\}_\delta$ must converge to $m$, and hence \eqref{m-cont} is proved.

iii) For any $\lambda\in\R$, we have $ \partial u(\lambda) = \big\{ b\, \gamma(\calT\times\calT): \gamma\in \Gamma^0(\lambda)\big\}\subset [0,b\, \bar m]$. Thus, we only need to prove $[0,b\, \bar m]\subset \partial u(\R)$. First, note that as $ \partial u(\lambda)\subset\R$ is a compact and convex set, it must be a finite and closed interval. Therefore, if we let 
\[
\gamma^\lambda_{min} :=\argmin_{\gamma\in \Gamma^0(\lambda)} \gamma(\calT\times\calT)\quad \mbox{and}\quad 
\gamma^\lambda_{max} :=\argmax_{\gamma\in \Gamma^0(\lambda)} \gamma(\calT\times\calT),
\]
then it follows from ii) that $ \partial u(\lambda)=\big[b\,\gamma^\lambda_{min}(\calT\times\calT), b\, \gamma^\lambda_{max}(\calT\times\calT)\big] $ for every $\lambda\in\R$.
From Equation (4) in the main text, it is clear that $ \partial u(\lambda)=\{0\}$ for $\lambda$ negative enough. Indeed, if we take $\lambda < -M$, then as $w_1(x) +w_2(y) \leq b\,  c(x,y) +M$, we have 
$0<  b \, c(x,y)- w_1(x) -w_2(y) -\lambda$
for all $x,y\in \calT$. Then, we obtain from 
Equation (4) in the main text that $\mathcal{C}_\lambda(  0)\leq \mathcal{C}_\lambda(  \gamma)$ for every  $\gamma\in \Pi_{\leq}(\mu,\nu)$ and the strict inequality holds if $\gamma \neq 0$. Thus, $ \Gamma^0(\lambda)=\{0\}$ which gives  $\partial u(\lambda)=\{0\}$ and $u(\lambda)=-\int_\calT  w_1 \mu(dx) 
- \int_\calT  w_2  \nu(dx)$. 

We next show that $ \partial u(\lambda)=\{b\, \bar m\}$ for $\lambda$ positive enough.
Since $c(x,y)$ is bounded due to its continuity on $\calT\times\calT$, we can choose $\lambda\in\R$ such that $c(x,y)-\lambda<0$ for all $x,y\in \calT$. Let $\gamma \in \Gamma^0(\lambda)$. We claim that  either $\gamma_1=\mu$ or $\gamma_2=\nu$. Indeed, since otherwise we have  $\gamma_1(A_0)<\mu(A_0)$ and  $\gamma_2(B_0)<\nu(B_0)$ for some Borel sets $A_0, B_0\subset \calT$. Let $\tilde \gamma := \gamma +  [(\mu-\gamma_1)\chi_{A_0}]\otimes [(\nu-\gamma_2)\chi_{B_0}]$. Then, for any Borel set $A\subset \calT$ we have
\begin{align*}
 \tilde\gamma_1(A) = \gamma_1(A) + \mu(A\cap A_0) - \gamma_1(A\cap A_0) & =\gamma_1(A\setminus  A_0) +\mu(A\cap A_0)\\
 &\leq     \mu(A\setminus  A_0) +\mu(A\cap A_0)=\mu(A).
\end{align*}
Likewise, $\tilde\gamma_2(B)\leq \nu(B)$ for any Borel set $B\subset\calT$. Thus 
$\tilde \gamma \in \Pi_{\leq}(\mu,\nu)$. On the other hand, it is clear from 
Equation (4) in the main text and the facts $\gamma_1\leq \tilde\gamma_1$,  $\gamma_2\leq \tilde\gamma_2$, and  $c-\lambda <0$ that $\mathcal{C}_\lambda(  \tilde \gamma)< \mathcal{C}_\lambda(  \gamma)$. This is impossible and so the claim is proved. That is, either $\gamma_1=\mu$ or $\gamma_2=\nu$. It follows that $\gamma(\calT\times \calT) = \bar m$ for every $\gamma\in \Gamma^0(\lambda)$, and hence $\partial u(\lambda)=\{b\, \bar m\}$. This also means that $u$ is differentiable at $\lambda$ with  $u'(\lambda)=b\, \bar m$. 

Therefore, it remains to show that 
\begin{equation}\label{sub-inclusion}
(0,b\, \bar m)\subset \partial u(\R)= \bigcup_{\lambda\in\R}\big[b\, \gamma^\lambda_{min}(\calT\times\calT), b\, \gamma^\lambda_{max}(\calT\times\calT)\big].
\end{equation}
Assume by contradiction that there exists $m\in (0,b\, \bar m)$ such that $m\not\in \partial u(\lambda)$ for every $\lambda \in \R$. For convenience, we adopt the following notation: for sets $A, B\subset \R$ and $r\in\R$, we write $A< r$ if $a<r$ for every $a\in A$, and $A<B$ if $a<b$ for every $a\in A$ and $b\in B$. Let us consider the following two sets
\[
S_1 := \{\lambda: \partial u(\lambda) <m\}\quad \mbox{and}\quad S_2 := \{\lambda: \partial u(\lambda) >m\}.
\]
Then $\lambda\in S_1$ if $\lambda$ is negative enough, and  $\lambda\in S_2$ if $\lambda$ is positive enough. For any $\lambda_1\in S_1$ and $\lambda_2\in S_2$, we have $\partial u(\lambda_1) <m<\partial u(\lambda_2)$, and hence $\lambda_1 < \lambda_2$  by the monotonicity in i). That is,   $S_1 <S_2$ and so we obtain 
\[
\lambda^*:=\sup\{\lambda: \lambda\in S_1\}\leq \inf\{\lambda: \lambda\in S_2\}=:\lambda^{**}.
\]
If $\lambda^* <\lambda^{**}$, then for any $\lambda \in (\lambda^*, \lambda^{**})$ we have $\lambda \not\in S_1$ and $\lambda \not\in S_2$. Therefore, $\partial u(\lambda) \not< m$ and $\partial u(\lambda) \not> m$. Hence, we can find $m_1, m_2\in \partial u(\lambda) $ such that $m_1\geq m$ and $m_2 \leq m$. Thus, $m\in [m_2, m_1]\subset \partial u(\lambda)$ due to the convexity of the set $\partial u(\lambda)$. This contradicts our hypothesis, and we conclude that $\lambda^* =\lambda^{**}$. 

We next select sequences $\{\lambda^1_n\}\subset S_1$ and $\{\lambda^2_n\}\subset S_2$ such that  $\lambda^1_n\to \lambda^*$
and $\lambda^2_n\to \lambda^{**}=\lambda^*$. For each $n$, let 
\[
\gamma^n_{min} :=\argmin_{\gamma\in \Gamma^0(\lambda^1_n)} \gamma(\calT\times\calT)\quad \mbox{and}\quad 
\gamma^n_{max} :=\argmax_{\gamma\in \Gamma^0(\lambda^2_n)} \gamma(\calT\times\calT).
\]
 By compactness, there exist subsequences, still labeled as $\{\gamma^n_{min}\}$ and $\{\gamma^n_{max}\}$, and $\gamma^*, \gamma^{**}\in \Pi_{\leq}(\mu,\nu)$ such that  $\gamma^n_{min} \to \gamma^*$ weakly and $\gamma^n_{max} \to \gamma^{**}$ weakly. By arguing exactly as in i), we then obtain $\gamma^*, \gamma^{**}\in \Gamma^0(\lambda^*)$, $\gamma^n_{min}(\calT\times \calT) \to  \gamma^*(\calT\times \calT)$, and $\gamma^n_{max}(\calT\times \calT) \to  \gamma^{**}(\calT\times \calT)$. As $b\, \gamma^n_{min}(\calT\times \calT)<m$ due to $\lambda^1_n\in S_1$, we must have $b\, \gamma^*(\calT\times \calT)\leq m$. Likewise, we have $b\, \gamma^{**}(\calT\times \calT)\geq m$ as $b\, \gamma^n_{max}(\calT\times \calT)>m$ for all $n$.
 Hence, $m\in [b\, \gamma^*(\calT\times \calT), b\, \gamma^{**}(\calT\times \calT)]$. Since $\gamma^*, \gamma^{**}\in \Gamma^0(\lambda^*)$, we infer that $m\in \partial u(\lambda^*)$. This is a contradiction and the proof is complete. We note that since $\lambda^1_n\leq \lambda^*\leq \lambda^2_n$, we have from the monotonicity in i) that 
 \[
 \gamma^n_{min}(\calT\times \calT) \leq \gamma(\calT\times \calT)\leq \gamma^n_{max}(\calT\times \calT)
 \]
 for every $\gamma\in \Gamma^0(\lambda^*)$. By sending $n$ to infinity,  it follows that $\gamma^*(\calT\times \calT) \leq \gamma(\calT\times \calT)\leq \gamma^{**}(\calT\times \calT)$ for every $\gamma\in \Gamma^0(\lambda^*)$. That is,
 $
\gamma^* =\gamma^{\lambda^*}_{min}$ and 
$\gamma^{**} =\gamma^{\lambda^*}_{max}
$.
\end{proof}

\subsection{Proof for Lemma 3.2 in the main text}\label{sec:app_rep-formula}

\begin{proof}
We first observe for any Borel set $A\subset \calT$ that
\begin{align*}
\hat \gamma(A\times \{\hat s\}) = \hat \gamma(A\times\hat \calT) - \hat \gamma(A\times \calT)=\hat \mu(A) - \gamma(A\times \calT)=\mu(A) -\gamma_1(A)=\int_A (1-f_1) \mu(dx).
\end{align*}
For the same reason, we have  $\hat \gamma( \{\hat s\}\times B)=  \int_B (1-f_2) \nu(dx)$ for any set
 Borel set $B\subset \calT$. Also,
 \begin{align*}
\hat \gamma(\{\hat s\}\times \{\hat s\}) 
&= \hat \gamma(\hat\calT \times \{\hat s\}) -\hat \gamma(\calT\times \{\hat s\})  \\
&= \hat \gamma(\hat\calT\times \hat\calT)
- \hat \gamma(\hat\calT\times \calT)   -\big[  \hat\gamma(\calT\times \hat\calT) -  \hat \gamma(\calT\times \calT) \big]\\
&= \hat\mu(\hat\calT) - \hat\nu(\calT) - \hat\mu(\calT) +\gamma(\calT\times \calT)
=\gamma(\calT\times \calT).
\end{align*}
 
Since 
the Equation (6) in the main text is obviously true for sets of the form $A\times B$ with $A,B\subset \calT$ being Borel sets, we only need to verify it for sets of the following forms:  $(A\cup 
 \{\hat s\})\times B$, $A\times (B\cup \{\hat s\})$, $(A\cup \{\hat s\})\times (B\cup\{\hat s\})$ for Borel sets  $A,B\subset \calT$. We check it case by case as follows.
 
 Case 1:  Using the above observation, we have 
 \begin{align*}
\hat \gamma((A\cup 
 \{\hat s\})\times B) 
&=  \hat \gamma(A\times B) + \hat \gamma(\{\hat s\}\times B) =  \gamma(A\times B) + \int_B (1-f_2) \nu(dx).
\end{align*}
 Therefore, 
 the Equation (6) in the main text holds in this case.
 
 Case 2:  
 the Equation (6) in the main text is also true for this case because 
 \begin{align*}
\hat \gamma(A \times (B\cup 
 \{\hat s\})) 
&=  \hat \gamma(A\times B) + \hat \gamma(A\times \{\hat s\}) 
=  \gamma(A\times B) + \int_A (1-f_1) \mu(dx).
\end{align*}

Case 3:  
the Equation (6) in the main text is true as well since 
 \begin{align*}
\hat \gamma((A\cup 
 \{\hat s\}) \times (B\cup 
 \{\hat s\})) 
&=  \hat \gamma(A\times B) + \hat \gamma(A\times \{\hat s\})  + \hat \gamma(\{\hat s\}\times B) +\hat \gamma(\{\hat s\}\times \{\hat s\})\\
&=  \gamma(A\times B)  + \int_A (1-f_1) \mu(dx)
+\int_B (1-f_2) \nu(dx)
+\gamma(\calT\times \calT).
\end{align*}

Now as 
the Equation (6) in the main text holds, we obviously have $\gamma(U\times \calT)\leq \hat \gamma(U\times \calT)\leq \hat \gamma(U\times \hat \calT)=\hat\mu(U)=\mu(U)$ for any Borel set $U\subset \calT$. Likewise, $\gamma( \calT\times U )\leq \nu(U)$ for any Borel set $U\subset \calT$. Therefore, $\gamma\in \Pi_{\leq}(\mu,\nu)$.
\end{proof}

\subsection{Proof for Proposition 3.3 in the main text}\label{sec:app_distance-agree}

\begin{proof}
We first show that $\mathrm{KT}(\hat \mu,\hat\nu)  \leq\mathrm{ET}_{c,\lambda}(\mu,\nu) $.  

For any  $\gamma\in \Pi_{\leq}(\mu,\nu)$, let $\hat\gamma$ be given by 
the Equation (6) in the main text. Then, $\hat \gamma\in \Gamma(\hat \mu,\hat \nu)$ and 
\begin{align*}
\mathrm{KT}(\hat \mu,\hat\nu) \leq 
\int_{\hat \calT\times \hat \calT} \hat c(x,y) \hat\gamma(dx, dy)
&=b \int_{ \calT\times  \calT} [c(x,y)-\lambda] \gamma(dx, dy)\\
&\quad +\int_\calT  w_1 [1-f_1(x)] \mu(dx)  
+ \int_\calT  w_2 [1-f_2(x)] \nu(dx).
\end{align*}
It follows that $\mathrm{KT}(\hat \mu,\hat\nu)  \leq\mathrm{ET}_{c,\lambda}(\mu,\nu)$.
 
We next  show that $\mathrm{KT}(\hat \mu,\hat\nu)\geq \mathrm{ET}_{c,\lambda}(\mu,\nu)$.  To see this, for any  $\hat \gamma\in \Gamma(\hat \mu,\hat \nu)$ we let 
$\gamma$ be the restriction of  $\hat\gamma$ to $\calT$. Then by Lemma 
3.2 in the main text, we have $\gamma\in \Pi_{\leq}(\mu,\nu)$ and 
the Equation (6) in the main text holds. Consequently,
\begin{align*}
\int_{\hat \calT\times \hat \calT} \hat c(x,y) \hat\gamma(dx, dy)
&=b \int_{ \calT\times  \calT} [c(x,y)-\lambda] \gamma(dx, dy)\\
&\quad +\int_\calT  w_1 [1-f_1(x)] \mu(dx)  
+ \int_\calT  w_2 [1-f_2(x)] \nu(dx)\\
&\geq  \mathrm{ET}_{c,\lambda}(\mu,\nu).
\end{align*}
By taking the infimum over $\hat \gamma$, we infer that $\mathrm{KT}(\hat \mu,\hat\nu)  \geq\mathrm{ET}_{c,\lambda}(\mu,\nu)$. 

Thus we obtain 
\[
\mathrm{KT}(\hat \mu,\hat\nu)  = \mathrm{ET}_{c,\lambda}(\mu,\nu). 
\]
The relation about the optimal solutions also follows from the above arguments.
\end{proof}

\subsection{Proof for Theorem 3.4 in the main text}\label{sec:app_duality}

\begin{proof} From Proposition 
3.3 in the main text and the dual formulation for $\mathrm{KT}(\hat \mu,\hat\nu)$ proved in \cite[Corollary~2.6] {CM}, we have
\begin{align*}
\mathrm{ET}_{c,\lambda}(\mu,\nu)=\sup_{\substack{\hat u \in L^1(\hat \mu),\, \hat v\in L^1(\hat \nu)\\\hat u(x) +\hat v(y)\leq \hat c(x,y)}} \int_{\hat\calT} \hat u(x) \hat \mu(dx) +  \int_{\hat\calT} \hat v(x) \hat \nu(dx)=: I.
\end{align*}
Therefore, it is enough to prove that  $I=J$ where
\begin{equation*}
J :=
\sup_{(u,v) \in \K} \Big[ \int_{\calT}  u(x) \mu(dx) +  \int_{\calT}  v(x)  \nu(dx)\Big].
\end{equation*}

For  $(u,v)$ satisfying $u\leq w_1$, $v\leq w_2$ and $ u(x) + v(y)\leq  b[c(x,y) - \lambda]$, we extend  it to $\hat \calT$ by taking $\hat u(\hat s)=0$ and $\hat v(\hat s)=0$. Then, it is clear that $\hat u(x) +\hat v(y)\leq \hat c(x,y)$ for $x,y\in \hat\calT$, and 
\begin{align*}
I\geq \int_{\hat\calT} \hat u(x) \hat \mu(dx) +  \int_{\hat\calT} \hat v(x) \hat \nu(dx) 
=\int_{\calT}  u(x) \mu(dx) +  \int_{\calT}  v(x)  \nu(dx).
\end{align*}
It follows that $I\geq J$. In order to prove the converse, let $(\hat u,\hat v)$ be a maximizer for $I$. Then, by considering $(\hat u - \hat u(\hat s),\hat v +\hat u(\hat s))$, we can assume  that $\hat u(\hat s)=0$. Also, if we let 
$
v(y) :=\inf_{x\in\hat\calT} [\hat c(x,y)-\hat u(x)]$,
then $(\hat u,v)$ is still in the admissible class for $I$ and $\hat v(y)\leq v(y)$.  This implies that  $(\hat u,v)$ is also a maximizer for $I$. For these reasons,  we can assume w.l.g. that the maximizer $(\hat u,\hat v)$  has the following additional properties: $\hat u(\hat s)=0$ and 
\[
\hat v(y) =\inf_{x\in\hat\calT} [\hat c(x,y)-\hat u(x)]\quad \forall y\in \hat\calT.
\]
In particular, $\hat v(\hat s) = \inf_{x\in\hat\calT} [\hat c(x,\hat s)-\hat u(x)]$. For convenience, define $w_1(\hat s)=0$ and consider the following two possibilities.

Case 1: $\inf_{x\in\hat \calT} [w_1(x) - \hat u(x)]\geq 0$.
Then, since $\hat c(\hat s,\hat s)-\hat u(\hat s)=0$ and $ \inf_{x\in\calT} [\hat c(x,\hat s)-\hat u(x)]=\inf_{x\in\calT} [w_1(x) - \hat u(x)]\geq 0$, we have  $\hat v(\hat s) =0$. Also, $\hat v(y) \leq \hat c(\hat s,y)-\hat u(\hat s)\leq w_2(y)$ for all $ y\in \hat\calT$.
    For each $ y\in\calT$, by using  the facts $\hat  u \leq w_1$ and $\hat c(\hat s,y)-w_1(\hat s)=w_2(y)\geq 0$  we get 
\begin{align*}
\hat  v(y) \geq \inf_{x\in \hat \calT} [\hat c(x,y)-w_1(x)]=\inf_{x\in  \calT} \{b[c(x,y)-\lambda]-w_1(x)\}
= -b \lambda + \inf_{x\in \calT} [b\, c(x,y) -w_1(x)].
\end{align*}
  Thus $(\hat  u, \hat  v)\in\K$ and 
\begin{align*}
I = \int_{\hat\calT} \hat u(x) \hat \mu(dx) +  \int_{\hat\calT} \hat v(x) \hat \nu(dx) 
&=\int_{\calT}  \hat u(x) \hat \mu(dx) +  \int_{\calT}  \hat v(x)  \hat \nu(dx)
+ \hat v(\hat s) \mu(\calT) \\
&=  \int_{\calT}  \hat u(x)  \mu(dx) +  \int_{\calT}  \hat v(x)   \nu(dx)\leq  J.
\end{align*}

Case 2: $\inf_{x\in\hat \calT} [w_1(x) - \hat u(x)]< 0$. Then, by arguing as in Case 1,  we have  $\hat v(\hat s) =\inf_{x\in\calT} [w_1(x) - \hat u(x)]<0$ and
\begin{align}\label{a-connection}
I=\int_{\calT}  \hat v(x)  \nu(dx) + \int_{\calT}  \hat u(x)  \mu(dx)   
+\mu(\calT)  \inf_{\calT} [w_1 - \hat u].
\end{align}
Let $\tilde u(x) := \min\{\hat u(x), w_1(x)\}$. Then, it is obvious that  $\tilde  u(x) +\hat v(y)\leq \hat c(x,y)$ and $\tilde  u(\hat s)=0$.
Since $\inf_{x\in\calT} [w_1(x) - \hat u(x)]< 0$, there exists $x_0\in \calT$ such that $w_1(x_0) < \hat u(x_0)$. Thus, $\tilde u(x_0) = w_1(x_0)$ and hence  $\inf_{\calT} [w_1 -\tilde u] \leq 0$. As $\tilde u\leq w_1$, we infer further that $\inf_{\calT} [w_1 -\tilde u] = 0$.
 We also have
\begin{align*}
& \int_{\calT}  \hat u(x) \mu(dx)   
+\mu(\calT)  \inf_{\calT} [w_1 - \hat u]\\
&= \int_{\calT}  \tilde  u(x)  \mu(dx)   
+ \int_{\calT: \hat  u > w_1 }  [\hat u(x) -w_1(x)] \mu(dx)  
+ \mu(\calT)  \inf_{\calT} [w_1 - \hat u] \leq \int_{\calT}  \tilde  u(x)  \mu(dx) .
\end{align*}
This together with \eqref{a-connection} gives
\begin{align*}
I\leq \int_{\calT}  \tilde  u(x)  \mu(dx)   +\int_{\calT}  \hat v(x)   \nu(dx).
\end{align*}
Now let 
$\tilde  v(y) =\inf_{x\in \hat \calT} [\hat c(x,y)-\tilde  u(x)]$ for $ y\in \calT$. Then, $\hat v(y)\leq \tilde  v(y) \leq \hat c(\hat s,y)-\tilde  u(\hat s)=w_2(y)$ for $y\in \calT$.  For each $ y\in\calT$, by using  the facts $\tilde u \leq w_1$ and $\hat c(\hat s,y)-w_1(\hat s)=w_2(y)\geq 0$  we also get 
\begin{align*}
\tilde  v(y) \geq \inf_{x\in \hat \calT} [\hat c(x,y)-w_1(x)]=\inf_{x\in  \calT} \{b[c(x,y)-\lambda]-w_1(x)\}
= -b \lambda + \inf_{x\in \calT} [b\, c(x,y) -w_1(x)].
\end{align*}
It follows that $(\tilde u, \tilde v)\in \K$ and 
\begin{align*}
I\leq \int_{\calT}  \tilde  u(x)  \mu(dx)   +\int_{\calT}  \tilde  v(x)  \nu(dx)\leq J.
\end{align*}

Thus we conclude that $I=J$ and the theorem follows.
\end{proof}

\subsection{Proof for Corollary 3.5 in the main text}\label{sec:app_cor_duality}

\begin{proof}
Notice that as $w_i$ ($i=1,2$) is  $b$-Lipschitz, we have for every $x\in\calT$ that 
\begin{equation}\label{w-lipschitz}
-w_i(x) \leq \inf_{y\in\calT} \big[b\, d_\calT(x,y)- w_i(y)\big].
\end{equation}
For each $(u,v)\in\K$, let
\begin{align*}
v^*(x) 
&:= \inf_{y\in\calT} \big\{b[d_\calT(x,y)-\lambda]- v(y)\big\} = -b\lambda + \inf_{y\in\calT} \big[b\, d_\calT(x,y)- v(y)\big]\geq u(x),\\
 v^{**}(y) 
 &:= \inf_{x\in\calT} \big\{b[d_\calT(x,y)-\lambda]- v^*(x)\big\}=-b\lambda + \inf_{x\in\calT} \big[b\, d_\calT(x,y) - v^*(x)\big]\geq v(y).
\end{align*}
By using  $ -b \lambda + \inf_{x\in \calT} [b\, d_\calT(x,y) -w_1(x)]\leq v(y)\leq w_2(y)$ and  \eqref{w-lipschitz}, we obtain  for every $x\in\calT$ that
\begin{align*}
v^*(x)
&\leq -b\lambda -v(x)\leq -\inf_{y\in \calT} [b\, d_\calT(x,y) -w_1(y)]\leq w_1(x),\\
v^*(x)
&\geq -b\lambda + \inf_{y\in\calT} \big[b\, d_\calT(x,y)- w_2(y)\big]\geq -b\lambda -  w_2(x).
\end{align*}
 We also have  $v^*$ is  $b$-Lipschitz, i.e., $|v^*(x_1)-v^*(x_2) |\leq b \, d_\calT(x_1,x_2)$. Indeed, let $x_1, x_2\in \calT$. Then for any $\e>0$, there exists $y_1\in\calT$ such that $b\, d_\calT(x_1,y_1)- v(y_1) < v^*(x_1) +b\lambda +\e$. It follows that
\[
v^*(x_2) -v^*(x_1) \leq b\, d_\calT(x_2,y_1)- v(y_1)
+\e - [b\, d_\calT(x_1,y_1)- v(y_1)]\leq b \, d_\calT(x_1, x_2) +
\e.
\]
Since this holds for every  $\e>0$, we get  $v^*(x_2) -v^*(x_1) \leq b \, d_\calT(x_1, x_2)$. By interchanging the role of $x_1$ and $x_2$, we also obtain $v^*(x_1) -v^*(x_2) \leq b \, d_\calT(x_1, x_2)$. Thus,  $|v^*(x_1)-v^*(x_2) |\leq b \, d_\calT(x_1,x_2)$. Hence, we have shown  that $v^*\in  \mathbb{L'}$ with
\[
\mathbb{L'} :=\Big\{f\in C(\calT):\, 
 -b\lambda -w_2 \leq f \leq    w_1, \, |f(x)-f(y) |\leq b \, d_\calT(x,y)\Big\}.
 \]

We next claim $v^{**}=  - b\lambda - v^*$. For this, it is clear from the definition that  $v^{**}(y) \leq - b\lambda - v^*(y)$. On the other hand, from the Lipschitz property of $v^*$ we obtain 
\[
- v^*(y) \leq b \, d_\calT(x,y) - v^*(x) \quad \forall x\in \calT,
\]
which gives  $- b\lambda - v^*(y) \leq v^{**}(y)$. Thus, we conclude that $v^{**}=  - b\lambda - v^*$ as claimed.

From these, we obtain that
\begin{align*}
\int_{\calT}  u(x) \mu(dx) +  \int_{\calT}  v(x)  \nu(dx)
&\leq \int_{\calT}  v^*(x) \mu(dx) +  \int_{\calT}  v^{**}(x)  \nu(dx)\\
&= \int_{\calT}  v^*(x) \mu(dx) - \int_{\calT}  v^{*}(x)  \nu(dx) -b\lambda \nu(\calT)\\
&\leq - b\lambda \nu(\calT) + \sup \left\{ \int_\calT f (\mu - \nu) :\, f\in \mathbb{L'}  \right\}.
\end{align*}
This together with Theorem 
3.4 in the main text implies that $\mathrm{ET}_\lambda(\mu,\nu)\leq  - b\lambda \nu(\calT) + \sup \left\{ \int_\calT f (\mu - \nu) :\, f\in \mathbb{L'}  \right\}$.
To prove the converse, let $ f\in \mathbb{L'}$.
Define $u:= f$ and $ v:= -b\lambda -f$. Then,
we have $u(x) \leq w_1(x)$, $v(x)\leq -b\lambda-[-b\lambda -w_2(x)]=w_2(x)$, and 
\[
v(x)\geq -b\lambda- w_1(x)\geq -b\lambda + \inf_{y\in \calT} [b\, d_\calT(x,y) -w_1(y)].
\]
Also, the Lipschitz property of $f$ gives 
\[
u(x) + v(y) =-b\lambda + f(x)- f(y)\leq b[ d_\calT(x,y)-\lambda]\quad \forall x,y\in\calT.
\]
Thus  $(u,v) \in \K$, and hence we obtain from Theorem 
3.4 in the main text that
 \begin{align*}
 - b\lambda \nu(\calT) +  \int_\calT f (\mu - \nu) 
 =
\int_{\calT}  u(x) \mu(dx) +  \int_{\calT}  v(x)  \nu(dx)
\leq \mathrm{ET}_\lambda(\mu,\nu).
\end{align*}
As this holds for every $f\in \mathbb{L'}$, we get 
\[
- b\lambda \nu(\calT) + \sup \left\{ \int_\calT f (\mu - \nu) :\, f\in \mathbb{L'}  \right\}
\leq \mathrm{ET}_\lambda(\mu,\nu).
\]
Thus, we have shown that
\begin{equation}\label{non-summetric}
\mathrm{ET}_\lambda(\mu,\nu)= - b\lambda \nu(\calT) + \sup \left\{ \int_\calT f (\mu - \nu) :\, f\in \mathbb{L'}  \right\}.
\end{equation}
Now consider $f= \tilde f - \frac{b\lambda}{2}$. Then, $f\in \mathbb{L'} $ if and only if $\tilde f\in \mathbb{L}$. Moreover,
 \[
 \int_\calT f (\mu - \nu)  =- \frac{b\lambda}{2}\big[\mu(\calT) -\nu(\calT)\big] + \int_\calT \tilde f (\mu - \nu) .
 \]
Therefore, the conclusion of the corollary follows from \eqref{non-summetric}.
\end{proof}

\subsection{Proof for Proposition 3.7 in the main text}\label{sec:app_geodesic-space-part1}

In order to prove Proposition 
3.7 in the main text, we need the following  auxiliary result.
\begin{lemma}\label{iff=0}  
Assume that $w_1>0$  and $w_2>0$. Then, $d(\mu,\nu)=0$ implies that  $\mu=\nu$. 
\end{lemma}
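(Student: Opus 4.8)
The plan is to convert the hypothesis $d(\mu,\nu)=0$ into a statement about integrals of a rich family of test functions, and then leverage the \emph{strict} positivity of the weights. By the definition of $d$ in Proposition~\ref{geodesic-space-part1} together with Corollary~\ref{cor:duality}, the two terms $\frac{b\lambda}{2}[\mu(\calT)+\nu(\calT)]$ cancel, so that
\[
d(\mu,\nu)=\sup\left\{\int_\calT f\,(\mu-\nu):\, f\in\mathbb{L}\right\}.
\]
Since $w_1,w_2,\lambda\ge 0$, the zero function lies in $\mathbb{L}$, whence $d(\mu,\nu)\ge 0$ always; and the assumption $d(\mu,\nu)=0$ is precisely the assertion that $\int_\calT f\,(\mu-\nu)\le 0$ for every $f\in\mathbb{L}$.

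Next I would exploit that $w_1$ and $w_2$ are positive. As $\calT$ is compact and $w_1,w_2$ are continuous (being $b$-Lipschitz), they attain positive minima $c_1,c_2>0$. Given any $b$-Lipschitz $g\in C(\calT)$, it is bounded, and for $\varepsilon\in(0,1]$ small enough that $\varepsilon\|g\|_{\infty}\le\min\{c_1,c_2\}$ both $\varepsilon g$ and $-\varepsilon g$ are $b$-Lipschitz and satisfy $-w_2-\frac{b\lambda}{2}\le \pm\varepsilon g\le w_1+\frac{b\lambda}{2}$; hence $\pm\varepsilon g\in\mathbb{L}$. Applying the inequality from the previous step to both $\varepsilon g$ and $-\varepsilon g$ and dividing by $\varepsilon$ yields $\int_\calT g\,(\mu-\nu)=0$ for every $b$-Lipschitz $g$.

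Finally I would upgrade this to all continuous functions and conclude. Every Lipschitz function is a scalar multiple of a $b$-Lipschitz one, so $\int_\calT h\,(\mu-\nu)=0$ for every Lipschitz $h$; since Lipschitz functions are uniformly dense in $C(\calT)$ on the compact tree and $\mu,\nu$ are finite, this extends to all $h\in C(\calT)$. The Riesz representation theorem then forces the finite signed measure $\mu-\nu$ to vanish, i.e.\ $\mu=\nu$. The only delicate point---and exactly where the hypothesis $w_1,w_2>0$ is essential---is verifying that the rescaled functions stay inside $\mathbb{L}$: if a weight vanished somewhere, the envelope $-w_2-\frac{b\lambda}{2}\le f\le w_1+\frac{b\lambda}{2}$ could pinch to zero and obstruct nonzero test functions there. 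One therefore also relies on the compactness of $\calT$ to secure the uniform positive lower bounds $c_1,c_2$ and the density of Lipschitz functions.
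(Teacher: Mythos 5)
Your proof is correct, but it takes a genuinely different route from the paper's. You argue entirely on the dual side: via Corollary~\ref{cor:duality} you identify $d(\mu,\nu)$ with $\sup_{f\in\mathbb{L}}\int_\calT f\,(\mu-\nu)$, use the uniform positive lower bounds of $w_1,w_2$ on the compact tree to place $\pm\varepsilon g$ in $\mathbb{L}$ for every $b$-Lipschitz $g$, and finish with density of Lipschitz functions in $C(\calT)$ and Riesz representation. The paper instead argues in the primal: it takes an optimal plan $\gamma^0$ for $\mathrm{ET}_\lambda(\mu,\nu)$, writes $d(\mu,\nu)=0$ as an upper bound for the sum of three nonnegative terms (the two weighted entropies and the transport cost), so each vanishes; strict positivity of $w_1,w_2$ then forces $\gamma^0_1=\mu$ and $\gamma^0_2=\nu$, and the vanishing transport cost forces $\gamma^0$ to concentrate on the diagonal, giving $\int\varphi\,d\mu=\int\varphi\,d\nu$ for all continuous $\varphi$. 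The trade-offs: your route leans on the duality theorem and hence on its standing hypotheses ($\lambda\ge 0$ and $w_1,w_2$ being $b$-Lipschitz), which are not literally in the lemma's statement but do hold in the context where the lemma is invoked (Proposition~\ref{geodesic-space-part1}); the paper's route needs neither Lipschitzness of the weights nor duality, only the existence of an optimal plan, and it yields extra structural information (the optimal plan is diagonal with full marginals). Both arguments, yours and the paper's, implicitly require $b>0$: the paper to deduce diagonal support from $b\int d_\calT\,d\gamma^0=0$, and you to rescale Lipschitz functions into the class $\mathbb{L}$ (indeed the statement fails when $b=0$, e.g.\ for distinct measures of equal mass). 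Your closing remark about where positivity of the weights enters is exactly right and mirrors the role it plays in the paper's argument.
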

\begin{proof}
Assume that  $d(\mu,\nu)=0$. 
Let $\gamma^0$ be  an optimal plan for $ \mathrm{ET}_\lambda(\mu,\nu)$, and set  $m:=\gamma^0(\calT\times\calT)$. Then, $m\leq \min\{ \mu(\calT), \nu(\calT)\}$, and hence we obtain from  
Problem (3) in the main text that
\begin{align*} 
&\int_\calT  w_1 [1-f_1(x)] \mu(dx)  
+ \int_\calT  w_2 [1-f_2(x)] \nu(dx) 
+ b \, \int_{\calT \times \calT} d_\calT(x,y) \gamma^0(dx, dy)\\
&=\mathrm{ET}_{\lambda}(\mu,\nu) +\lambda b m \leq \mathrm{ET}_{\lambda}(\mu,\nu) +\frac{b\lambda}{2}\big[ \mu(\calT) +  \nu(\calT)\big] =d(\mu,\nu)=0.
\end{align*}
Thus,
\begin{align*}
\int_\calT  w_1 [1-f_1(x)] \mu(dx)  
=\int_\calT  w_2 [1-f_2(x)] \nu(dx) =\int_{\calT \times \calT} d_\calT(x,y)\gamma^0(dx, dy) =0.
\end{align*}
 Since $w_1$ and  $w_2$ are positive, it follows in particular  that $f_1=1$ $\mu$-a.e.  and  $f_2=1$ $\nu$-a.e. That is,  $\gamma^0_1 =\mu$ and $\gamma^0_2 =\nu$. 
Moreover, the above last identity implies that $\gamma^0$ is supported on the diagonal $(y=x)$.  Therefore,  for any continuous function $\varphi$ on $\calT$ we have
\[
\int_\calT \varphi(x) \mu(dx)=\int_{\calT \times \calT}  \varphi(x) \gamma^0(dx, dy)=\int_{\calT \times \calT}  \varphi(y) \gamma^0(dx, dy)=\int_\calT \varphi(y) \nu(dy).
\]
We thus conclude that $\mu=\nu$.
\end{proof}

\begin{proof}{[Of Proposition 
3.7 in the main text]}

i)  This follows immediately from Corollary 
3.5 in the main text.

ii)  By Corollary 
3.5 in the main text, it  is clear that   $d(\mu,\nu)\geq 0$ and $d(\mu,\mu)= 0$. Also, if $d(\mu,\nu)=0$, then by Lemma~\ref{iff=0}, we have  $\mu =\nu$.  It is obvious that $d$ satisfies the triangle inequality. 

iii) Due to the assumption $w_1 = w_2$ we have  $f\in \mathbb{L}$ if and only if $-f\in \mathbb{L}$. It follows that  $d(\mu,\nu)= d(\nu,\mu)$. This together with ii) implies that $(\calM(\calT), d)$ is a metric space. Its completeness follows from \cite[Proposition 4]{P1}.  As a complete metric space, it is well known that $(\calM(\calT), d)$ is a geodesic space if and only if for every $\mu,\nu \in\calM(\calT)$ there exists $\sigma\in \calM(\calT)$ such that 
\[
d(\mu,\sigma)= d(\nu,\sigma) =\frac12 d(\mu,\nu).
\]
To verify the latter, take $\sigma :=\frac{\mu +\nu }{2}$. Then using Corollary 
3.5 in the main text, we  obtain 
\[
d(\mu,\sigma)= \frac12 \sup_{f\in \mathbb{L} } \int_\calT f(\mu-\nu) = \frac12 d(\mu,\nu)
\]
and
\[
d(\nu,\sigma)= \frac12 \sup_{f\in \mathbb{L} } \int_\calT f(\nu-\mu) = \frac12 d(\nu,\mu)= \frac12 d(\mu,\nu).
\]
\end{proof}

\subsection{Proof for Proposition 3.8 in the main text}\label{sec:app_tildeET}

\begin{proof}
Observe that
\begin{align*}
\widetilde{\mathrm{ET}}_\lambda^\alpha(\mu,\nu) 
&=- \frac{b\lambda}{2}\big[ \mu(\calT) +  \nu(\calT)\big] \\
&\quad +  \sup \Big\{
s[\mu(\calT) -\nu(\calT)] :\, s \in \big[ - \frac{b\lambda}{2} -w_2(r)+\alpha, w_1(r) + \frac{b\lambda}{2} -\alpha\big] \Big\}\\
&\quad + \sup \left\{ \int_\calT \Big[  \int_{[r,x]} g(y) \omega(dy\Big] (\mu - \nu)(dx) :\,    \|g\|_{L^\infty(\calT)}\leq b  \right\}.
\end{align*}
The first supremum equals to $[w_1(r) +\frac{b\lambda}{2}-\alpha] [ \mu(\calT)- \nu(\calT)]$  if $ \mu(\calT)\geq \nu(\calT)$ and equals to $-[w_2(r) +\frac{b\lambda}{2}-\alpha] [ \mu(\calT)- \nu(\calT)]$ if 
 $\mu(\calT)< \nu(\calT)$. On the other hand,
by the same arguments as in \cite[p.575-576]{EM}, we see that the second supremum  equals to 
$\int_{\calT} | \mu(\Lambda(x)) -  \nu(\Lambda(x))| \, \omega(dx)$. Putting them together, we obtain the desired formula for $\widetilde{\mathrm{ET}}_\lambda^\alpha(\mu,\nu) $.
\end{proof}

\subsection{Proof for Proposition 3.9 in the main text}\label{sec:app_bound_tildeET}

\begin{proof}
The inequality $\mathrm{ET}_\lambda(\mu,\nu) \leq  \widetilde{\mathrm{ET}}_\lambda^0(\mu,\nu)$  holds  due to  $\mathbb L \subset  \mathbb L_0$ and Corollary 
3.5 in the main text. Next, let  
\[
2b L(\calT)\leq \alpha\leq \frac12 [b\lambda + w_1(r) + w_2(r)].
\]
Then, thanks to Corollary 
3.5 in the main text, the stated lower bound will follow if  $\mathbb L_\alpha\subset  \mathbb L $. This is achieved if we can show that any  $f\in \mathbb L_\alpha$ satisfies $-w_2 - \frac{b\lambda}{2}  \leq f \leq    w_1 + \frac{b\lambda}{2}$. Indeed, for such function  $f$ we have 
\[
f(x)=s +  \int_{[r,x]} g(y) \omega(dy),
\]
with  $s \in \big[ -w_2(r)- \frac{b\lambda}{2} +\alpha, w_1(r) + \frac{b\lambda}{2} -\alpha\big] $ and $  \|g\|_{L^\infty(\calT)}\leq b$. This together the $b$-Lipschitz property of $w_1, w_2$ gives for every $x\in\calT$ that 
\[
f(x) \leq s +\|g\|_{L^\infty(\calT)} \omega([r,x])\leq
w_1(r) +\frac{b\lambda}{2}-\alpha +b L(\calT)\leq w_1(x)+\frac{b\lambda}{2} -\alpha + 2b L(\calT) \leq w_1(x)+\frac{b\lambda}{2}
\]
and
\begin{align*}
f(x)&\geq s -\|g\|_{L^\infty(\calT)} \omega([r,x])\geq
- w_2(r)-\frac{b\lambda}{2}+\alpha -b  L(\calT)\\
&\geq - w_2(x)-\frac{b\lambda}{2}+\alpha -2b  L(\calT)\geq  - w_2(x)-\frac{b\lambda}{2}.
\end{align*}
It follows that $f\in   \mathbb L $. Thus,   $\mathbb L_\alpha\subset  \mathbb L $ and we obtain 
\[
\widetilde{\mathrm{ET}}_\lambda^\alpha(\mu,\nu)  \leq \mathrm{ET}_\lambda(\mu,\nu).
\]  

\end{proof}

\subsection{Proof of Proposition 3.10 in the main text}\label{sec:app_geodesic-space-part2}

We begin with the following  auxiliary result.
\begin{lemma}\label{equal-measure}
Let $\mu,\nu\in \calM(\calT)$. Then, $\mu=\nu$ if and only if  $\mu(\Lambda(x)) = \nu(\Lambda(x))$ for every $x$ in $\calT$.
\end{lemma}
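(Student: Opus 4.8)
The forward implication is immediate: if $\mu=\nu$ as Borel measures, then they agree on every Borel set, in particular on each subtree $\Lambda(x)$.

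For the converse, the plan is to invoke the uniqueness theorem for finite measures through a $\pi$-system argument. Set $\calA := \{\Lambda(x) : x\in\calT\}\cup\{\emptyset\}$. First I would show that $\calA$ is closed under finite intersections. Given $x_1,x_2\in\calT$, a point $z$ lies in $\Lambda(x_1)\cap\Lambda(x_2)$ if and only if both $x_1$ and $x_2$ belong to the path $[r,z]$; since $[r,z]$ is a single path in the tree, this forces $x_1$ and $x_2$ to be comparable, i.e. one lies on the root-path of the other. Hence either one of the two subtrees contains the other, giving $\Lambda(x_1)\cap\Lambda(x_2)=\Lambda(x_i)$ for the appropriate $i$, or the two are incomparable and $\Lambda(x_1)\cap\Lambda(x_2)=\emptyset$. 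In every case the intersection lies in $\calA$, so $\calA$ is a $\pi$-system.

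Next I would verify that $\sigma(\calA)$ is the full Borel $\sigma$-algebra $\mathcal{B}(\calT)$. Each $\Lambda(x)$ is closed, so $\sigma(\calA)\subseteq\mathcal{B}(\calT)$. For the reverse inclusion, fix an edge $e$ and parametrize its points by arclength from the endpoint nearer the root; for two parameters $s<t$ with corresponding points $x_s,x_t$ one checks that $\Lambda(x_s)\setminus\Lambda(x_t)$ equals exactly the half-open sub-arc of $e$ running from $x_s$ up to (but not including) $x_t$. Such half-open sub-arcs generate the Borel sets of the arc $e$, and since $\calT$ is a finite union of edges together with its finitely many nodes, each node being recoverable from subtrees by countable set operations, we obtain $\mathcal{B}(\calT)\subseteq\sigma(\calA)$.

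With these two facts in hand I would conclude as follows. Since $\calT=\Lambda(r)\in\calA$, the hypothesis $\mu(\Lambda(x))=\nu(\Lambda(x))$ for all $x$ already forces $\mu(\calT)=\nu(\calT)$, so $\mu$ and $\nu$ are finite measures of equal total mass that agree on the generating $\pi$-system $\calA$. The Dynkin $\pi$-$\lambda$ theorem then yields $\mu=\nu$ on $\sigma(\calA)=\mathcal{B}(\calT)$, completing the proof. The main obstacle is the generation step: one must argue carefully, using the geometry of the metric tree, that differences of nested subtrees recover the half-open arcs on each edge and hence all Borel sets; granted this, the $\pi$-system property and the final Dynkin argument are routine.
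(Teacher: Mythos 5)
Your proposal is correct, and it reaches the conclusion by a route that is organized differently from the paper's, even though both rest on the same geometric facts. The paper argues concretely with measures: using continuity from above of finite measures it first shows $\mu(\{a\})=\nu(\{a\})$ for every point $a$ (peeling off the subtrees $\Lambda(a_n)$, resp.\ $\Lambda(a_n^i)$, hanging just below $a$), then gets $\mu([x,y))=\nu([x,y))$ for $x,y$ on a common edge by subtracting subtree masses, and finally concludes $\mu=\nu$ from agreement on all segments $[x,y]$ --- that last step being left implicit. You instead package the identical geometric observations (that $\Lambda(x_s)\setminus\Lambda(x_t)$ is a half-open sub-arc, and that nodes are recovered from subtrees by countable set operations) into a proof that $\calA=\{\Lambda(x)\}\cup\{\emptyset\}$ is a $\pi$-system generating $\mathcal{B}(\calT)$, and then invoke the Dynkin $\pi$--$\lambda$ theorem once, using $\calT=\Lambda(r)\in\calA$ to match total masses. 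What your version buys is that the final uniqueness step is made fully explicit and rigorous, where the paper's closing ``it follows that $\mu=\nu$'' tacitly appeals to the same kind of uniqueness principle for the family of segments and singletons; what the paper's version buys is a more elementary, self-contained computation that never names $\pi$-systems and shows directly how point and interval masses are determined by subtree masses. Your sketch of the generation step ($\mathcal{B}(\calT)\subseteq\sigma(\calA)$) is the one place that would need the care you yourself flag --- in particular, recovering the far endpoint $\{v\}$ of each edge (via the countable intersection $\bigcap_n\bigl[\Lambda(v)\setminus\bigcup_i\Lambda(a_n^i)\bigr]$) so that the half-open arcs plus node singletons generate all Borel subsets of each closed edge, and hence of the finite union $\calT$ --- but this is exactly the computation the paper performs, so the gap is routine rather than genuine.
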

\begin{proof}
It is obvious that $\mu=\nu$ implies that $\mu(\Lambda(x)) = \nu(\Lambda(x))$  for every $x$ in $\calT$.
Now assume that $\mu(\Lambda(x)) = \nu(\Lambda(x))$ for every $x$ in $\calT$. We first claim that  $\mu(\{a\}) = \nu(\{a\})$ for any $a\in\calT$. Indeed, if $a$ is not a node then we have $ \Lambda(a)\setminus \Lambda(a_n) \downarrow \{a\}$, where $\{a_n\}_{n=1}^\infty$ is a sequence of distinct points on the same edge as $a$ and converges to $a$ from below. Hence,
\[
\mu(\{a\}) = \lim_{n\to\infty} \big[ \mu(\Lambda(a)) -\mu(\Lambda(a_n))\big]
=\lim_{n\to\infty} \big[ \nu(\Lambda(a)) -\nu(\Lambda(a_n))\big]=\nu(\{a\}).
\]
In case $a$ is a common node for edges $e_1,..., e_k$, then we have $ \Gamma(a)\setminus \cup_{i=1}^k\Gamma(a^i_n) \downarrow \{a\}$, where $\{a^i_n\}_{n=1}^\infty$ is a sequence of distinct points on edge $e_i$ that converges to $a$ from below. Then, we obtain
\begin{align*}
  \mu(\{a\}) = \lim_{n\to\infty} \big[ \mu(\Lambda(a)) -\sum_{i=1}^k \mu(\Lambda(a^i_n))\big]
=\lim_{n\to\infty} \big[ \nu(\Lambda(a)) -\sum_{i=1}^k\nu(\Lambda(a^i_n))\big]=\nu(\{a\}).  
\end{align*}
Thus, the claim is proved. On the other hand, for any points $x,y$ belonging to the same edge
\[
\mu([x,y))= \mu(\Lambda(x))-\mu( \Lambda(y))=\nu(\Lambda(x))-\nu( \Lambda(y))
= \nu([x,y)).
\] 
Thus, by combining them, we infer further that 
$\mu([x,y])=  \nu([x,y])$ for any $x,y\in\calT$. It follows that $\mu=\nu$, and the proof is complete.
\end{proof}

\begin{proof}{[Of Proposition 
3.10 in the main text]}
We note first that the quantity $d_\alpha$ depends only on the values of the weights at the root $r$ of the tree. This comes from the fact that only $w_1(r)$ and $w_2(r)$ are used in the definition of $\mathbb L_\alpha$. The proofs of i) and iii) are exactly the same as that of Proposition 
3.7 in the main text.

For ii), it follows from the fact 
\begin{equation*}
d_\alpha(\mu,\nu) =  \sup \left\{ \int_\calT f (\mu - \nu):\, f\in \mathbb{L}_\alpha  \right\}
\end{equation*}
that   $d_\alpha( \mu,\nu)\geq 0$, $d_\alpha(\mu,\mu)= 0$, and $d_\alpha$ satisfies the triangle inequality.  Also, if $d_\alpha(\mu,\nu)=0$, then by  Proposition 
3.8 in the main text, we get 
\[
 \big[w_i(r) +\frac{b\lambda}{2} -\alpha\big] |\mu(\calT)-\nu(\calT)|  + \int_{\calT} | \mu(\Lambda(x)) -  \nu(\Lambda(x))| \, \omega(dx)
 =0.
\]
As $ \big[w_i(r) +\frac{b\lambda}{2} -\alpha\big]> 0$ by the assumption, we  must have 
$\mu(\calT) =\nu(\calT)$ and $\int_{\calT} | \mu(\Lambda(x)) -  \nu(\Lambda(x))| \, \omega(dx)
 =0$. Therefore,  $\mu(\Lambda(x)) = \nu(\Lambda(x))$ for every  $x\in\calT$.
By using Lemma~\ref{equal-measure},  we then conclude that $\mu=\nu$. 

Alternatively, we can argue as follows. Assume that $d_\alpha(\mu,\nu)=0$. Since
\[
\mathbb L_\alpha \supset \tilde{\mathbb L} :=\left\{ f: \, -w_2(r)- \frac{b\lambda}{2}+\alpha\leq f(x)\leq  w_1(r) + \frac{b\lambda}{2} -\alpha,\,   \|f\|_{Lip(\calT)}\leq b \right\},
\]
we have 
\[
0\leq \sup_{f\in \tilde{\mathbb L} } \int_\calT f (\mu - \nu) \leq  d_\alpha(\mu,\nu)=0.
\]
Thus, $\sup_{f\in \tilde{\mathbb L} } \int_\calT f (\mu - \nu) =0$. Then, by applying Corollary 
3.5 in the main text and Lemma~\ref{iff=0} for  constant 
weights $\tilde w_1 :=  w_1(r) + \frac{b\lambda}{2} -\alpha>0$ and $\tilde w_2 :=  w_2(r) + \frac{b\lambda}{2} -\alpha>0$, we obtain that $\mu=\nu$. 
\end{proof}

\subsection{Proof for Proposition 3.11 in the main text}\label{sec:app_negative_definite}

\begin{proof}

Let $\mathbf{f}(x_i, x_j) = \tilde{a}(x_i + x_j)$ for $\tilde{a}, x_i, x_j \in \mathbb{R}$. We first prove that $\mathbf{f}$ is negative definite.

For all $n \ge 2$, for $c_1, c_2, \dotsc, c_n$ such that $\sum_{i=1}^n c_i = 0$. Given $x_1, x_2, \dotsc, x_n \in \mathbb{R}$, we have
\[
\sum_{i, j} c_i c_j \mathbf{f}(x_i, x_j) = \sum_{i,j} c_i c_j \tilde{a} x_i + \sum_{i,j} c_i c_j \tilde{a} x_j \le 0. 
\]
Therefore, $\mathbf{f}$ is negative definite.

From Proposition 
3.8 in the main text, we have
\[
\widetilde{\mathrm{ET}}_\lambda^\alpha(\mu,\nu) 
= - \frac{b\lambda}{2}\big[ \mu(\calT) +  \nu(\calT)\big]  +   \big[w_i(r) +\frac{b\lambda}{2} -\alpha\big] |\mu(\calT)-\nu(\calT)|  + \int_{\calT} | \mu(\Lambda(x)) -  \nu(\Lambda(x))| \, \omega(dx).
\]

The first term is negative definite since $\mathbf{f}$ is negative definite. Additionally, the second and third terms are equivalent to the weighted $\ell_1$ distance with nonnegative weights (i.e., $\alpha \le w_i(r) +\frac{b\lambda}{2}$ and lengths of edges in tree $\calT$ are nonnegative). Therefore, the second  and third terms are also negative definite. Hence, $\widetilde{\mathrm{ET}}_{\lambda}^{\alpha}$ is negative definite.  

From Proposition 
3.10 in the main text, we have
\[
d_{\alpha}(\mu, \nu) = \widetilde{\mathrm{ET}}_\lambda^\alpha(\mu, \nu) + \frac{b\lambda}{2}\big[\mu(\calT) + \nu(\calT) \big].
\]
Both terms are negative definite. Therefore, $d_\alpha$ is also negative definite. 

\end{proof}

\section{Further Experimental Results}\label{supp:sec:experiments}

In this section, we illustrate further experimental results.

\subsection{Further Results on the Efficient Approximation of $\widetilde{\ET}_{\lambda}^{0}$ for $\ET_\lambda$}
In this section, we consider some further setups. 

\paragraph{Change $\lambda$.}

In Figure~\ref{fg:Diff_KT_tET_Lambda}, we use the same setup as in Figure 
2 in the main text, but set the Lipschitz $a_1=\frac{b}{2}=0.5$ for $w_1, w_2$. It shows that when $\lambda$ is increased, $\tildeET_{\lambda}^{0}$ is farther to $\ET_\lambda$.

\begin{figure}[h]
\vspace{-8pt}
     \centering
     \begin{subfigure}[t]{0.32\textwidth}
    \centering
      \includegraphics[width=0.65\textwidth]{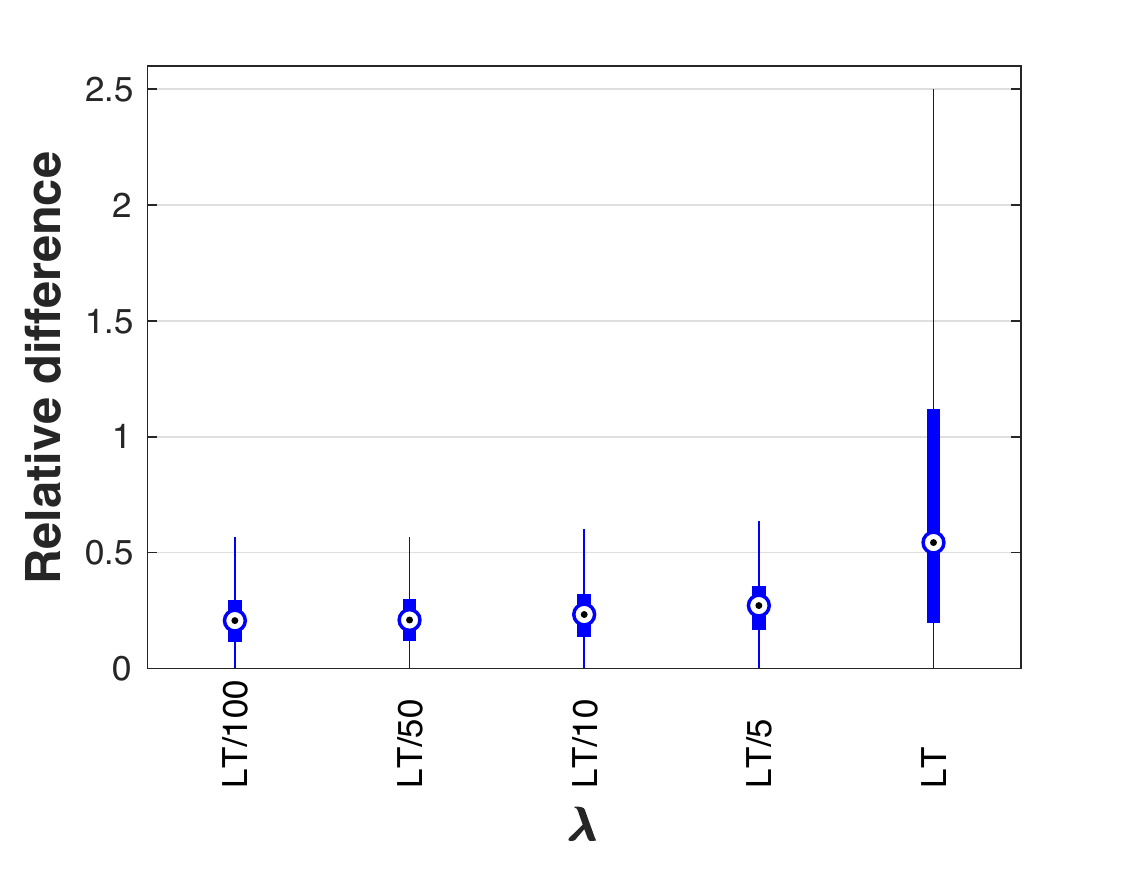}
\vspace{-4pt}        
  \caption{}
  \label{fg:Diff_KT_tET_Lambda}
    \end{subfigure} 
    \hfill
     \begin{subfigure}[t]{0.32\textwidth}
    \centering
 \includegraphics[width=0.65\textwidth]{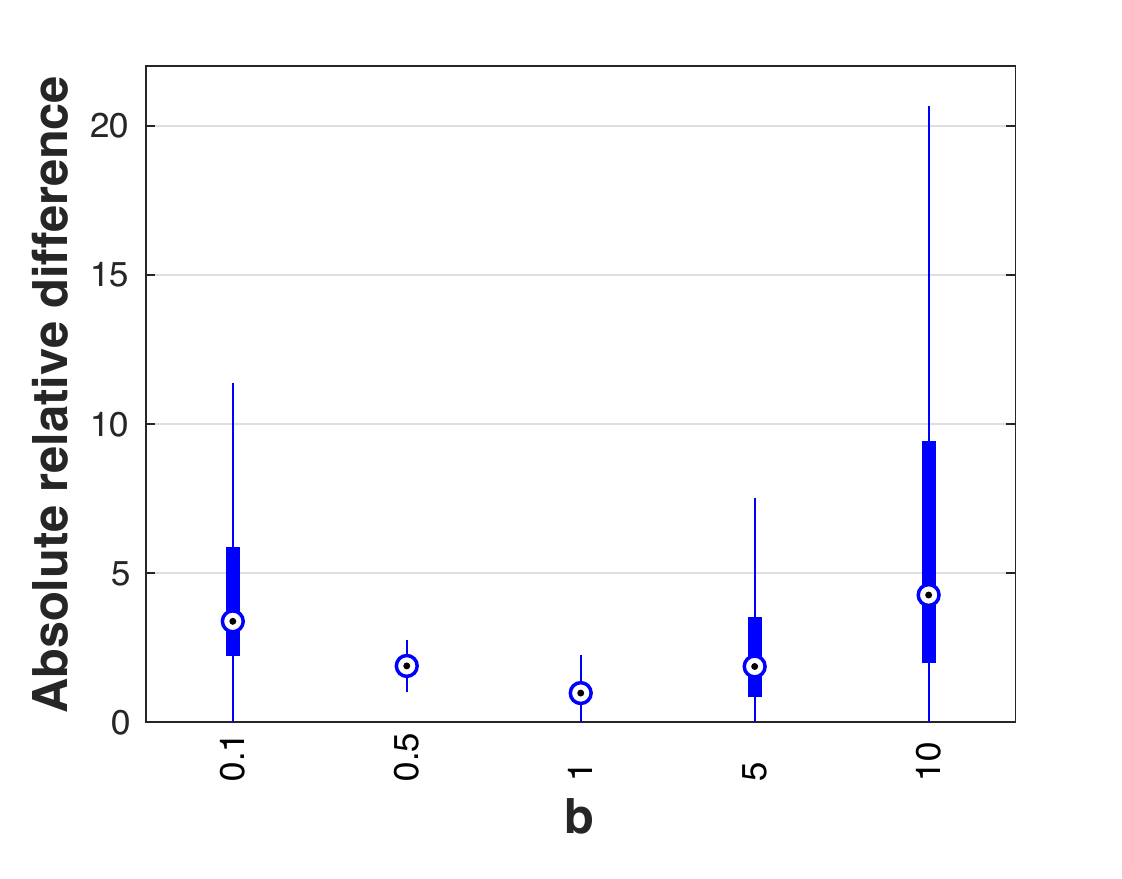}
\vspace{-4pt}        
  \caption{}
  \label{fg:Diff_KT_tET_b_ConstantW}
    \end{subfigure} 
   \hfill
    \begin{subfigure}[t]{0.32\textwidth}
        \centering
    \includegraphics[width=0.65\textwidth]{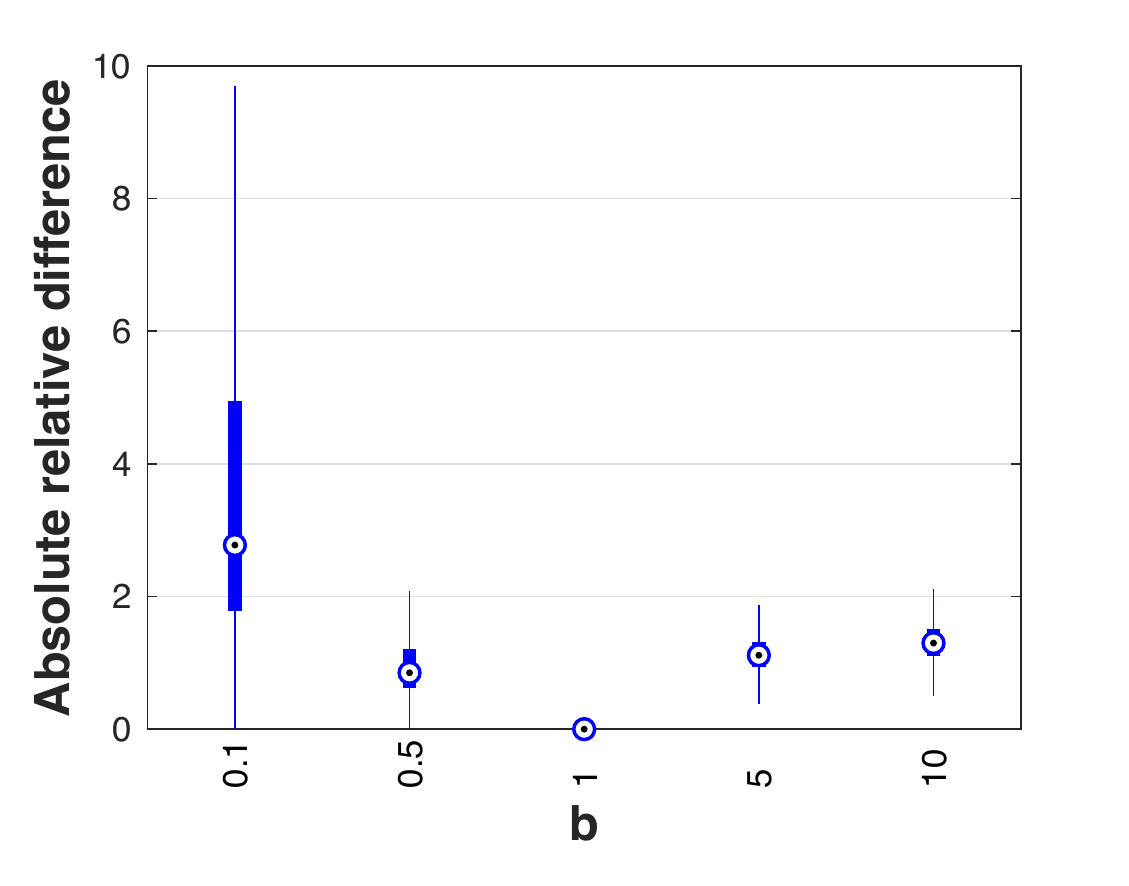}
 \vspace{-4pt}   
 \caption{}
  \label{fg:Diff_KT_tET_bc}
    \end{subfigure}
    
\vspace{-4pt}   
    \caption{In (\subref{fg:Diff_KT_tET_Lambda}), an illustration about the relative difference between $\tildeET_{\lambda}^{0}$ and $\ET_\lambda$ w.r.t. $\lambda$. LT is the longest path from a root to a node in tree $\Tt$ (LT := $L_{\Tt}$). Lipchitz for functions $w_1, w_2$ is $a_1=0.5$ (where $b=1$). In (\subref{fg:Diff_KT_tET_b_ConstantW}, \subref{fg:Diff_KT_tET_bc}), an illustration about the absolute relative difference between $\tildeET_{\lambda}^{0}$ and $\ET_\lambda$, i.e., $(\tildeET_{\lambda}^{0} - \ET_\lambda)/\left|\ET_\lambda\right|$, w.r.t. $b$. For (\subref{fg:Diff_KT_tET_b_ConstantW}), the weight functions $w_1, w_2$ are set constants ($a_1=0$, or $w_1 = w_2 = a_0$) while for (\subref{fg:Diff_KT_tET_bc}), the weight functions $w_1, w_2$ are set with largest Lipchitz ($a_1=b$).}
 \vspace{-10pt}
\end{figure}

\paragraph{Change $b$.} We consider 2 following cases:

\paragraph{$\bullet$ For constant functions $w_1, w_2$ (with $a_1=0$).} We use the same setup as in Figure 
1 in the main text, but with constant functions for $w_1, w_2$ (i.e., $a_1=0$, or $w_1 = w_2 = a_0$), and change $b$. We set $\lambda=a_0=1$. In Figure~\ref{fg:Diff_KT_tET_b_ConstantW}, we illustrate that when the regularization $b$ between entropy and partial matching is farther to $1$ (one of the two terms is more weighted, see Equation 
(2) in the main text), $\tildeET_{\lambda}^{0}$ is farther to $\ET_\lambda$.

\paragraph{$\bullet$ For functions $w_1, w_2$ with largest Lipschitz $a_1=b$.} We use the same setup as in Figure~~\ref{fg:Diff_KT_tET_b_ConstantW}, but with $a_1=b$. Figure~\ref{fg:Diff_KT_tET_bc} shows similar results as in Figure~~\ref{fg:Diff_KT_tET_b_ConstantW} for $a_1=0$. For the largest Lipchitz for functions $w_1, w_2$ (i.e., $a_1=b$), but for $b=a_0=1$, $\tildeET$ is almost identical to \KT, but they are different when when the regularization $b$ between entropy and partial matching is farther to $1$ (one of the two terms is more weighted, see Equation 
(2) in the main text). 

\subsection{Further Results w.r.t. $\alpha$}\label{supp:sec:results_wrt_alpha}

We illustrate further SVM results of $d_{\alpha}$ and $\widetilde{\ET}_{\lambda}^{\alpha}$ w.r.t. value of $\alpha$ in \texttt{TWITTER, RECIPE, CLASSIC, AMAZON} datasets in Figure~\ref{fg:DOC_alpha}, and in \texttt{Orbit, MPEG7} datasets in Figure~\ref{fg:TDA_alpha}. The value of $\alpha$ may affect performances of $d_\alpha$ and $\widetilde{\ET}_{\lambda}^{\alpha}$ in some datasets (e.g., \texttt{RECIPE, AMAZON} datasets for document classification, and \texttt{Orbit} dataset in TDA), but may not sensitive in some other datasets (e.g., \texttt{TWITTER, CLASSIC} datasets for document classification, and \texttt{MPEG7} dataset in TDA). Therefore, although $\alpha = 0$ gives $\widetilde{\ET}_{\lambda}^{\alpha}$ good property as in Proposition 3.9 in the main text (upper bound for $\ET_{\lambda}$), there is a possibility to choose suitable value for $\alpha$ (e.g., via cross validation) to improve performances of $d_{\alpha}$ and $\widetilde{\ET}_{\lambda}^{\alpha}$ for some certain datasets.

\begin{figure}[h]
\vspace{-8pt}
     \centering
     \begin{subfigure}[t]{0.67\textwidth}
    \centering
    \includegraphics[width=0.83\textwidth]{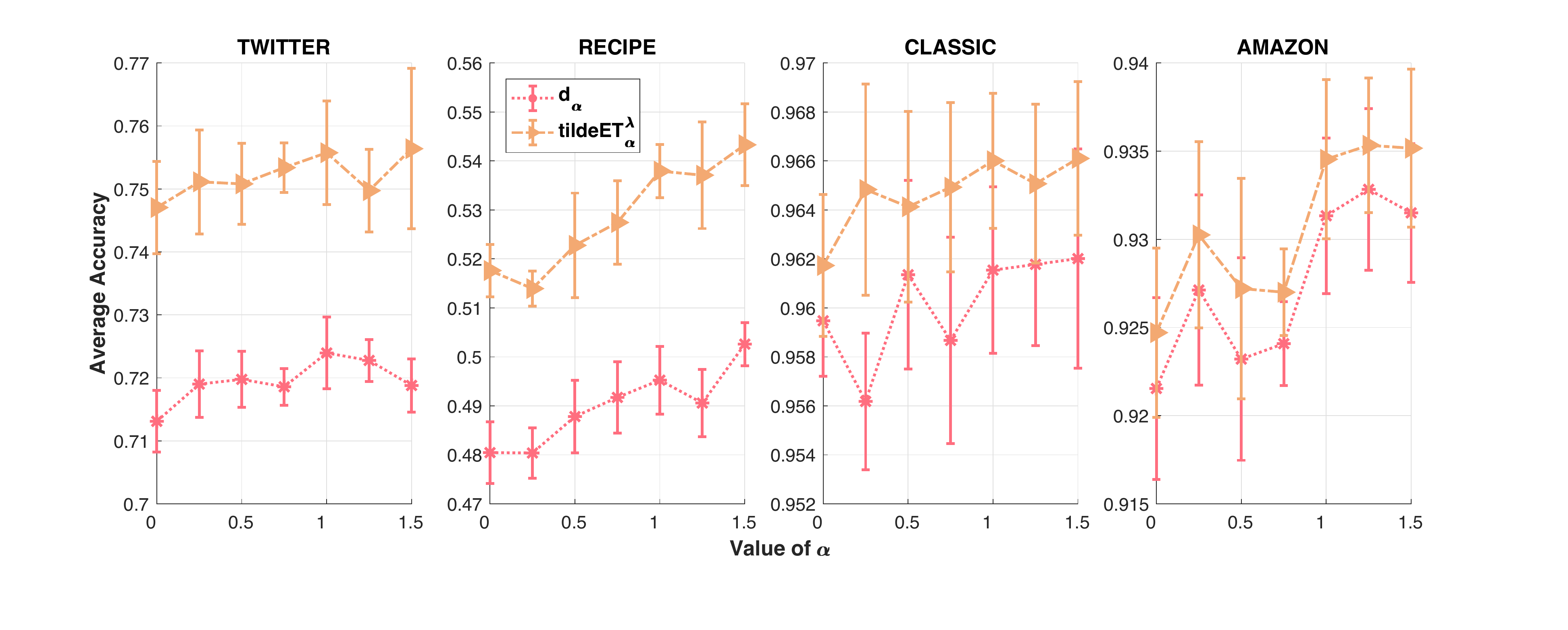}
\vspace{-4pt}        
  \caption{In \texttt{TWITTER, RECIPE, CLASSIC, AMAZON} datasets.}
  \label{fg:DOC_alpha}
    \end{subfigure} 
   \hfill
    \begin{subfigure}[t]{0.31\textwidth}
        \centering
   \includegraphics[width=\textwidth]{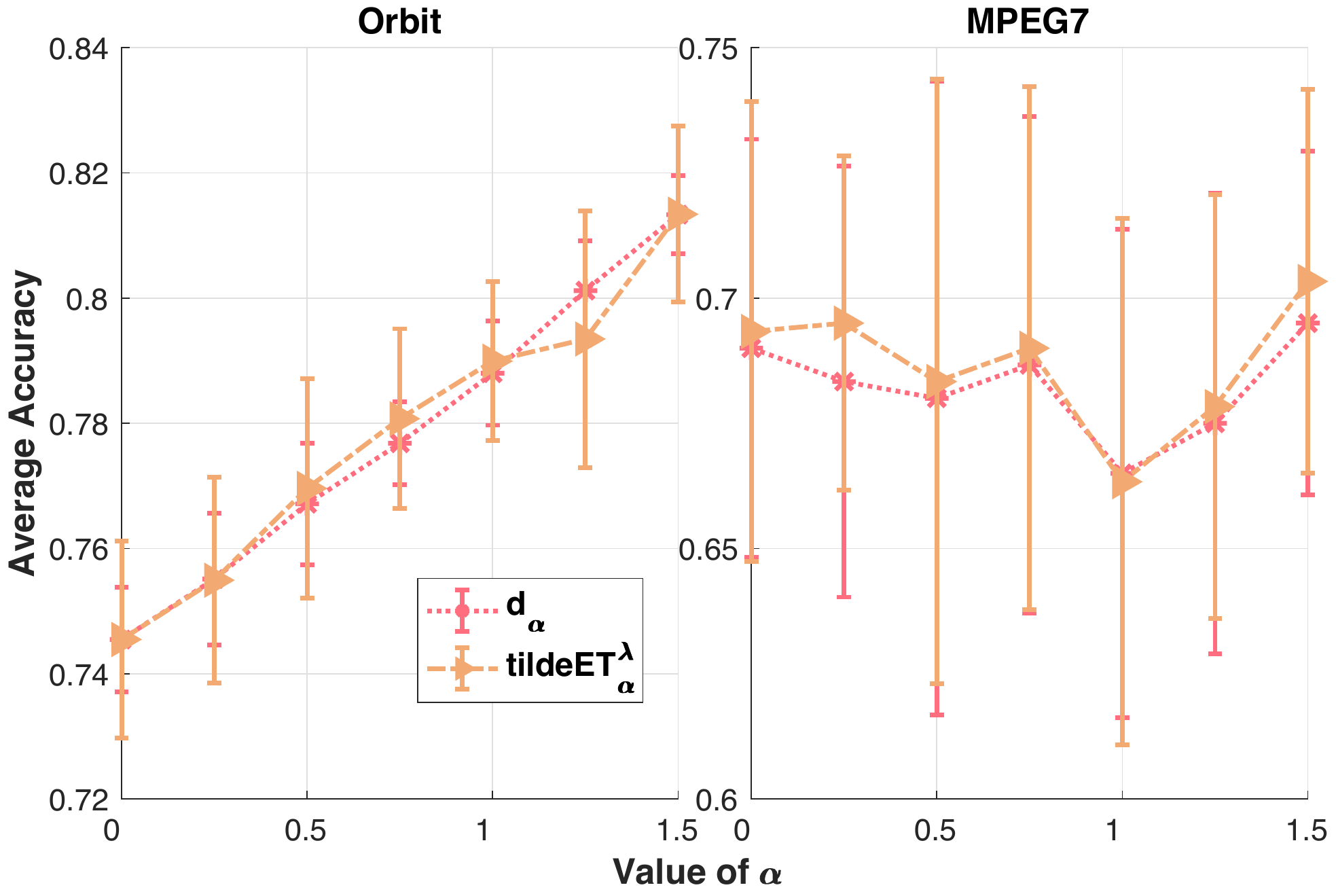}
 \vspace{-16pt}   
 \caption{In \texttt{Orbit, MPEG7} datasets.}
  \label{fg:TDA_alpha}
    \end{subfigure}
    
\vspace{-4pt}   
    \caption{SVM results of $d_{\alpha}$ and $\widetilde{\ET}_{\lambda}^{\alpha}$ w.r.t. value of $\alpha$ with 10 tree slices.}
 \vspace{-10pt}
\end{figure}


\subsection{Further Results w.r.t. the Number of (Tree) Slices}

Similar as Figure 6 in the main text, we illustrate further SVM results and time consumption for corresponding kernel matrices for document classification (e.g., \texttt{TWITTER, RECIPE, CLASSIC, AMAZON} datasets) and TDA (\texttt{Orbit, MPEG7} datasets in Figure~\ref{fg:DOC_slices} and Figure~\ref{fg:TDA_slices} respectively. For a trade-off between performances and time consumption, one can choose about $n_s = 10$ slices in applications.

\begin{figure}[h]
\vspace{-8pt}
     \centering
     \begin{subfigure}[t]{0.67\textwidth}
    \centering
    \includegraphics[width=0.83\textwidth]{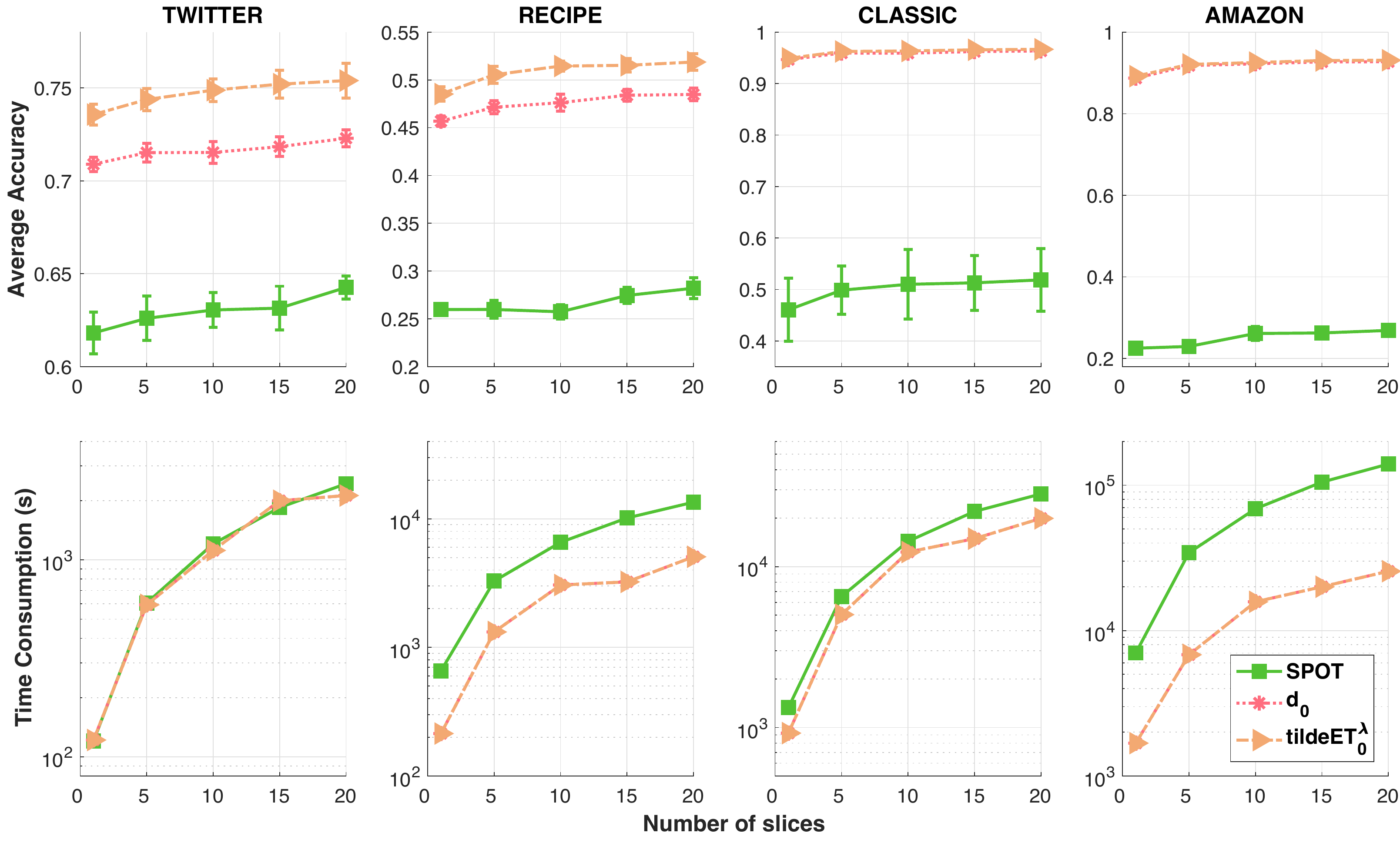}
\vspace{-4pt}        
  \caption{In \texttt{TWITTER, RECIPE, CLASSIC, AMAZON} datasets.}
  \label{fg:DOC_slices}
    \end{subfigure} 
   \hfill
    \begin{subfigure}[t]{0.31\textwidth}
        \centering
    \includegraphics[width=\textwidth]{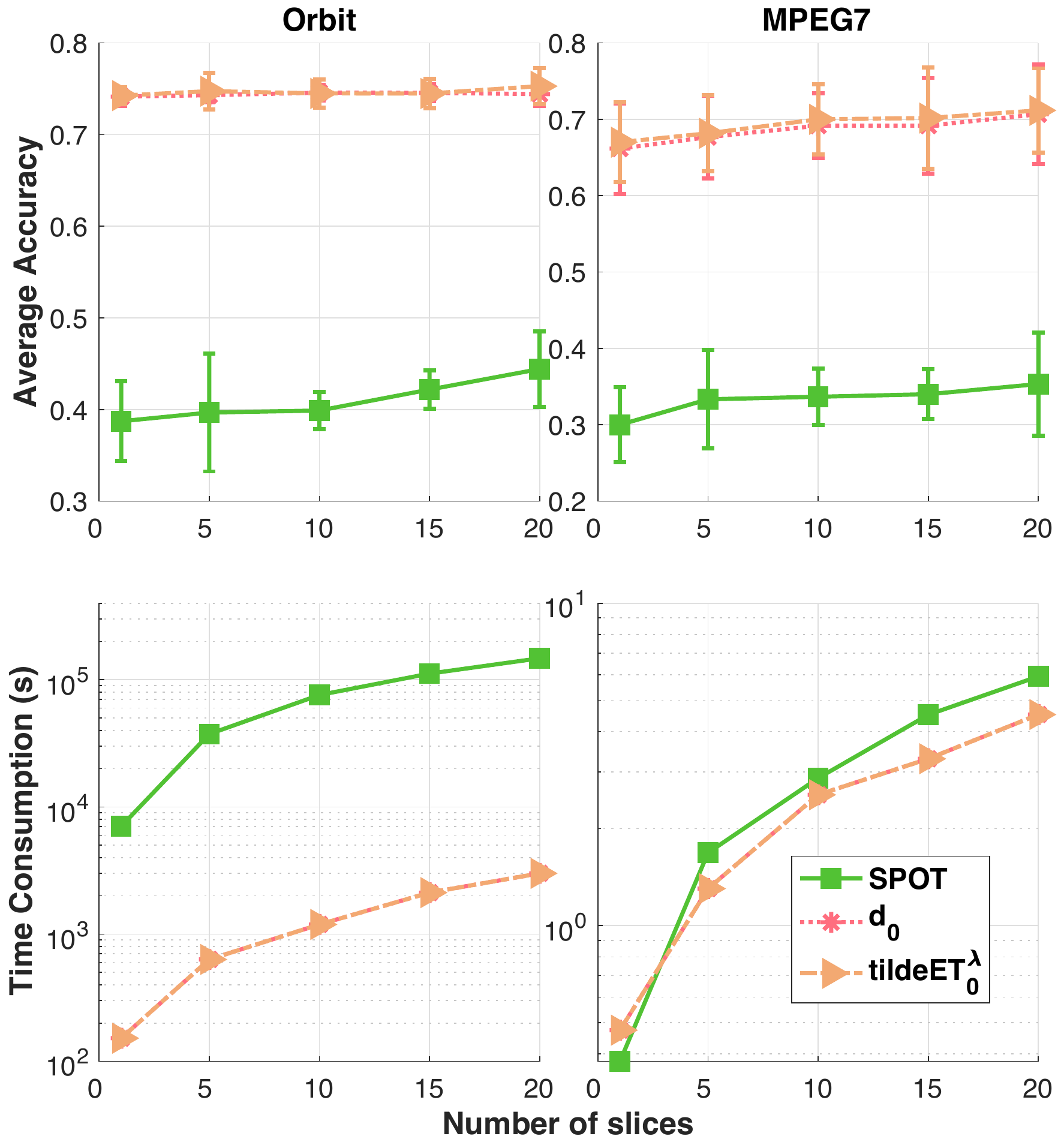}
 \vspace{-16pt}   
 \caption{In \texttt{Orbit, MPEG7} datasets.}
  \label{fg:TDA_slices}
    \end{subfigure}
    
\vspace{-4pt}   
    \caption{SVM results and time consumption for corresponding kernel matrices w.r.t. the number of (tree) slices.}
 \vspace{-10pt}
\end{figure}

\subsection{Further Results w.r.t. Parameters of Tree Metric Sampling}

\paragraph{Document classification.}
\begin{itemize}
\item In Figure~\ref{fg:TWITTER_TM_dAlpha}, Figure~\ref{fg:RECIPE_TM_dAlpha}, Figure~\ref{fg:CLASSIC_TM_dAlpha}, Figure~\ref{fg:AMAZON_TM_dAlpha}, we illustrate further SVM results and time consumption for corresponding kernel matrices of $d_0$ in \texttt{TWITTER, RECIPE, CLASSIC, AMAZON} datasets respectively w.r.t. different parameters for clustering-based tree metric sampling such as the predefined tree deepest level $H_{\Tt}$, and number of tree branches $\kappa$ which is the number of clusters in the farthest-point clustering.
\item In Figure~\ref{fg:TWITTER_TM_tildeET}, Figure~\ref{fg:RECIPE_TM_tildeET}, Figure~\ref{fg:CLASSIC_TM_tildeET}, Figure~\ref{fg:AMAZON_TM_tildeET}, we illustrate further SVM results and time consumption for corresponding kernel matrices of $\tildeETlambda$ in \texttt{TWITTER, RECIPE, CLASSIC, AMAZON} datasets respectively w.r.t. different parameters for clustering-based tree metric sampling such as the predefined tree deepest level $H_{\Tt}$, and number of tree branches $\kappa$ which is the number of clusters in the farthest-point clustering.
\end{itemize}


\begin{figure}[h]
     \centering
     \begin{subfigure}[t]{0.48\textwidth}
    \centering
  \includegraphics[width=\textwidth]{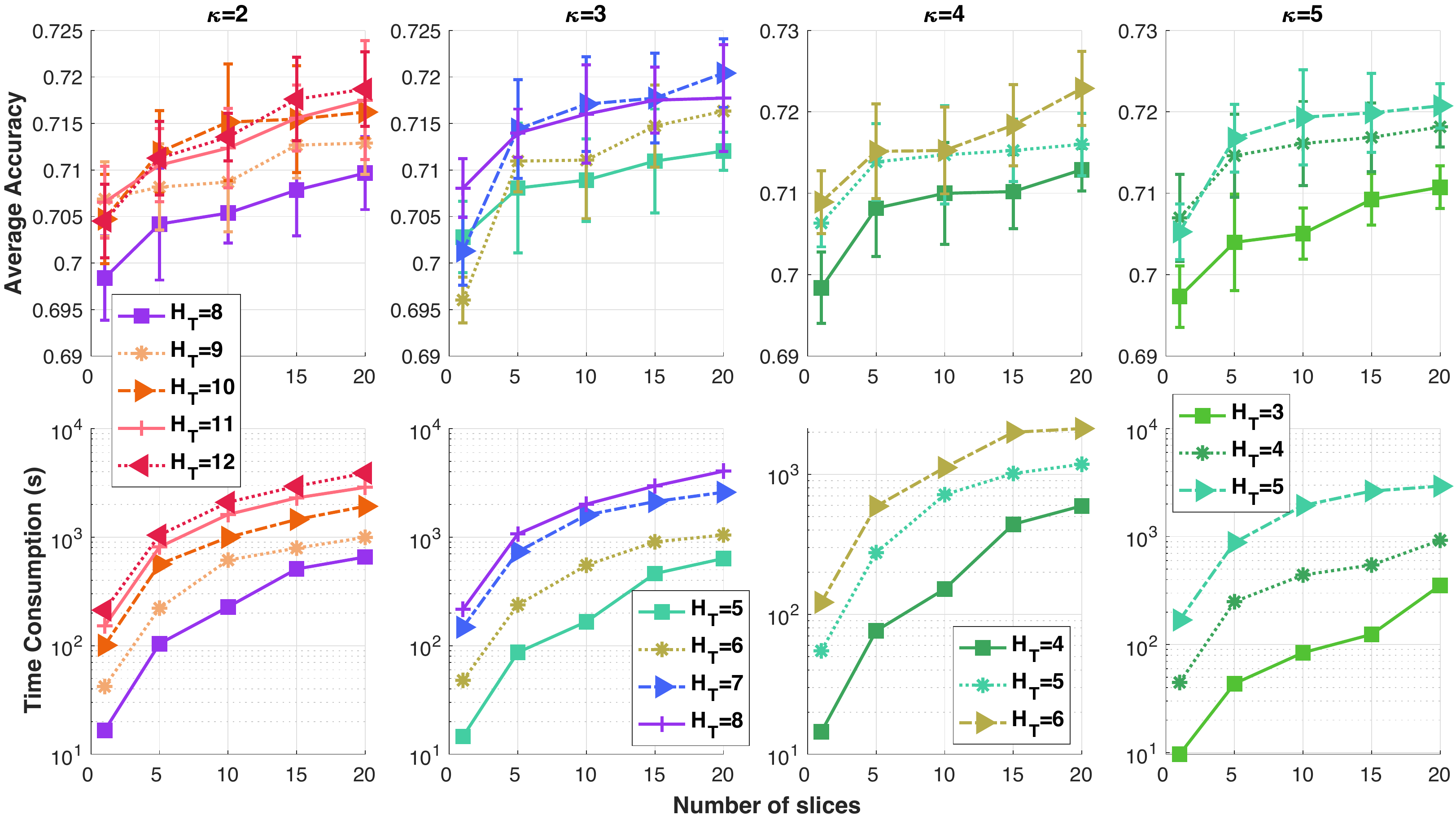}
\vspace{-12pt}        
  \caption{In \texttt{TWITTER} dataset.}
  \label{fg:TWITTER_TM_dAlpha}
    \end{subfigure} 
   \hfill
    \begin{subfigure}[t]{0.48\textwidth}
        \centering
  \includegraphics[width=\textwidth]{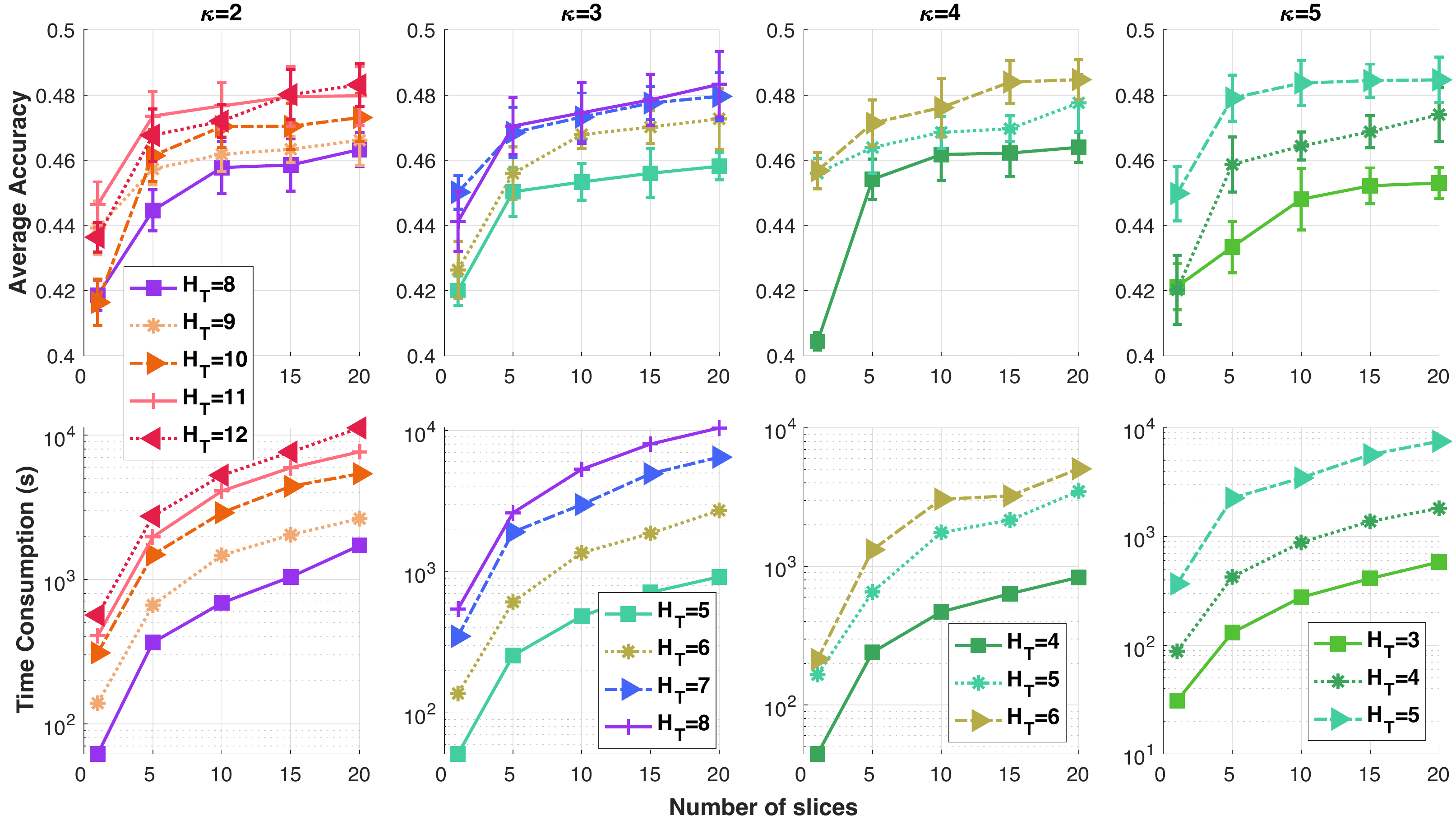}
 \vspace{-12pt}   
 \caption{In \texttt{RECIPE} dataset.}
  \label{fg:RECIPE_TM_dAlpha}
    \end{subfigure}
    
    \vspace{2pt}
    
         \begin{subfigure}[t]{0.48\textwidth}
    \centering
  \includegraphics[width=\textwidth]{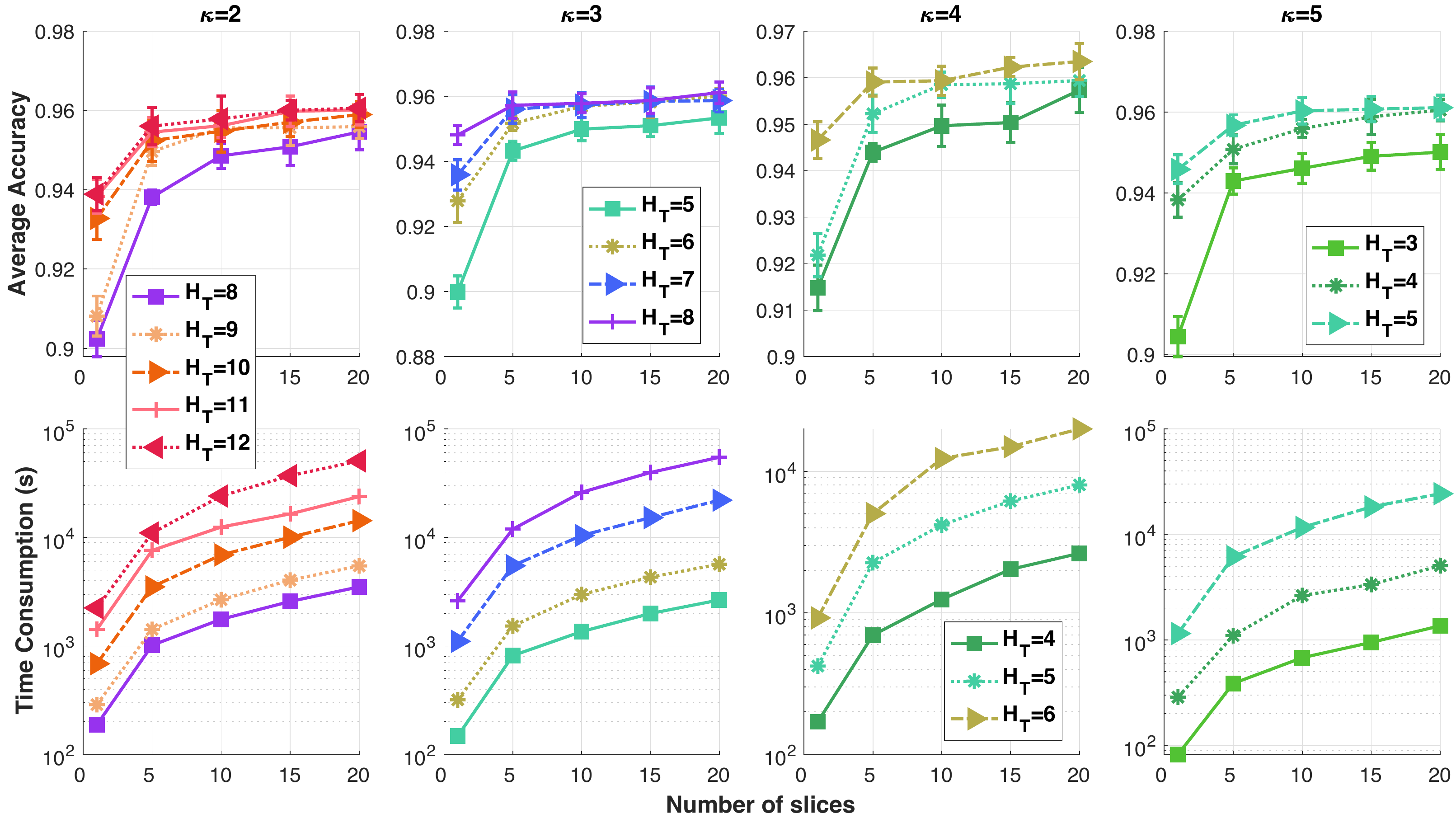}
\vspace{-12pt}        
  \caption{In \texttt{CLASSIC} dataset.}
  \label{fg:CLASSIC_TM_dAlpha}
    \end{subfigure} 
   \hfill
    \begin{subfigure}[t]{0.48\textwidth}
        \centering
  \includegraphics[width=\textwidth]{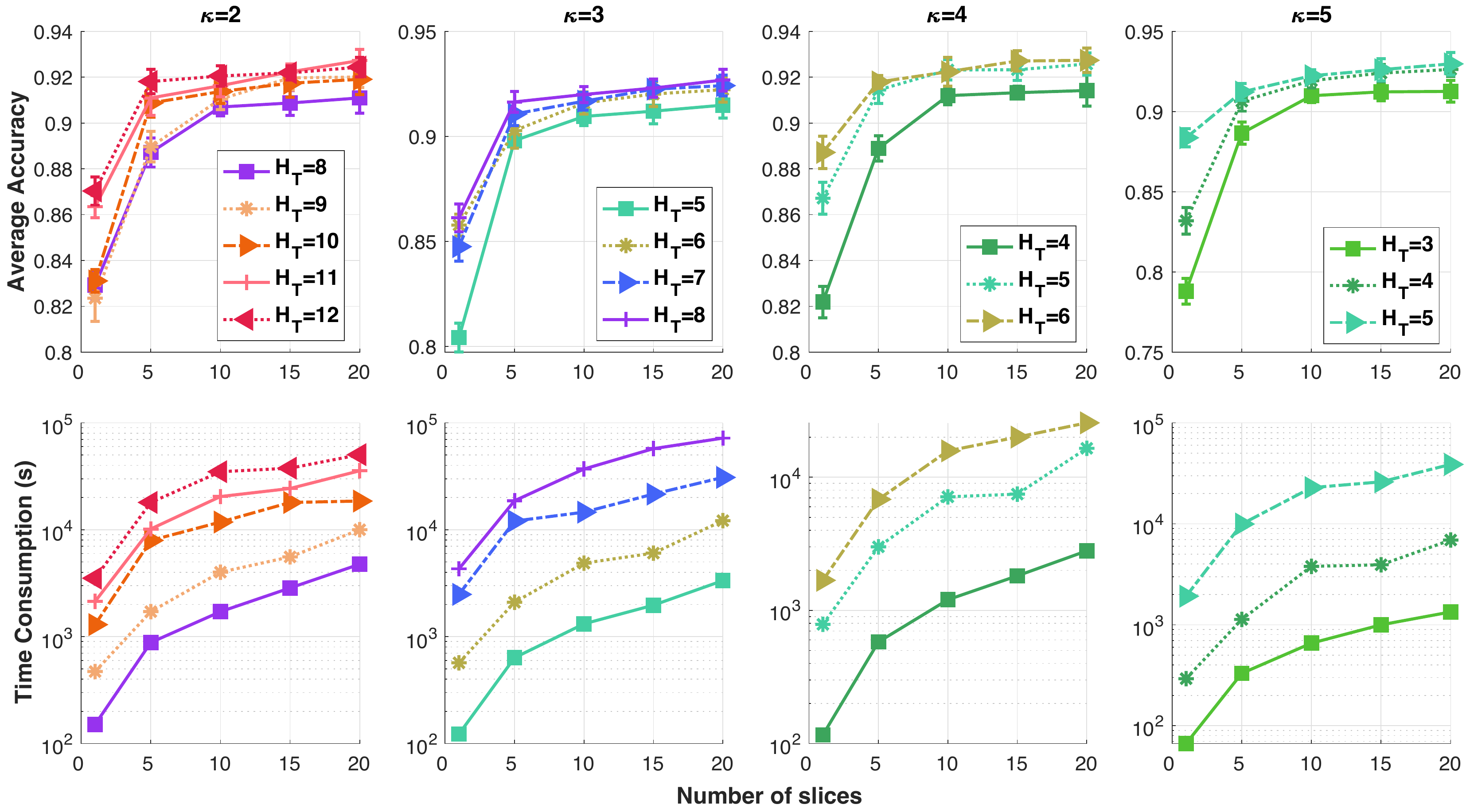}
 \vspace{-12pt}   
 \caption{In \texttt{AMAZON} dataset.}
  \label{fg:AMAZON_TM_dAlpha}
    \end{subfigure}
    
\vspace{-4pt}   
    \caption{SVM results and time consumption for corresponding kernel matrices of $d_0$ w.r.t. different parameters for clustering-based tree metric sampling (predefined tree deepest level $H_{\Tt}$, and number of tree branches $\kappa$---the number of clusters in the farthest-point clustering.).}
 \vspace{-10pt}
\end{figure}

\begin{figure}[h]
     \centering
     \begin{subfigure}[t]{0.48\textwidth}
    \centering
  \includegraphics[width=\textwidth]{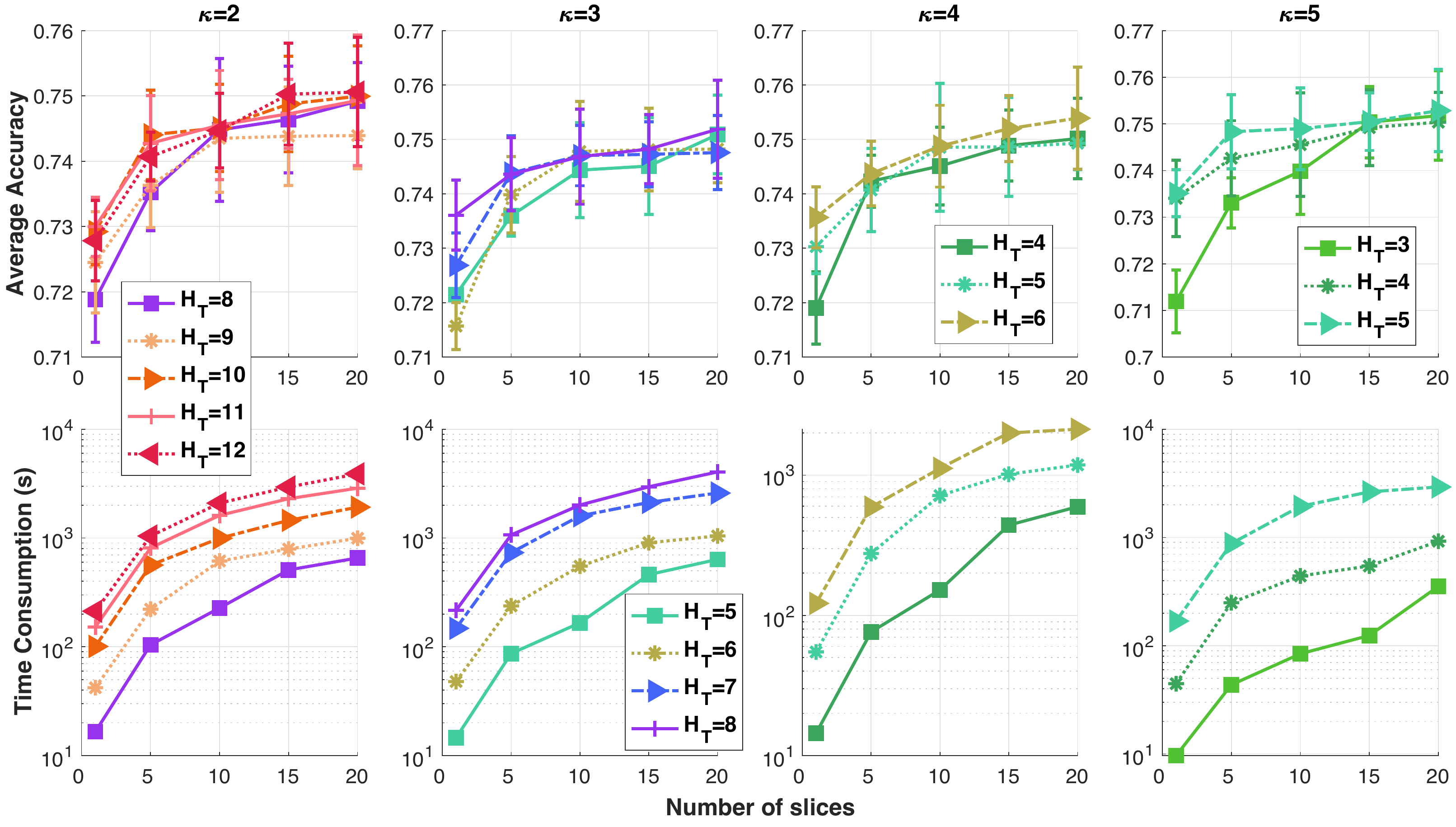}
\vspace{-12pt}        
  \caption{In \texttt{TWITTER} dataset.}
  \label{fg:TWITTER_TM_tildeET}
    \end{subfigure} 
   \hfill
    \begin{subfigure}[t]{0.48\textwidth}
        \centering
  \includegraphics[width=\textwidth]{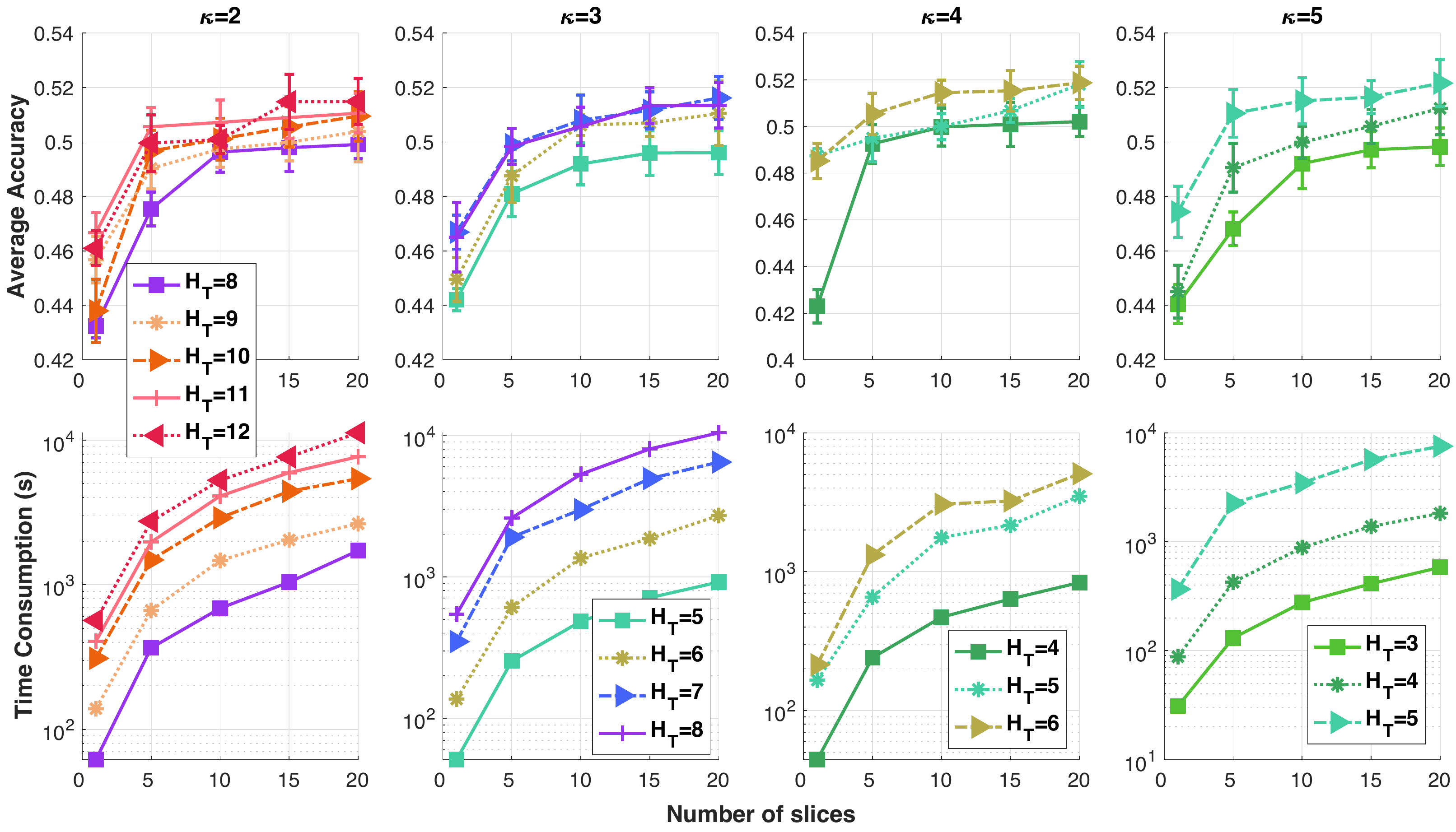}
 \vspace{-12pt}   
 \caption{In \texttt{RECIPE} dataset.}
  \label{fg:RECIPE_TM_tildeET}
    \end{subfigure}
    
    \vspace{2pt}
    
         \begin{subfigure}[t]{0.48\textwidth}
    \centering
  \includegraphics[width=\textwidth]{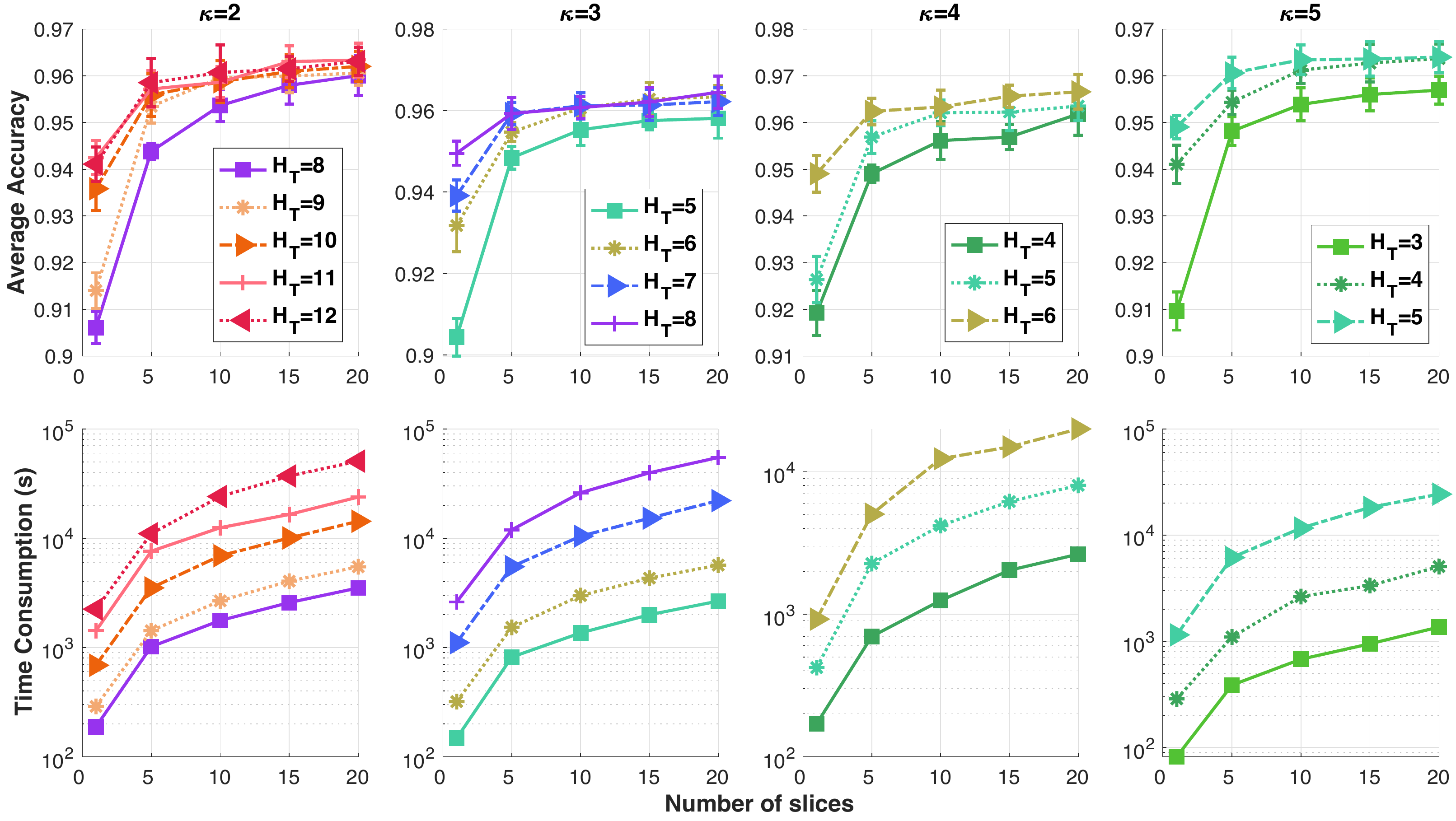}
\vspace{-12pt}        
  \caption{In \texttt{CLASSIC} dataset.}
  \label{fg:CLASSIC_TM_tildeET}
    \end{subfigure} 
   \hfill
    \begin{subfigure}[t]{0.48\textwidth}
        \centering
  \includegraphics[width=\textwidth]{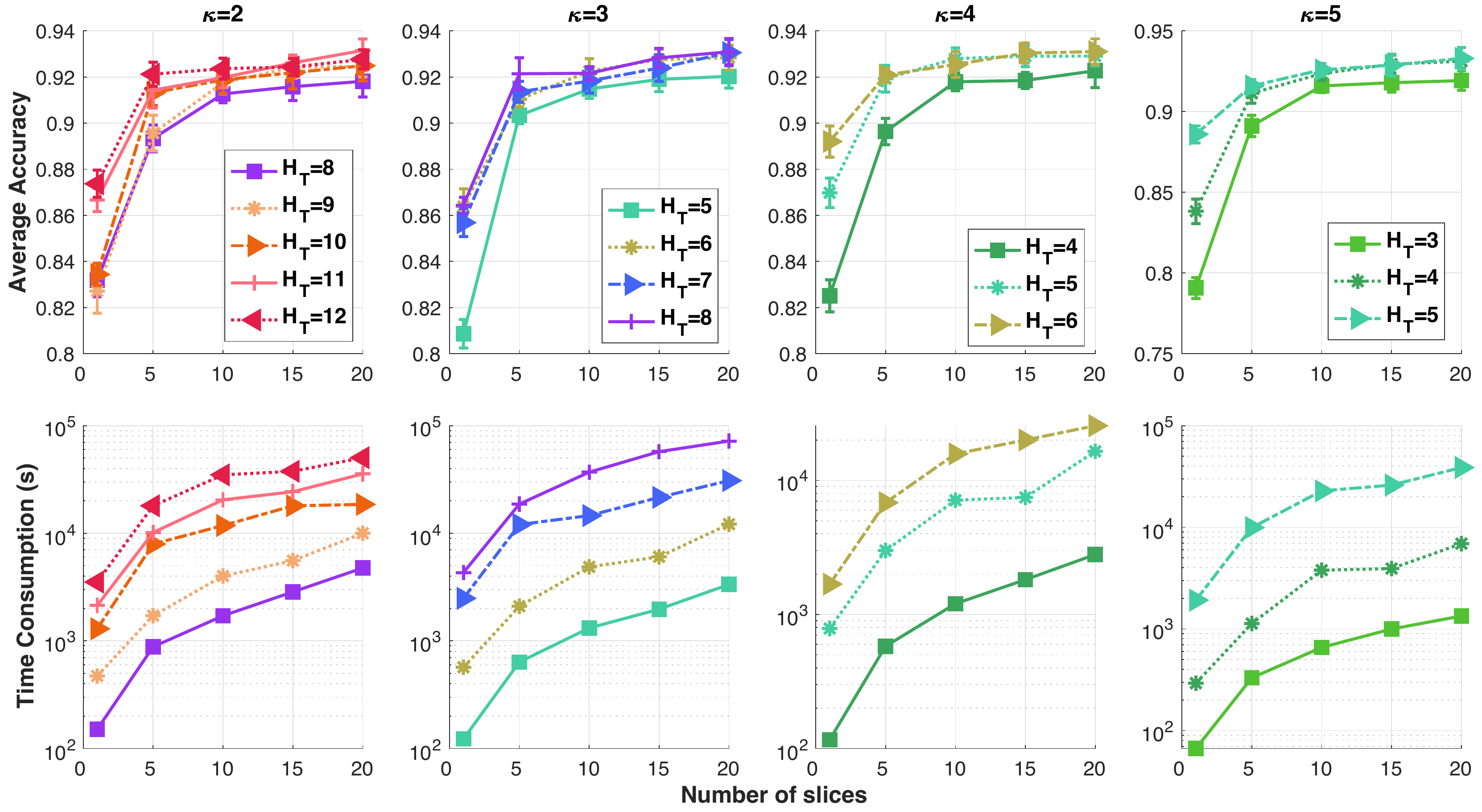}
 \vspace{-12pt}   
 \caption{In \texttt{AMAZON} dataset.}
  \label{fg:AMAZON_TM_tildeET}
    \end{subfigure}
    
\vspace{-4pt}   
    \caption{SVM results and time consumption for corresponding kernel matrices of $\tildeETlambda$ w.r.t. different parameters for clustering-based tree metric sampling (predefined tree deepest level $H_{\Tt}$, and number of tree branches $\kappa$---the number of clusters in the farthest-point clustering.).}
 \vspace{-10pt}
\end{figure}


\paragraph{TDA.}
\begin{itemize}
\item In Figure~\ref{fg:TDA_TM_dAlpha}, we illustrate further SVM results and time consumption for corresponding kernel matrices of $d_0$ in \texttt{Orbit, MPEG7} datasets w.r.t. different parameters for partition-based tree metric sampling such as the predefined tree deepest level $H_{\Tt}$.

\item In Figure~\ref{fg:TDA_TM_tildeET}, we illustrate further SVM results and time consumption for corresponding kernel matrices of $\tildeETlambda$ in \texttt{Orbit, MPEG7} datasets w.r.t. different parameters for partition-based tree metric sampling such as the predefined tree deepest level $H_{\Tt}$.

\end{itemize}

\begin{figure}[h]
     \centering
     \begin{subfigure}[t]{0.48\textwidth}
    \centering
    \includegraphics[width=\textwidth]{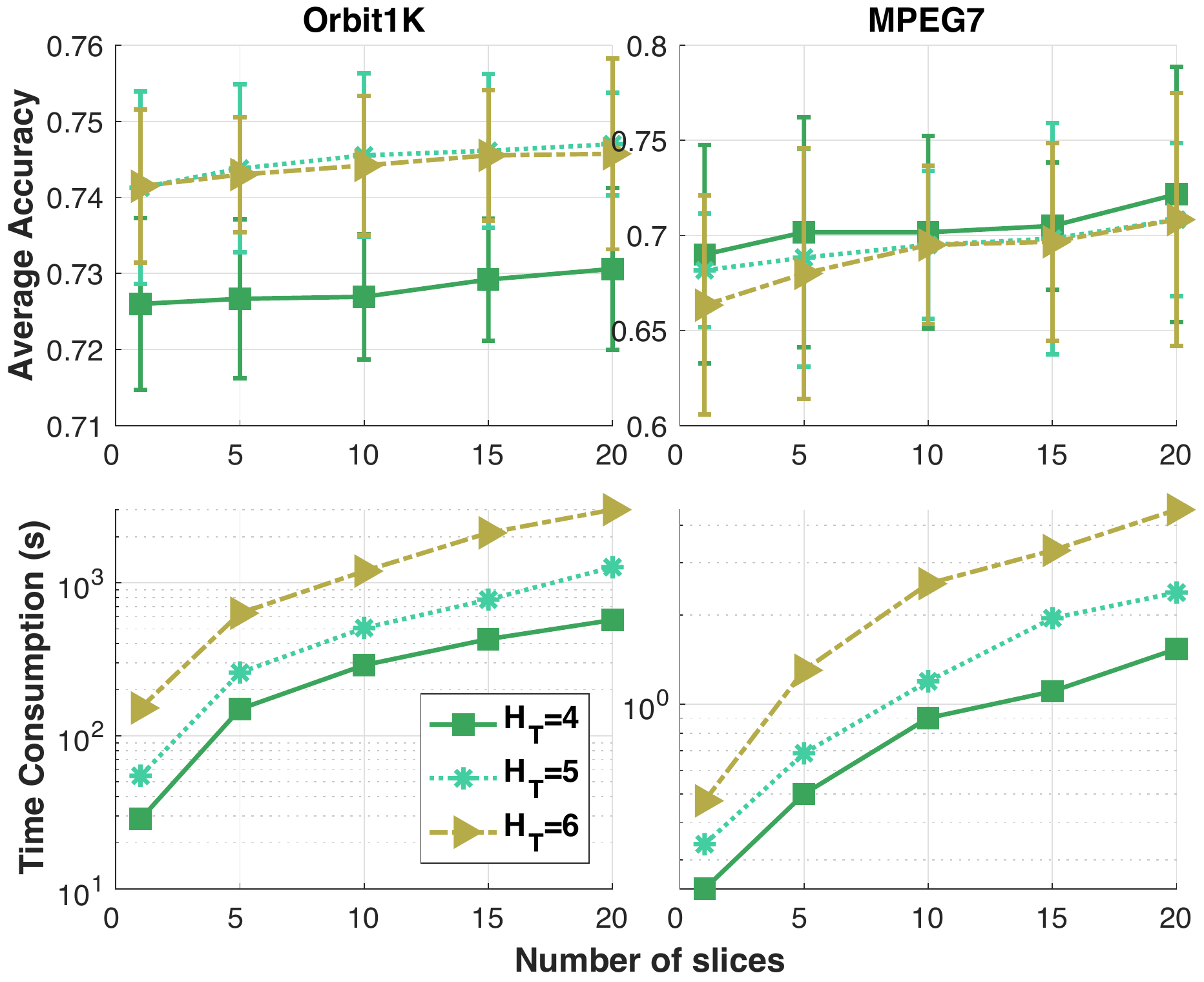}
\vspace{-12pt}        
  \caption{For $d_0$.}
  \label{fg:TDA_TM_dAlpha}
    \end{subfigure} 
   \hfill
    \begin{subfigure}[t]{0.48\textwidth}
        \centering
    \includegraphics[width=\textwidth]{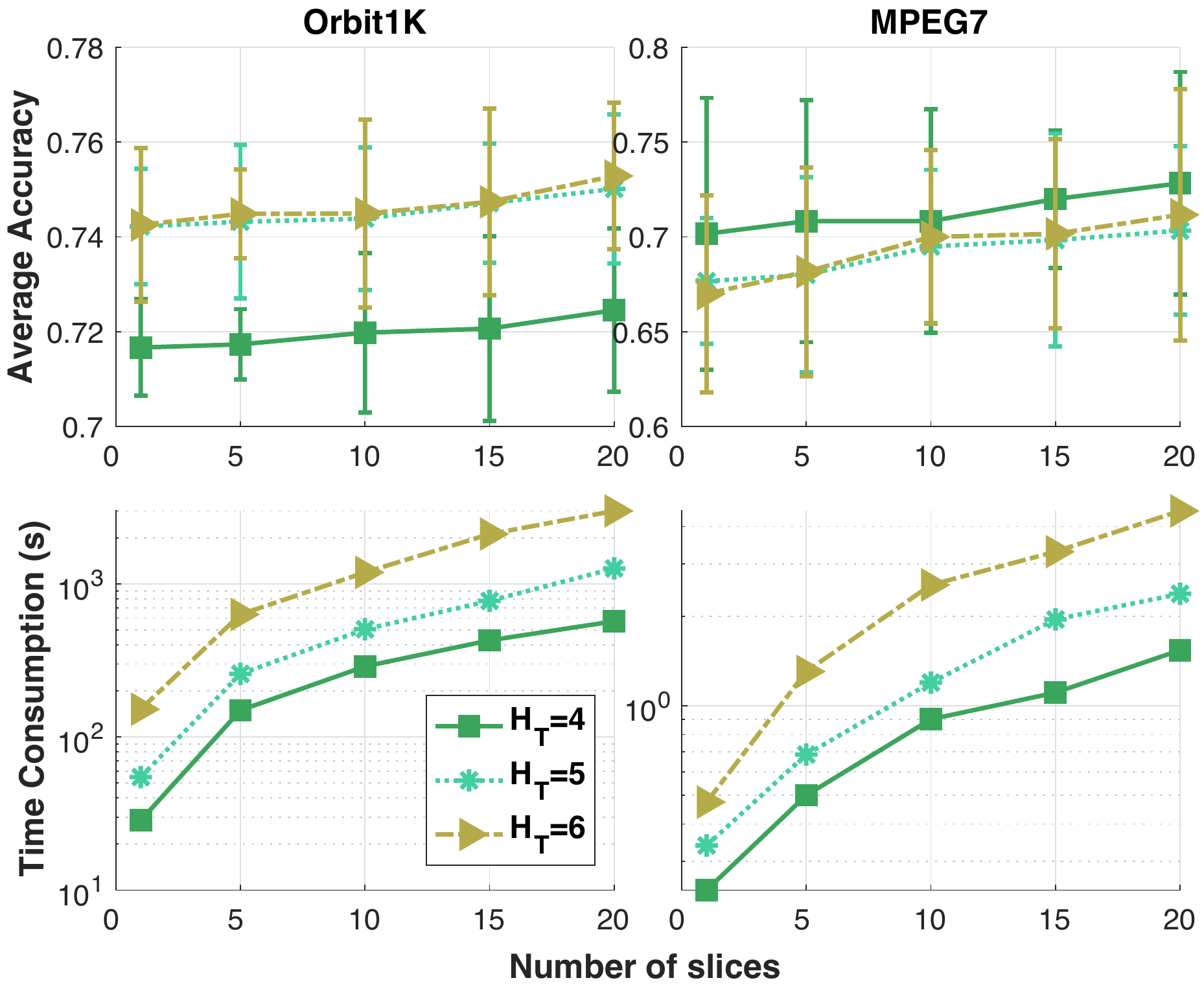}
 \vspace{-12pt}   
 \caption{For $\tildeETlambda$.}
  \label{fg:TDA_TM_tildeET}
    \end{subfigure}
\vspace{-4pt}   
    \caption{SVM results and time consumption for corresponding kernel matrices in \texttt{Orbit, MPEG7} datasets w.r.t. different parameters for partition-based tree metric sampling (predefined tree deepest level $H_{\Tt}$).}
 \vspace{-10pt}
\end{figure}

Similar as in \cite{LYFC} (tree metric sampling for tree-sliced-Wasserstein in applications), we also observed that the default parameters (e.g., the predefined deepest level $H_{\Tt}=6$, and the tree branches $\kappa=4$---the number of clusters in the farthest-point clustering) is a reasonable choice to trade-off about performances and time consumption. With these default parameters, sampled trees contains about $4000$ nodes.

\section{Further Details and Discussions}\label{supp:sec:details_discussions}

In this section, we give further details about experiments, some brief reviews about important aspects used in our work and discuss other relations to other work.

\subsection{More Details about Experiments}

In this section, we give further details about softwares, datasets and experimental setups.

\paragraph{For softwares.}

\begin{itemize}

\item For experiments in topological data analysis, we used DIPHA toolbox, available at \url{https://github.com/DIPHA/dipha}, to extract persistence diagrams.

\item For the standard complete optimal transport (OT) problem (e.g., KT in our work which we used to compute the corresponding $\ET_\lambda$), we used a fast OT implementation, available at \url{https://github.com/gpeyre/2017-ot-beginners/tree/master/matlab/mexEMD}. It is about 4 times faster than the popular mex-file with Rubner's implementation in C, available at \url{http://robotics.stanford.edu/~rubner/emd/default.htm}.

\item For tree metric sampling, we used the MATLAB implementation, available at \url{https://github.com/lttam/TreeWasserstein}. We directly used this code for clustering-based tree metric sampling, and adapted it into its special case partition-based tree metric sampling.

\item For Sinkhorn-based approach for unbalanced OT (Sinkhorn-UOT), we used the MATLAB implementation, available at \url{https://github.com/gpeyre/2017-MCOM-unbalanced-ot}.

\item For sliced partial optimal transport (SPOT), we adapt the C++ implementation, available at \url{https://github.com/nbonneel/spot}, into MATLAB.

\end{itemize}

\paragraph{For datasets.}

\begin{itemize}

\item For document datasets (e.g., \texttt{TWITTER, RECIPE, CLASSIC, AMAZON}), they are available at \url{https://github.com/mkusner/wmd}.

\item For \texttt{Orbit} dataset, we follow the procedure, detailed in \cite{adams2017persistence} to generate the dataset.

\item For \texttt{MPEG7} dataset, it is available at \url{http://www.imageprocessingplace.com/downloads_V3/root_downloads/image_databases/MPEG7_CE-Shape-1_Part_B.zip}, then we follow \cite{LYFC} to extract the 10-class subset of the dataset.

\item For granular packing system and SiO$_2$ datasets, one may access to them by contacting the corresponding authors.

\end{itemize}

\paragraph{For experimental setups.} We further clarify some details about experimental setup.

As mentioned in the main text, for $d_{0}$ and $\tildeETlambda$, we choose the weight functions for $w_1, w_2$ as
\[
	w_1(x) = w_2(x) = a_1 d_{\Tt}(r, x) + a_0,
\] 
where $r$ is the root of tree $\Tt$, we set $\lambda = b = 1$, $a_0=1$. Following \S 5.1 in the main text, we set $a_1 = b = 1$. As in \S 3.2 in the main text, $\alpha \in \left[0, \frac{1}{2}\left(b\lambda + w_1(r) + w_2(r)\right)\right]$. Thus, $\alpha \in [0, \frac{3}{2}]$ in our experiments (see more experiment results with different values of $\alpha$ in \S \ref{supp:sec:results_wrt_alpha}). We used $n_s = 10$ (tree) slices for $d_{0}$, $\tildeETlambda$ and SPOT. For tree metric sampling, we used the default hyperparameters, the predefined tree deepest level $H_{\Tt}=6$, and the tree branches $\kappa=4$---the number of clusters used in the farthest-point clustering.

\subsection{Some Brief Reviews}\label{supp:sec:reviews}

In this section, we give some brief reviews (or more referred details) about some important aspects in our work.

\paragraph{For kernels.} We review some important definitions (e.g., positive/negative definite kernels \cite{Berg84}) and theorems (e.g., Theorem 3.2.2 in \cite{Berg84}) about kernels used in our work.

\paragraph{$\bullet$ Positive definite kernels \cite[p.66--67]{Berg84}.} A kernel function $k: \Xx \times \Xx \rightarrow \RR$ is  positive definite if $\forall n \in \NN^{*}, \forall x_1, x_2, ..., x_n \in \Xx$, we have 
\[
\sum_{i, j} c_i c_j k(x_i, x_j) \ge 0, \qquad \forall c_i \in \RR.
\]

\paragraph{$\bullet$ Negative definite kernels \cite[p.66--67]{Berg84}.} A kernel function $k: \Xx \times \Xx \rightarrow \RR$ is  negative definite if $\forall n \ge 2, \forall x_1, x_2, ..., x_n \in \Xx$, we have 
\[
\sum_{i, j} c_i c_j k(x_i, x_j) \le 0, \qquad \forall c_i \in \RR \,\, \text{s.t.} \, \sum_i c_i = 0.
\]

\paragraph{$\bullet$ Theorem 3.2.2 in \cite[p.74]{Berg84} for kernels.}
If $\kappa$ is a \textit{negative definite} kernel, then $\forall t > 0$, kernel 
\[
k_{t}(x, z) := \exp{\left(- t \kappa(x, z)\right)}
\]
is positive definite.

\paragraph{For tree metric sampling.} The tree metric sampling is described in details in \cite{LYFC}[S4]. Le et al. \cite{LYFC} also reviewed the details of the farthest-point clustering in \S4.2 in the supplementary, discussed about thee quantization/clustering sensitivity problems of tree metric sampling in \S5 in the supplementary. 

\paragraph{For persistence diagrams and related mathematical definitions in topological data analysis.} We refer the reader into \cite[\S2]{kusano2017kernel} for a review about mathematical framework for persistence diagrams (e.g., persistence diagrams, filtrations, persistent homology).

\subsection{Discussions about Other Relations to Other Work}

We note that ultrametric (i.e., non-Archimedean metric, or isosceles metric) and its special case---binary metric are tree metrics \cite{LYFC}. Additionally, a metric for points in a line (e.g., in 1-dimensional projections for supports in SPOT, or SW), or in 1-dimensional manifold (e.g., in 1-dimensional manifold projections for supports in generalized SW \cite{kolouri2019generalized}) is also a tree metric since we have a corresponding tree as a chain of these points.

We also list some other studies related to OT problem with tree metrics as follows: (i) Kloeckner \cite{kloeckner2015geometric} derived geometric properties of OT space for measures on an ultrametric space, (ii) Sommerfeld and Munk \cite{Sommerfeld2016InferenceFE} studied statistical inferences for OT on finite spaces including tree metrics, (iii) tree-Wasserstein barycenter \cite{le2019wb}, (iv) alignment problems for probability measures having supports in different spaces (i.e., fast tree variants for Gromov-Wasserstein) \cite{le2021fba}, and ultrametric Gromov-Wasserstein \cite{memoli2021ultrametric}.

We note that we consider the \textit{\textbf{discrete}} measures in our work (e.g., empirical measures). The closed-form formulation of our regularized entropy partial transport (EPT) $\widetilde{\ET}_{\lambda}^{\alpha}$ in Equation (8) in the main text is for general discrete nonnegative measures having different masses. To our knowledge, the proposed regularized EPT (i.e., $\widetilde{\ET}_{\lambda}^{\alpha}$ in Equation (8) in the main text) is the first approach that yields a closed-form solution among available variants of unbalanced OT for discrete measures. In the context of unbalanced OT for \textit{\textbf{continuous}} measures (e.g., probability measures are scaled by positive constants), Janati et al. \cite{janati2020entropic} recently showed that entropic optimal transport for unbalanced Gaussian measures (i.e., Gaussian measures are scaled by different positive constants) has a closed-form solution.

\end{document}